\newtheorem{theorem}{Theorem}[section]
\newtheorem{corollary}{Corollary}[section]
\newtheorem{lemma}[theorem]{Lemma}
\newtheorem{definition}[theorem]{Definition}
\newcommand{\pref}[1]{\prettyref{#1}}
\newcommand{\savehyperref}[2]{\texorpdfstring{\hyperref[#1]{#2}}{#2}}
\newmdtheoremenv{test}{Test}
\definecolor{DarkRed}{rgb}{0.75,0,0}
\definecolor{DarkGreen}{rgb}{0,0.5,0}
\definecolor{DarkPurple}{rgb}{0.5,0,0.5}
\definecolor{DarkBlue}{rgb}{0,0,0.7}
\newcommand{\reg}{\mathrm{reg}}
\newcommand{\Reg}{\mathrm{Reg}}
\newcommand{\indicator}[1]{\mathbf{1}\{#1\}}
\begin{document}

\twocolumn[

\aistatstitle{Data-Driven Online Model Selection With Regret Guarantees}

\aistatsauthor{ Aldo Pacchiano \And Christoph Dann \And Claudio Gentile}

\aistatsaddress{ Boston University \\ Broad Institute of MIT and Harvard \And  Google Research \And Google Research} ]

\begin{abstract}
\vspace{-0.15in}
We consider model selection for sequential decision making in stochastic environments with bandit feedback, where a meta-learner has at its disposal a pool of base learners, and decides on the fly which action to take based on the policies recommended by each base learner. Model selection is performed by regret balancing but, unlike the recent literature on this subject, we do not assume any prior knowledge about the base learners like candidate regret guarantees; instead, we uncover these quantities in a data-driven manner. The meta-learner is therefore able to leverage the {\em realized} regret incurred by each base learner for the learning environment at hand (as opposed to the {\em expected} regret), and single out the best such regret. 
We design two model selection algorithms operating with this more ambitious notion of regret and, besides proving model selection guarantees via regret balancing, we experimentally demonstrate the compelling practical benefits of dealing with actual regrets instead of candidate regret bounds.
\end{abstract}

\vspace{-0.1in}
\section{INTRODUCTION}\label{sec:intro}
\vspace{-0.1in}
In online model selection for sequential decision making, the learner has access to a set of base learners and the goal is to adapt during learning to the best base learner that is the most suitable for the current environment. The set of base learners typically comes from instantiating different modelling assumptions or hyper-parameter choices, e.g., complexity of the reward model or the $\epsilon$-parameter in $\epsilon$-greedy. Which choice, and therefore which base learner, works best is highly dependent on the problem instance at hand, so that good online model selection solutions are important for robust sequential decision making. This has motivated an extensive study of model selection questions \citep[e.g.,][ and others cited below]{agarwal2017corralling,abbasi2020regret,ghosh2020problem,chatterji2020osom, bibaut2020rate,foster2020adapting,lee2020online, wei2022model} in bandit and reinforcement learning problems.
While some of these works have developed custom solutions for specific model selection settings, for instance, selecting among a nested set of linear policy classes in contextual bandits  \citep[e.g.][]{foster2019model}, the relevant literature also provides several general purpose approaches that work in a wide range of settings. Among the most prominent ones are FTRL-based (follow-the-regularized-leader) algorithms, including EXP4 \citep{odalric2011adaptive}, Corral \citep{agarwal2017corralling, pacchiano2020model} and Tsallis-INF \citep{arora2020corralling}, as well as algorithms based on regret balancing \citep{abbasi2020regret,pacchiano2020regret, pmlr-v139-cutkosky21a, pacchiano2022best}.

These methods usually come with theoretical guarantees of the following form: the \emph{expected regret} (or \emph{high-probability regret}) of the model selection algorithm is not much worse than the expected regret (or high probability regret) of the best base learner. Such results are reasonable and known to be unimprovable in the worst-case \citep{marinov2021pareto}. Yet, it is possible for model selection to achieve expected regret that is systematically smaller than that of any base learner.
This may seem surprising at first, but it can be explained through an example when considering the large variability across individual runs of each base learner on the same environment.

The situation is illustrated in \pref{fig:expected_regret_motivation}. On the left, we plot the cumulative expected regret of two base learners, along with the corresponding behavior of one of our model selection algorithms (ED$^2$RB -- see \pref{sec:ED2RB} below) run on top of them. On the right, we unpack the cumulative expected regret curve of one of the two base learners from the left plot, and display ten independent runs of this base learner on the same environment, together with the resulting expected regret curve (first 1000 rounds only).
Since the model selection algorithm has access to two base learners simultaneously, it can leverage a good run of either of two, and thereby achieve a good run more likely than any base learner individually, leading to overall smaller expected regret.

Such high variability in performance across individual runs of a base learner is indeed fairly common in model selection, for instance when base learners correspond to different hyper-parameters that control the explore-exploit trade-off. For a hyper-parameter setting that explores too little for the given environment, the base learner becomes unreliable and either is lucky and converges quickly to the optimal solution or unlucky and gets stuck in a suboptimal one.
This phenomenon is a key motivation for our work. Instead of model selection methods that merely compete with the expected regret of any base learner, we design model selection solutions that compete with the regret {\em realizations} of any base learner, and have (data-dependent) theoretical guarantees that validate this ability.

\begin{figure*}
    \centering
    \includegraphics[width=0.8\textwidth]{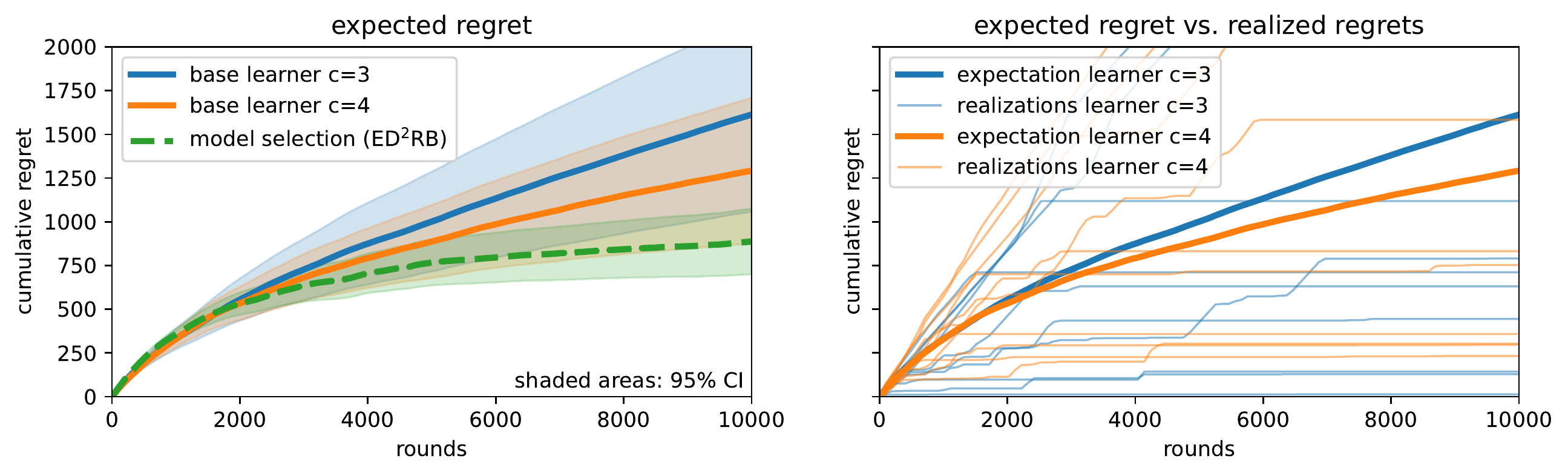}
    \vspace{-0.2in}
    \caption{{\bf Left:} Expected regret of two base learners (UCB on MAB with confidence scaling $c$ controlling explore-exploit trade-off) and a model selection algorithm on top of them. The model selection algorithm has smaller expected regret than any base learner. {\bf Right:} Expected regret and individual regret realizations (independent sample runs) of base learners. The base learners have highly variable performance which model selection can capitalize on. Detailed setup in \pref{app:earlyfigdetails}.}
    \label{fig:expected_regret_motivation}
    \vspace{-0.1in}
\end{figure*}

While the analysis of FTRL-based model selection algorithms naturally lends itself to work with expected regret \citep[e.g.][]{agarwal2017corralling}, the existing guarantees for regret balancing work with realized regret of base learners \citep[e.g.][]{pacchiano2020regret, pmlr-v139-cutkosky21a}. Concretely, regret balancing requires each learner to be associated with a {\em candidate} regret bound, and the model selection algorithm competes with the regret bound of the best among the well-specified learner, those learners whose regret realization is below their candidate bound. Setting a-priori tight candidate regret bounds for base learners is a main limitation for existing regret balancing methods, as the resolution of these bounds is often the one provided by a (typically coarse) theoretical analysis.
As suggested in earlier work, we can create several copies of each base learner with different candidate bounds, but we find this not to perform well in practice due to the high number of resulting base learners. Another point of criticism for existing regret balancing methods is that, up to deactivation of base learners, these methods do not adapt to observations, since their choice among active base learners is determined solely by the candidate regret bounds, which are set a-priori. 

In this work, we address both limitations, and propose two new regret balancing algorithms for model selection with bandit feedback that do not require knowing candidate regret bounds. Instead, the algorithms determine the right regret bounds sequentially in a data-driven manner, allowing them to adapt to the regret realization of the best base learner. We prove this by deriving regret guarantees that share the same form with existing theoretial results, but replace expected regret rates or well-specified regret bounds with realized regret rates, which can be much sharper (as in the example in \pref{fig:expected_regret_motivation}).
From an empirical standpoint, we illustrate the validity of our approach by carrying out an experimental comparison with competing approaches to model selection via base learner pooling, and find that our new algorithms systematically outperform the tested baselines.

\section{SETUP AND NOTATION}\label{s:setup}
\vspace{-0.1in}
We consider a general sequential decision making framework that covers many important problem classes such as multi-armed bandits, contextual bandits and tabular reinforcement learning as special cases.
This framework or variations of it has been commonly used in model selection \citep[e.g.][]{pmlr-v139-cutkosky21a,wei2022model,pacchiano2022best}.

The learner operates with a policy class $\Pi$ and a set of contexts $\Xcal$ over which is defined a probability distribution $\Dcal$, unknown to the learner.
In bandit settings, each policy $\pi$ is a mapping from contexts $\Xcal$ to $\Delta_{\mathcal{A}}$, where $\mathcal{A}$ is an action space and $\Delta_{\mathcal{A}}$ denotes the set of probability distributions over ${\mathcal A}$. However, the concrete form of $\Pi$, $\Xcal$ or $\Acal$ is not relevant for our purposes.
We only need that each policy $\pi \in \Pi$  is associated with a fixed expected reward mapping $\mu^\pi \colon \Xcal \rightarrow [0, 1]$ of the form $\mu^{\pi}(x) = \EE[r | x, \pi]$, 
which is unknown to the learner.
In each round $t \in \NN$ of the sequential decision process, the learner first decides on a policy $\pi_t \in \Pi$. The environment then draws a context $x_t \sim \Dcal$ 
as well as a reward observation $r_t \in [0, 1]$ such that 
$\EE[r_t | x_t, \pi_t] = \mu^{\pi_t}(x_t)$. The learner receives $(x_t, r_t)$ before the next round starts.

We call $v^\pi = \EE_{x \sim \Dcal}[\mu^{\pi}(x)]$ the \emph{value} of a policy $\pi \in \Pi$ and define the instantaneous regret of $\pi$ as
\begin{align}
    \reg(\pi) = v^\star - v^\pi = \EE_{x \sim \Dcal}[\mu^{\pi_\star}(x) - \mu^{\pi}(x)]\label{e:reg}
\end{align}
where $\pi_\star \in \argmax_{\pi \in \Pi} v^\pi$ is an optimal policy and $v^\star$ its value. The total regret after $T$ rounds of an algorithm that chooses policies $\pi_1, \pi_2, \dots$ is
$\Reg(T) = \sum_{t=1}^T \reg(\pi_t)$.
Note that $\Reg(T)$ is a random quantity since the policies $\pi_t$ selected by the algorithm depend on past observations, which are themselves random variables. Yet, we use in (\ref{e:reg}) a pseudo-regret notion that takes expectation over reward realizations and context draws. This is most convenient for our purposes but we can achieve guarantees without those expectations by paying an additive $O(\sqrt{T})$ term, as is standard. We also use $u_T = \sum_{t=1}^T v^{\pi_t}$ for the total value accumulated by the algorithm over the $T$ rounds.

\vspace{-0.1in}
\paragraph{Base learners.}
The learner (henceforth called {\em meta-learner}) is in turn given access to $M$ base learners that the meta-learner can consult when determining the current policy to deploy. Specifically, in each round $t$, the meta-learner chooses one base learner $i_t \in [M] = \{1,\ldots, M\}$ to follow and plays the policy suggested by this base learner. The policy that base learner $i$ recommends in round $t$ is denoted by $\pi^i_t$ and thus $\pi_t = \pi^{i_t}_t$. 
We shall assume that each base learner has an internal state (and internal clock) that gets updated only on the rounds where that base learner is chosen. After being selected in round $t$, base learner $i_t$ will receive from the meta-learner the observation $(x_t,r_t)$.
We use $n^i_t = \sum_{\ell = 1}^{t} \indicator{i_t = i}$ to denote the number of times base learner $i$ happens to be chosen up to round $t$, and by
$u_t^i = \sum_{\ell = 1}^{t} \indicator{i_t = i} v^{\pi^i_t}$ the total value accumulated by base learner $i$ up to this point.
It is sometimes more convenient to use a base learner's internal clock instead of the total round index $t$. To do so, we will use subscripts $(k)$ with parentheses to denote the internal time index of a specific base learner, while subscripts $t$ refer to global round indices. For example, given the sequence of realizatons $(x_1,r_1), (x_2,r_2), \ldots$, $\pi^i_{(k)}$ is the policy base learner $i$ wants to play when being chosen the $k$-th time,
i.e., $\pi^i_t = \pi^i_{(n^i_t)}$.
The total regret incurred by a meta-learner that picks base learners $i_1,\ldots, i_T$ can then be decomposed into the sum of regrets incurred by each base learner:
\vspace{-0.1in}
\begin{align*}
    \Reg(T) = \sum_{t=1}^T \reg(\pi_t) = \sum_{i = 1}^M \sum_{k = 1}^{n^i_T} \reg(\pi^i_{(k)}).
\end{align*}

\subsection{Data-Driven Model Selection}
\vspace{-0.1in}
Our goal is to perform model selection in this setting: We devise sequential decision making algorithms 
that have access to base learners as subroutines and are guaranteed to have regret that is comparable 
to the smallest {\em realized} regret, among all base learners in the pool, despite not knowing a-priori which base learner will happen to be best for the environment at hand ($\Dcal$ and $\mu^\pi$), and the actual realizations $(x_1,r_1), (x_2,r_2), \ldots, (x_{T},r_{T})$.

In order to better quantify this notion of realized regret, the following definition will come handy.
\begin{definition}[regret scale and coefficients]\label{def:regretcoeff}
The \emph{regret scale} of base learner $i$ after being played $k$ rounds is $\frac{\sum_{\ell=1}^{k} \reg(\pi_{(\ell)}^i)}{\sqrt{k}}$.
For a positive constant $d_{\min}$, the \emph{regret coefficient} of base learner $i$ after being played $k$ rounds is defined as
\vspace{-0.12in}
\begin{align*}
    d^i_{(k)} = \max \,\,\,\Biggl\{ \frac{\sum_{\ell=1}^{k} \reg(\pi_{(\ell)}^i)}{\sqrt{k}}, d_{\min} \Biggl\}.
\end{align*}
That is, $d^i_{(k)} \geq d_{\min}$ is the smallest number such that the incurred regret is bounded as $\sum_{\ell=1}^{k} \reg(\pi^i_{(\ell)}) \leq d^i_{(k)} \sqrt{k}$.
Further we define the \emph{monotonic regret coefficient} of base learner $i$ after being played $k$ rounds as
$\bar d^i_{(k)} = \max_{\ell \in [k]} d^i_{(\ell)}$.
\end{definition}
We use a $\sqrt{k}$ rate in this definition since that is the most commonly targeted regret rate in stochastic settings. Our results can be adapted, similarly to prior work \citep{pacchiano2020regret} to other rates but the $\sqrt{T}$ barrier for model selection \citep{pacchiano2020model} remains of course.

It is worth emphasizing that both $d^i_{(k)}$ and $\bar d^i_{(k)}$ in the \pref{def:regretcoeff} are random variables depending on $(x_1,r_1), (x_2,r_2), \ldots, (x_\ell,r_\ell)$, where $\ell = \min\{t\,:\, n^i_t = k\}$. We illustrate them in \pref{fig:regret_coefficients}.

\begin{figure*}
  \centering
  \includegraphics[width=0.77\textwidth]{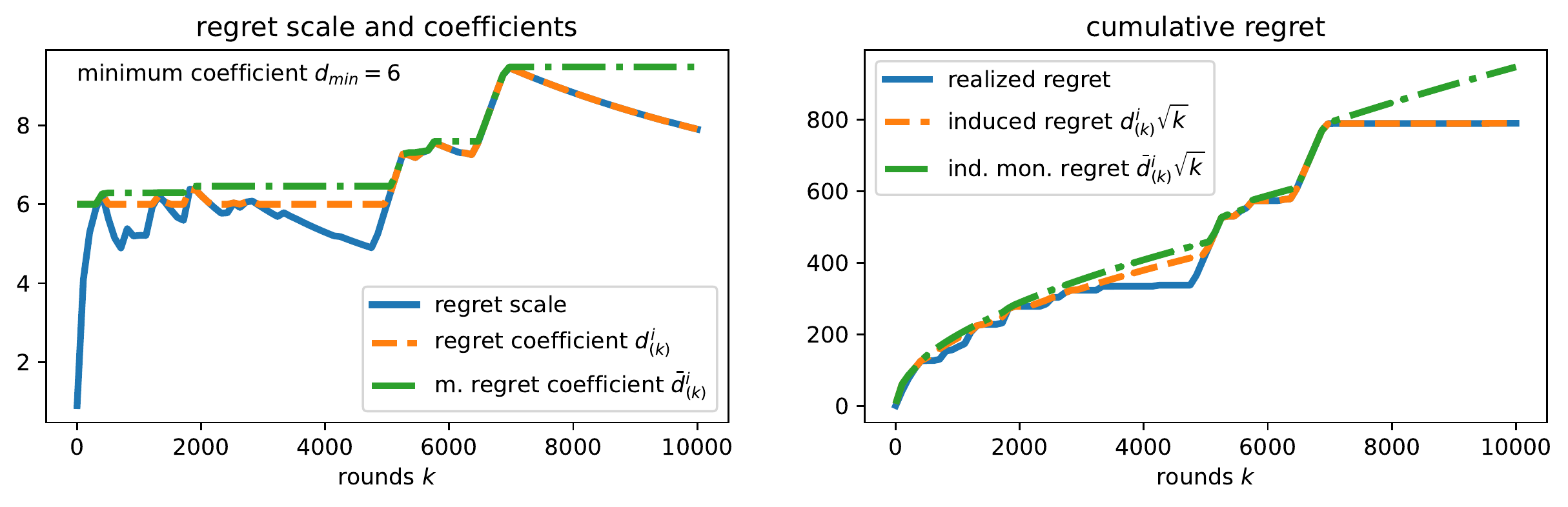}
  \vspace{-0.2in}
  \caption{Illustration of \pref{def:regretcoeff} for one of the baseline realizations from \pref{fig:expected_regret_motivation}.
  \textbf{Left:} Evolution of regret scale, coefficient and monotonic coefficient.
  \textbf{Right:} The same curves multiplied by $\sqrt{k}$. The induced regret bounds from regret coefficients follow the realized regret closely, the non-monotonic version more closely than the monotonic.}
  \label{fig:regret_coefficients}
  \vspace{-0.1in}
\end{figure*}

\subsection{Running Examples}
\vspace{-0.1in}
\label{sec:running_example}
The above formalization encompasses a number of well-known online learning frameworks, including finite horizon Markov decision processes and contextual bandits, and model selection questions therein. We now introduce two examples but refer to earlier works on model selection for a more exhaustive list \citep[e.g.][]{pmlr-v139-cutkosky21a,wei2022model,pacchiano2022best}.

\noindent{\bf Tuning UCB exploration coefficient in multi-armed-bandits.} As a simple illustrative example, we consider multi-armed bandits where the learner chooses in each round an action $a_t$ from a finite action set $\Acal$ and receives a reward $r_t$ drawn from a distribution with mean $\mu^{a_t}$ and unknown but bounded variance $\sigma^2$. In this setting, we directly identify each policy with an action, i.e., $\Pi = \Acal$ and define the context $\Xcal = \{ \varnothing \}$ as empty. The value of an action / policy $a$ is simply $v^a = \mu^a$.

The variance $\sigma$ strongly affects the amount of exploration necessary, thereby controlling the difficulty or ``complexity'' of the learning task. Since the explore-exploit of a learner is typically controlled through a hyper-parameter, it is beneficial to perform model selection among base learners with different trade-offs to adapt to the right complexity of the environment at hand.
We use a simple UCB strategy as a base learner that chooses the next action as $\argmax_{a \in \Acal} \hat \mu(a) + c \sqrt{\frac{\ln(n(a) / \delta)}{n(a)}}$ where $n(a)$ and $\hat \mu(a)$ are the number of pulls of arm $a$ so far and the average reward observed. Here $c$ is the confidence scaling and we instantiate different base learners $i \in [M]$ with different choices $c_1, \dots, c_M$ for $c$. The goal is to adapt to the best confidence scaling $c_{i_\star}$, without knowing the true variance $\sigma^2$.\footnote{We choose this example for its simplicity. An alternative without model selection would be UCB with empirical Bernstein confidence bounds \citep{audibert2007tuning}. However, adaptation with model selection works just as well in more complex settings e.g. linear bandits and MDP, where empirical variance confidence bounds are not available or much more complicated.}

\noindent{\bf Nested linear bandits.} In the stochastic linear bandit model, the learner chooses an action $a_t \in \Acal$ from a large but finite action set $\mathcal{A} \subset \mathbb{R}^{d}$, for some dimension $d>0$ and receives as reward $r_t = a_t^\top \omega$ + white noise, where $\omega \in \mathbb{R}^{d}$ is a fixed but unknown reward vector.
This fits in our framework by considering policies of the form $\pi_\theta(x) = \argmax_{a \in \Acal} \langle a, \theta\rangle$ for a parameter $\theta \in \RR^d$, defining contexts $\Xcal = \{ \varnothing\}$ as empty and the mean reward as $\mu^{\pi}(x) = \pi(x)^\top \omega$, which is also the value $v^{\pi}$.

We here consider the following model selection problem, that was also a motivating application in \citet{pmlr-v139-cutkosky21a}. The action set $\mathcal{A} \subset \mathbb{R}^{d_M}$ has some maximal dimension $d_M>0$, and we have an increasing sequence of $M$ dimensions $d^1 < \ldots < d^M$. Associated with each $d^i$ is a base learner that only considers policies $\Pi_i$ of the form $\pi_{\theta_i}(x) = \argmax_{a \in \Acal} \langle P_{d_i}[a], \theta_i \rangle$ for $\theta_i \in \RR^{d^i}$ and $P_{d^i}[\cdot]$ being the projection onto the first $d^i$ dimensions. That is, the $i$-th base learner operates only on the first $d^{i}$ components of the unknown reward vector $\omega \in \mathbb{R}^{d^M}$. If we stipulate that only the first $d^{i_\star}$ dimensions of $\omega \in \RR^{d^M}$ are non-zero ($d^{i_\star}$ being unknown to the learner) we are in fact competing in a regret sense against the base learner that operates with the policy class $\Pi_{i_\star}$, the one at the ``right" level of complexity for the underlying $\omega$.

\noindent{\bf Nested stochastic linear contextual bandits.} We also consider a contextual version of the previous setting \citep[Ch.~19]{lattimore2020bandit} where where context $x_t \in \Xcal$ are drawn i.i.d. and which a policy maps to some action $a_t \in \Acal$. The expected reward is then $\mu^\pi(x) = \psi(x,\pi(x))^\top \omega$ for a known feature embedding $\psi\,:\,\Xcal\times\Acal \rightarrow \mathcal{R}^d$, and an unknown vector $\omega \in \mathcal{R}^d$. Just as above, we consider the nested version of this setting where $\psi$ and $\omega$ live in a large ambient dimension $d^{M}$ but only the first $d^{i_\star}$ entries of $\omega$ are non-zero.

\vspace{-0.1in}
\section{DATA-DRIVEN REGRET BALANCING}\label{sec:balancingandelimination}
\vspace{-0.1in}
We introduce and analyze two data-driven regret balancing algorithms, which are both shown in \pref{alg:balancing}. Both algorithms maintain over time three main estimators for each base learner: (1) regret coefficients $\widehat d^{i}_{t}$, meant to estimate the monotonic regret coefficients $\bar d^i_{t}$ from \pref{def:regretcoeff}, (2) the average reward estimators $\widehat u^{i}_t/n^i_t$, and (3) the balancing potentials $\phi^i_t$, which are instrumental in the implementation of the exploration strategy based on regret balancing.
At each round $t$ the meta-algorithm picks the base learner $i_t$ with the smallest balancing potential so far (ties broken arbitrarily). The algorithm plays the policy $\pi_t$ suggested by that base learner on the current context $x_t$, receives the associated reward $r_t$, and forwards $(x_t,r_t)$ back to that base learner only.

Where our two meta-learners differ is how they update the regret coefficient $\widehat d^{i_t}_{t}$ of the chosen learner and its potential $\phi^{i_t}_t$. We now introduce each version and the regret guarantee we prove for it.

\begin{algorithm*}[t]
\textbf{Input:} $M$ base learners, minimum regret coefficient $d_{\min}$, failure probability $\delta$\\
Initialize balancing potentials $\phi_1^i = d_{\min}$ and regret coefficient $\widehat d_0^i = d_{\min}$, for all $i \in [M]$ \\
Initialize counts $n^i_0 = 0$, and total value $\widehat u^i_0 = 0$, for all $i \in [M]$\\
\For{rounds $t=1, 2, 3, \dots$}{

Receive context $x_t$\\
Pick base learner $i_t$ with smallest balancing potential: $i_t \in \argmin_{i \in [M]} \phi^i_{t}$\\
Pass $x_t$ to base learner $i_t$\\
Play policy $\pi_t = \pi_t^{i_t}$ suggested by base learner $i_t$ on $x_t$ and receive reward $r_t$\\
 Set $n^i_t = n^i_{t-1}$, $\widehat u^i_t = \widehat u^i_{t-1}$, $\widehat d^i_{t} = \widehat d^i_{t-1}$, and $\phi^i_{t+1} = \phi^i_t$, for $i \in [M] \setminus \{i_t\}$\\
 Update statistics $n^{i_t}_t = n^{i_t}_{t-1} + 1$ and $\widehat u^{i_t}_t = \widehat u^{i_t}_{t-1} +  r_t$\\[2mm]
 \begin{tcolorbox}[sidebyside,lefthand ratio=0.49,title={\hspace{3.3cm}D$^3$RB \hspace{3cm} or \hspace{3cm} ED$^2$RB}, width=0.97\textwidth]%
Perform misspecification test
\begin{align*}
\frac{\widehat u^{i_t}_t}{n_t^{i_t}} + \frac{\widehat d^{i_{t}}_{t-1} \sqrt{n^{i_t}_t}}{n_t^{i_t}} &+ c\sqrt{\frac{\ln \frac{M\ln n^{i_t}_t}{\delta}}{n_t^{i_t}}} 
\\
\vspace*{-5mm}
&< \max_{j \in [M]} \frac{\widehat u^{j}_t}{n_t^{j}} - c\sqrt{\frac{\ln \frac{M\ln n^{j}_t}{\delta}}{n_t^{j}}}
\end{align*}
If test triggered double regret coefficient $\widehat d^{i_t}_{t} =  2 \widehat d^{i_t}_{t-1}$ and otherwise set  $\widehat d^{i_t}_t = \widehat d^{i_t}_{t-1}$\\
Update balancing potential
$\phi^{i_t}_{t+1} = \widehat d^{i_t}_{t} \sqrt{n^{i_t}_t}$
\tcblower
Estimate active regret coefficient
\begin{align*}
\displaystyle  \widehat d^{i_t}_{t} = \max\Biggl\{d_{\min}, \,\,\sqrt{n_t^{i_t}} \Biggl( &\max_{j \in [M]} \frac{\widehat u^{j}_t}{n_t^{j}}
- c\sqrt{\frac{\ln \frac{M\ln n^{j}_t}{\delta}}{n_t^{j}}}\\
&- \frac{\widehat u^{i_t}_t}{n_t^{i_t}} - c\sqrt{\frac{\ln \frac{M\ln n^{i_t}_t}{\delta}}{n_t^{i_t}}}\Biggl)\Biggl\}
\end{align*}
Update balancing potential
\[
\phi^{i_t}_{t+1} = \operatorname{clip}\left( \widehat d^{i_t}_{t} \sqrt{n_t^{i_t}}  ;\,\,\, \phi^{i_t}_t,\,\, 2\phi^{i_t}_t \right)
\]
\end{tcolorbox}

}
\caption{Data Driven Regret Balancing Algorithms (D$^3$RB and ED$^2$RB)}
\label{alg:balancing}
\end{algorithm*}

\vspace{-0.1in}
\subsection{%
Balancing Through Doubling}
\vspace{-0.1in}
Our first meta-algorithm (Doubling Data Driven Regret Balancing  or D$^3$RB) is shown on the left in \pref{alg:balancing}.
Similar to existing regret balancing approaches~\citep{pacchiano2020model, pacchiano2022best}, D$^3$RB performs a misspecification test which checks whether the current estimate of the regret of base learner $i_t$ is compatible with the data collected so far. 
The test compared the average reward $\frac{\widehat u^{i_t}_t}{n_t^{i_t}}$ of the chosen learner against the highest average reward among all learners $\max_{j \in [M]} \frac{\widehat u^{j}_t}{n_t^{j}}$. If the difference is larger than the current regret coefficient $\frac{\widehat d^{i_{t}}_{t-1} \sqrt{n^{i_t}_t}}{n_t^{i_t}}$ permits (accounting for estimation errors by considering appropriate concentration terms), then the we know that $\widehat d^{i_{t}}$ is too small to accurately represent the regret of learner $i_t$ and we double it. 
This deviates from prior regret balancing approaches~\citep{pacchiano2020model,pmlr-v139-cutkosky21a} that simply eliminate a base learner if the misspecification test fails for a given candidate regret bound. Finally, D$^3$RB sets the potential $\phi^{i_t}_t$ as $\widehat d^{i_t}_{t} \sqrt{n^{i_t}_t}$ so that the potential represents an upper-bound on the regret incurred by $i_t$.

Our doubling approach for $\wh d^{i_t}_t$ is algorithmically simple but creates main technical hurdles compared to existing elimination approaches since we have to show that the regret coefficients are adapted fast enough to be accurate and do not introduce undesirable scalings in our upper bounds. By overcoming these hurdles in our analysis, we show the following result quantifies the regret properties of D$^3$RB in terms of the {\em monotonic} regret coefficients of the base learners at hand. 

\begin{restatable}{theorem}{maindouble}\label{thm:maindouble}
With probability at least $1 - \delta$, the regret of D$^3$RB (\pref{alg:balancing}, left) with parameters $\delta$ and $d_{\min} \geq 1$ is bounded in all rounds $T \in \NN$ as\footnote
{
Here and throughout, $\tilde O$ hides log-factors.
}
\begin{align*}
    \Reg(T) = \tilde O \left( \bar d^\star_T M\sqrt{T} + (\bar d^\star_T)^2 \sqrt{MT}\right)
\end{align*}
where $\bar d^\star_T = \min_{i \in [M]} \bar d^i_T = \min_{i \in [M]} \max_{t \in [T]} d^i_t$ is the smallest monotonic regret coefficient among all learners (see \pref{def:regretcoeff}).
\end{restatable}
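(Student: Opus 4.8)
The plan is to decompose the regret into contributions from each base learner and then control two competing quantities: how large the learned regret coefficients $\widehat d^i_t$ can grow, and how balanced the potentials stay across learners. First I would establish, on the high-probability event where all the concentration bounds used inside the misspecification test hold simultaneously (a union bound over learners and over the $\ln n$ geometric grid of internal clocks, which is where the $\ln\frac{M\ln n}{\delta}$ term comes from), two structural facts. (i) \emph{Soundness of doubling:} the test for learner $i$ can only trigger when $\widehat d^{i_t}_{t-1}$ genuinely underestimates the realized regret scale of $i_t$, so whenever the test triggers we have $\widehat d^{i_t}_{t-1} \lesssim d^{i_t}_{(n^{i_t}_t)}$ up to concentration slack; combined with the doubling rule this gives $\widehat d^i_t = \tilde O(\bar d^i_{(n^i_t)})$ for every learner and every round. (ii) \emph{A per-learner regret bound in terms of its potential:} since $\phi^i_{t+1} = \widehat d^i_t \sqrt{n^i_t}$ and the true regret of $i$ after $k$ pulls is at most $d^i_{(k)}\sqrt{k} \le \bar d^i_{(k)}\sqrt{k}$, and since the test being \emph{not} triggered certifies $\widehat d^i$ is an (approximately) valid upper bound on the regret scale of the currently-played learner, the realized regret of learner $i$ up to $T$ is $\tilde O(\phi^i_{T+1}) = \tilde O(\widehat d^i_T \sqrt{n^i_T})$.

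Next I would invoke the regret-balancing argument proper. Because $i_t$ is always the learner with the smallest potential, and potentials only increase when their learner is played, a standard balancing lemma gives that at every round all potentials are within roughly a factor of each other, so $\phi^i_{T+1} \lesssim \phi^{i_\star}_{T+1} + (\text{one step's increase})$ for the best learner $i_\star = \argmin_i \bar d^i_T$. Summing the per-learner bounds from step (ii) over $i \in [M]$ then yields $\Reg(T) = \tilde O\big(\sum_i \phi^i_{T+1}\big) = \tilde O\big(M \cdot \phi^{i_\star}_{T+1}\big)$, and since $\phi^{i_\star}_{T+1} = \widehat d^{i_\star}_T \sqrt{n^{i_\star}_T}$ with $\widehat d^{i_\star}_T = \tilde O(\bar d^\star_T)$ and $n^{i_\star}_T \le T$, this already produces the $\bar d^\star_T M\sqrt{T}$ term. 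The second term, $(\bar d^\star_T)^2\sqrt{MT}$, comes from accounting for the concentration slack $c\sqrt{\ln(\cdot)/n^i_t}$ that was dropped in the soundness/validity steps: the number of pulls a learner needs before its test-based certificate becomes meaningful scales with $(\widehat d^i)^2$, so the "burn-in" regret summed appropriately and re-balanced across the $M$ learners contributes a $\tilde O((\bar d^\star_T)^2\sqrt{MT})$ overhead; here one uses Cauchy–Schwarz over $\sum_i \sqrt{n^i_T} \le \sqrt{MT}$.

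The main obstacle I expect is controlling the doubling dynamics quantitatively — specifically, ruling out that $\widehat d^i_t$ overshoots $\bar d^i_{(n^i_t)}$ by more than a constant factor, and doing so \emph{uniformly over all rounds $t$} even though the test is evaluated at many different clock values and the comparison term $\max_j \widehat u^j_t/n^j_t$ is itself a moving, data-dependent target. The delicate point is that a spuriously-large $\max_j$ reward estimate (or a spuriously-small estimate for $i_t$) could trigger the test and inflate $\widehat d^{i_t}$; the resolution is to absorb exactly these fluctuations into the concentration event and show the inflation is at most one extra doubling, i.e. a factor of $2$. A secondary technical nuisance is that $\bar d^\star_T$, $\widehat d^i_t$, and $n^i_T$ are all random and mutually dependent, so the "with probability $1-\delta$, for all $T$" claim requires the whole argument to be carried out deterministically on the good event, with an anytime (geometric-grid) union bound rather than a fixed-$T$ one.
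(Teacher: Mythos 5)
Your skeleton agrees with the paper's in several places: the anytime concentration event behind the $\ln\frac{M\ln n}{\delta}$ terms, the fact that the test cannot trigger once $\widehat d^{i_t}_{t-1}$ already exceeds the true (monotonic) coefficient — which yields $\widehat d^i_t \le 2\bar d^i_t$ — the factor-$O(1)$ balancing of the potentials, and the final Cauchy--Schwarz step $\sum_i \sqrt{n^i_T}\le\sqrt{MT}$. But your step (ii) contains a genuine gap. The misspecification test \emph{not} triggering does not certify that $\phi^i_{t+1}=\widehat d^i_t\sqrt{n^i_t}$ upper-bounds the regret of learner $i$ against $v^\star$; it only certifies that $i$'s average reward is within $\widehat d^i_t/\sqrt{n^i_t}$ (plus slack) of the best \emph{empirical} average reward among the $M$ learners, and that benchmark can itself sit far below $v^\star$. (If every learner is bad in the same way, no test ever triggers, all $\widehat d^i$ stay at $d_{\min}$, yet the realized regret is large — so ``$\mathrm{regret\ of\ }i=\tilde O(\phi^i_{T+1})$'' is simply false as a standalone claim.) The correct per-learner bound must carry the extra term
\begin{align*}
\frac{n^i_t}{n^j_t}\Bigl(n^j_t v^\star - u^j_t\Bigr)\;\le\;\sqrt{\tfrac{n^i_t}{n^j_t}}\; d^j_t\sqrt{n^i_t},
\end{align*}
where $j$ is a reference learner, and the balancing identity $\widehat d^i_t\sqrt{n^i_t}\approx\widehat d^j_t\sqrt{n^j_t}$ gives $\sqrt{n^i_t/n^j_t}=\widehat d^j_t/\widehat d^i_t\le O(\bar d^j_t/d_{\min})$. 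That product is the actual source of the $(\bar d^\star_T)^2\sqrt{MT}$ term: a learner whose estimated coefficient sits at $d_{\min}$ gets played up to $(\bar d^j_T/d_{\min})^2$ times more often than $j$, and each of those plays still pays $j$'s per-round regret. Your attribution of the quadratic term to ``concentration slack / burn-in'' is a misaccounting — the concentration terms only contribute $\tilde O(\bar d^\star_T\sqrt{MT})$, linear in $\bar d^\star_T$, since both the slack and the tolerance scale as $1/\sqrt{n}$ and there is no burn-in threshold of order $(\widehat d^i)^2$.

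Two smaller points. First, the per-learner argument must be anchored at the \emph{last} round $t\le T$ at which $i$ was played and the test did not trigger; the remaining plays of $i$ after $t$ each double $\widehat d^i$, so by your own step (i) there are at most $1+\log_2(\bar d^i_T/d_{\min})$ of them, contributing only an additive logarithmic term. Your write-up never isolates these triggering rounds, but the ``test not triggered'' inequality is only available at non-triggering rounds, so this case split is needed. Second, your worry about $\widehat d^i_t$ overshooting is handled exactly as you suspect (one extra doubling, hence the factor $2$ in $\widehat d^i_t\le 2\bar d^i_t$), so that part is fine; the missing idea is solely the relative-vs-absolute regret decomposition through the reference learner $j$.
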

We defer a discussion of this regret bound and comparison to existing results to \pref{sec:bound_discussion}.

\vspace{-0.05in}
\subsection{Balancing Through Estimation}\label{sec:ED2RB}
\vspace{-0.1in}
While D$^3$RB retains the misspecification test of existing regret balancing approaches, our second algorithm, Estimating Data-Driven Regret Balancing or ED$^2$RB, takes a more direct approach. It estimates the regret coefficient (see right in \pref{alg:balancing}) directly as the highest difference in average reward $\max_{j \in [M]} \frac{\widehat u^{j}_t}{n_t^{j}} - \frac{\widehat u^{i_t}_t}{n_t^{i_t}}$ between $i_t$ and any other learner scaled by $\sqrt{n_t^{i_t}}$, the number of times $i_t$ has been played. Again, we include appropriate concentration terms to account for estimation errors as well as a lower bound of $d_{\min}$ to ensure stability. 

Since this direct estimation approach is more adaptive compared to the doubling approach, it requires a finer analysis to show that the changes in the estimator does not interfere with the balancing property of the algorithm. To ensure the necessary stability, we use a clipped version in ED$^2$RB of the potential of D$^3$RB. The function $\operatorname{clip}(x; a,b)$ therein clips the real argument $x$ to the interval $[a,b]$, 
and makes the potential non-decreasing and not increasing too quickly.

This more careful definition of the balancing potentials
allows us to replace in the regret bound the monotonic regret coefficient $\bar d^\star_T$ with the sharper regret coefficient $d^\star_T$ in the regret guarantee for ED$^2$RB:
\begin{restatable}{theorem}{mainestimate}\label{thm:mainestimate}
With probability at least $1 - \delta$, the regret of ED$^2$RB (\pref{alg:balancing}, right) with parameters $\delta$ and $d_{\min} \geq 1$ is bounded in all rounds $T \in \NN$ as
\begin{align*}
    \Reg(T) = \tilde O \left(d^\star_T M\sqrt{T} + (d^\star_T)^2 \sqrt{MT}\right)
\end{align*}
where $d^\star_T = \min_{i \in [M]} \max_{j \in [M]} d^i_{T_j}$ is the smallest regret coefficient among all learners, and $T_j$ is the last time $t$ when base learner $j$ was played and $\phi^j_{t+1} < 2\phi^j_t$.
\end{restatable}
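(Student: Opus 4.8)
The plan is to run the regret-balancing argument on a high-probability event on which all empirical average rewards concentrate, and then carefully track how the clipped, data-driven potentials evolve. First I would fix the ``good event'' $\mathcal{E}$ on which, for every base learner $i$ and every round $t$, $|\widehat u^i_t/n^i_t - u^i_t/n^i_t| \le c\sqrt{\ln(M\ln n^i_t/\delta)/n^i_t}$. This follows from a time-uniform Hoeffding/Azuma bound on the reward-deviation martingale of each learner (updated only on its own plays), a union bound over the $M$ learners, and a peeling argument over the dyadic scale of $n^i_t$ --- which is exactly why the confidence width hard-coded in \pref{alg:balancing} has the form $c\sqrt{\ln(M\ln n^i_t/\delta)/n^i_t}$ --- so that $\Pr(\mathcal{E}) \ge 1 - \delta$. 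Everything below is deterministic on $\mathcal{E}$.

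The next step is a two-sided control of the estimated coefficients. Since $\widehat u^j_t/n^j_t - c\sqrt{\cdots}$ is a lower confidence bound on $u^j_t/n^j_t \le v^\star$ for every $j$, while $\widehat u^{i_t}_t/n^{i_t}_t + c\sqrt{\cdots} \ge u^{i_t}_t/n^{i_t}_t = v^\star - \tfrac{1}{n^{i_t}_t}\sum_{\ell=1}^{n^{i_t}_t}\reg(\pi^{i_t}_{(\ell)})$, plugging these into the definition of $\widehat d^{i_t}_t$ yields the over-estimation-free bound $\widehat d^{i_t}_t \le d^{i_t}_{(n^{i_t}_t)}$. For the reverse direction I would evaluate the inner $\max_j$ at the candidate best learner $i_\star$ attaining $d^\star_T$ and use $u^{i_\star}_t/n^{i_\star}_t = v^\star - d^{i_\star}_{(n^{i_\star}_t)}/\sqrt{n^{i_\star}_t}$, obtaining
\[ \widehat d^{i_t}_t\sqrt{n^{i_t}_t} \;\ge\; \sum_{\ell=1}^{n^{i_t}_t}\reg(\pi^{i_t}_{(\ell)}) \;-\; O\!\Bigl(\tfrac{d^\star_T}{\sqrt{n^{i_\star}_t}}\sqrt{n^{i_t}_t} + \sqrt{n^{i_t}_t\ln\tfrac{M\ln n^{i_t}_t}{\delta}}\Bigr), \]
where the first error term is the price for not knowing $\pi_\star$ directly and the second collects concentration widths. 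A symmetric argument, applied at $i$'s own snapshot round, gives the same lower bound for every $i$, not just the chosen one.

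I would then turn the potentials into a near-balanced family. The clipped update has two structural consequences: $\phi^i_t$ is non-decreasing in $t$ and increases by at most a factor $2$ per play of $i$. Because the cumulative realized regret $d^i_{(k)}\sqrt{k} = \max\{d_{\min}\sqrt k,\sum_{\ell\le k}\reg(\pi^i_{(\ell)})\}$ is monotone in $k$, the upper clip can never drive $\phi^i$ past twice its ``natural value'' $\widehat d^i\sqrt{n^i}$; isolating the last round $T_i$ at which the upper clip was inactive for $i$ yields $\phi^i_{T+1} = O(\widehat d^i_{T_i}\sqrt{n^i_{T_i}}) = O(d^i_{T_i}\sqrt T)$ up to $\tilde O(1)$ additional post-$T_i$ plays. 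Meanwhile, the rule $i_t\in\argmin_i\phi^i_t$ together with the ``at most doubling'' property keeps all potentials within a constant factor at the relevant rounds, so that $\phi^i_{T+1} = \tilde O(d^\star_T\sqrt T)$ for \emph{every} $i$. Decomposing $\Reg(T) = \sum_i\sum_{\ell=1}^{n^i_T}\reg(\pi^i_{(\ell)})$, bounding each learner's realized regret by $O(\phi^i_{T+1}) + \text{slack}_i + \tilde O(1)$ via the previous step, and summing over $i$, the $\sum_i\phi^i_{T+1}$ contribution gives $\tilde O(d^\star_T M\sqrt T)$, while the accumulated slack, scaling like $(d^\star_T + \sqrt{\ln})\sum_i\sqrt{n^i_T}$, is controlled by Cauchy--Schwarz ($\sum_i\sqrt{n^i_T}\le\sqrt{MT}$), $n^{i_\star}_T\ge1$, and $d_{\min}\ge1$, producing the $\tilde O((d^\star_T)^2\sqrt{MT})$ term.

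The main obstacle is the balancing step under a moving estimator: unlike prior regret-balancing analyses where the potentials are fixed a priori, here $\widehat d^i$ changes with the data, and one must show these changes do not break the invariant that all potentials stay comparable. This is exactly what the clipping in ED$^2$RB is designed to enable (lower clip $=$ monotonicity, upper clip $=$ no jumps), and it is also what forces the bound to be expressed through $d^\star_T = \min_i\max_j d^i_{T_j}$ --- the coefficients read off at the data-dependent snapshot rounds $T_j$ --- rather than a running maximum; verifying that this quantity is simultaneously achieved by the algorithm and never worse than the monotonic coefficient $\bar d^\star_T$ of \pref{thm:maindouble} is the delicate bookkeeping at the heart of the proof.
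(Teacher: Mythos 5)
Your proposal is correct and follows essentially the same route as the paper's proof: the same time-uniform concentration event, the same no-overestimation property $\widehat d^i_t \le d^i_t$, the same use of clipping to get monotone, at-most-doubling potentials that stay balanced within a factor of $2$, the same isolation of the last un-clipped round $T_i$ (with only $\tilde O(1)$ plays afterwards), and the same per-learner decomposition plus Cauchy--Schwarz; your ``lower bound on $\widehat d^{i_t}_t\sqrt{n^{i_t}_t}$'' is just an algebraic repackaging of the paper's bound on $\tfrac{n^i_t}{n^j_t}u^j_t - u^i_t$ via the definition of $\widehat d^{i_t}_t$. The only nit is that $u^{i_\star}_t/n^{i_\star}_t = v^\star - d^{i_\star}_{(n^{i_\star}_t)}/\sqrt{n^{i_\star}_t}$ should be an inequality ($\ge$) because of the $d_{\min}$ clip in \pref{def:regretcoeff}, but it points in the direction you need.
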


\vspace{-0.09in}
\subsection{Discussion, Comparison to the Literature}
\label{sec:bound_discussion}
\vspace{-0.1in}
One way to interpret \pref{thm:maindouble} is the following. If the meta-learner were given ahead of time the index of the base learner achieving the smallest monotonic regret coefficient $\bar d^\star_T$, then the meta-learner would follow that base learner from beginning to end. The resulting regret bound for the meta-learner would be of the form
$(\bar d^\star_T)\sqrt{T}$.
Then the price D$^3$RB pays for aggregating the $M$ base learners is essentially a multiplicative factor of the form $M + \bar d^\star_T \sqrt{M}$.

Up to the difference between $d^\star_T$ and $\bar d^\star_T$, the guarantees in \pref{thm:maindouble} and \pref{thm:mainestimate} are identical. Further, since $d^\star_T \leq \bar d^\star_T$, the guarantee for ED$^2$RB is never worse than that for D$^3$RB. It can however be sharper, e.g., in environments with favorable gaps where we expect that a good base learner may achieve a $O(\log(T))$ regret instead of a $\sqrt{T}$ rate and thus $d^i_t$ of that learner would decrease with time. The regret coefficient $d^\star_T$ can benefit from this while $\bar d^\star_T$ cannot decrease with $T$, and thus provide a worse guarantee.

Both D$^3$RB and ED$^2$RB rely on a user-specified parameter $d_{\min}$. In terms of regret coefficients, the regret bounds of the two algorithms have the general form $(d^\star_T)^2/d_{\min} + d_{\min}$. So, if we knew beforehand something about $d_T^\star$, we could set $d_{\min} =  d_T^\star$, and get a linear dependence on $d_T^\star$, otherwise we can always set as default $d_{\min} = 1$ (as we did in \pref{thm:maindouble} and \pref{thm:mainestimate}).
Both our data-dependent guarantees recover existing data-{\em independent} results up to the precise $M$ dependency. Specifically, ignoring $M$ factors, our bounds scale at most as $(d^\star_T)^2 \sqrt{T}$, while the previous literature on the subject (e.g., \cite{pmlr-v139-cutkosky21a}, Corollary 2) scales as $(d^{i_\star})^2\sqrt{T}$. 
We recall that in the data-independent case, regret {\em lower} bounds are contained (in a somewhat implicit form) in
\cite{marinov2021pareto}, where it is shown that one cannot hope in general to achieve better results in terms of $d^{i_\star}$ and $T$, like in particular a 
$d^{i_\star}\sqrt{T}$ regret bound. Only a $(d^{i_\star})^2\sqrt{T}$-like regret guarantee is generally possible.

These prior model selection results based on regret balancing require candidate regret bounds to be specified ahead of time. %
Hence, the corresponding algorithms cannot leverage the favorable cases that our data-dependent bounds automatically adapt to. In particular, in \cite{pmlr-v139-cutkosky21a}, the optimal parameter $d^{i_\star}$ is the smallest candidate regret rate that is larger than the true rate of the optimal (well-specified) base learner. Instead, we do not assume the availability of such candidate regret rates, and our $d^\star_T$ is the {\em true} regret rate of the optimal base learner. In short, \cite{pmlr-v139-cutkosky21a}'s results are competitive with ours only when the above candidate regret rate happens to be very accurate for the best base learner, but this is a fairly strong assumption. Although theoretical regret bounds for base learners can often guide the guess for the regret rate, the values one obtains from those analyses are typically much larger than the true regret rate, as theoretical regret bounds are usually loose by large constants.

So, the goal here is to improve over more traditional data-independent bounds when data is benign or typical. Observe that in practice $d^\star_T$ can also be {\em decreasing} with $T$ (as we will show multiple times in our experiments in Section \ref{sec:experiments}). One such relevant case is when the individual base learner runs have large variances (recall Figure \ref{fig:expected_regret_motivation}). 

From a technical standpoint, we do indeed build on the existing technique for analyzing regret balancing by \citet{pacchiano2020model,pmlr-v139-cutkosky21a}. Yet, their  analysis heavily relies on fixed candidate regret bounds, and removing those introduces several technical challenges, like disentangling the balancing potentials $\phi^i_t$ from the estimated regret coefficients $\widehat d^i_t$, and combine with clipping or the doubling estimator. This allows us to show the necessary invariance properties that unlocks our improved data-dependent guarantees. See \pref{app:doubling_proofs} and \ref{app:estimating_proofs}.

Departing from regret balancing techniques, model selection can also revolve around Follow-The-Regularized Leader-like schemes (e.g., \citep{agarwal2017corralling, pacchiano2020model, arora2020corralling}). However, even in those papers, $d^{i_\star}$ is the expected regret scale, thus never sharper than our $d^\star_T$, and also not able to capture favorable realizations. As we shall see in Section \ref{sec:experiments}, there is often a stark difference between the expected performance and the data-dependent performance, which confirms that the improvement in our bounds is important in practice.

\begin{figure*}
\includegraphics[width=0.338\textwidth]{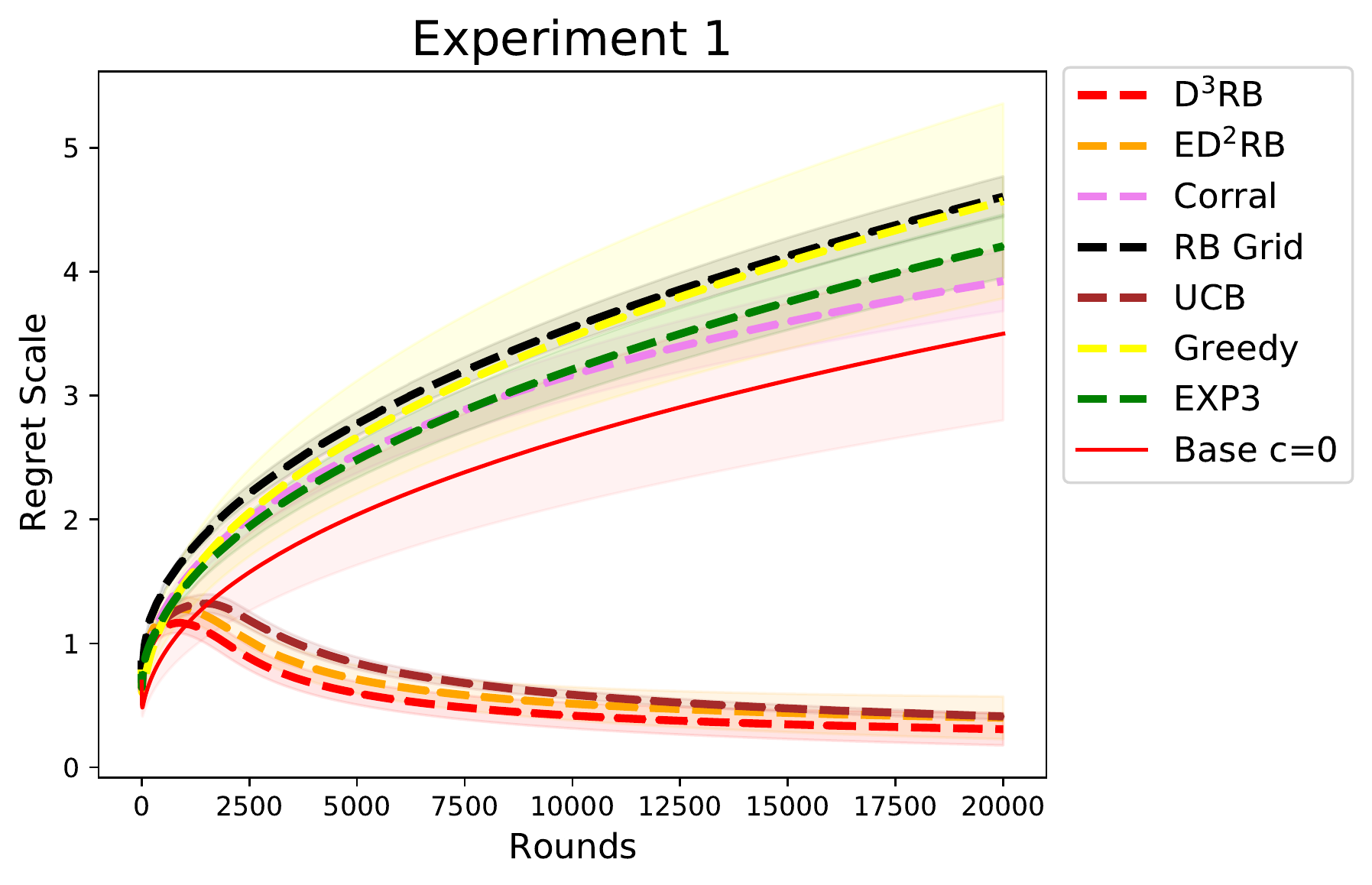}
\hspace{-3mm}
   \includegraphics[width=0.338\textwidth]{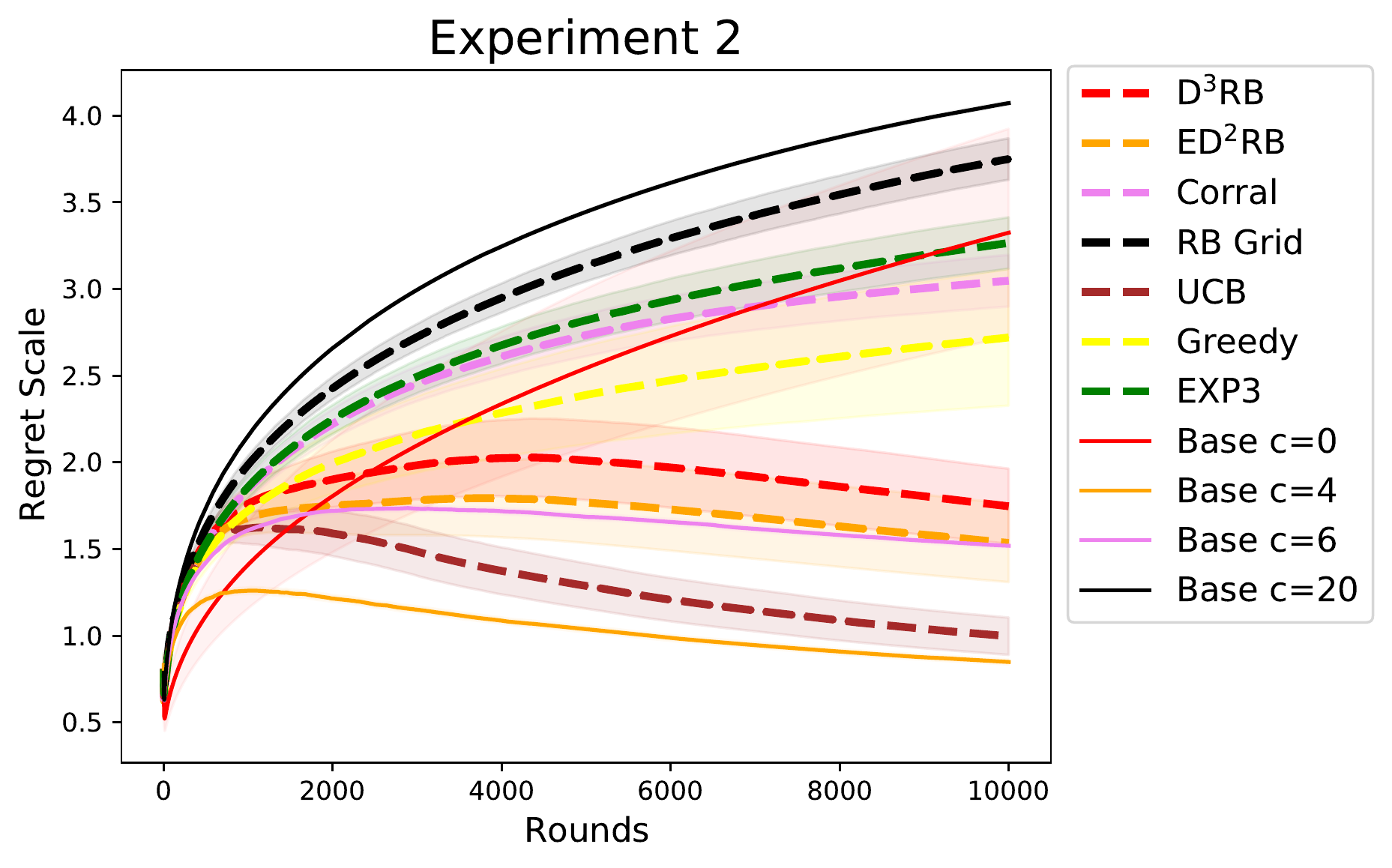}
\hspace{-3mm}
   \includegraphics[width=0.338\textwidth]{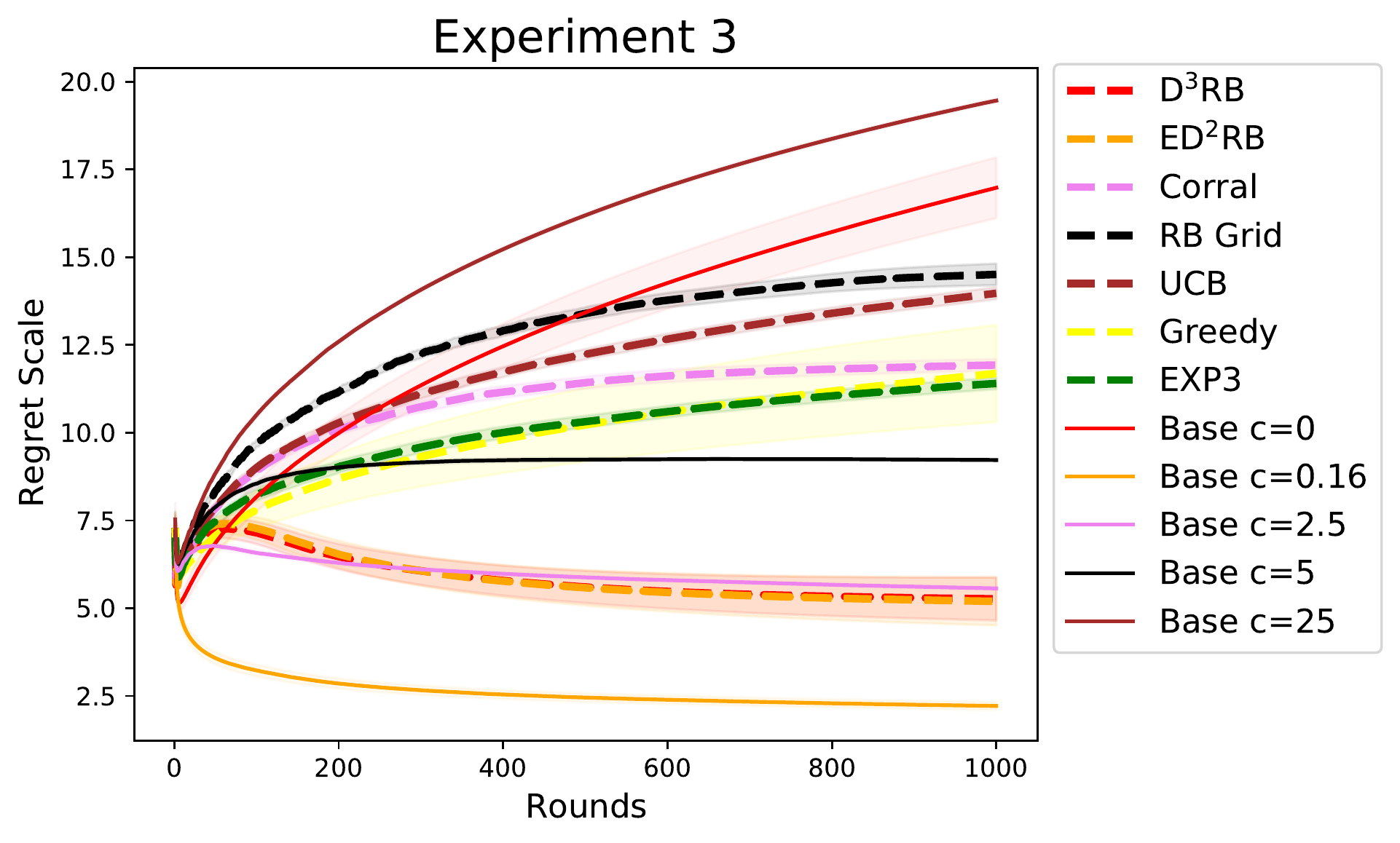}\\
    \includegraphics[width=0.338\textwidth]{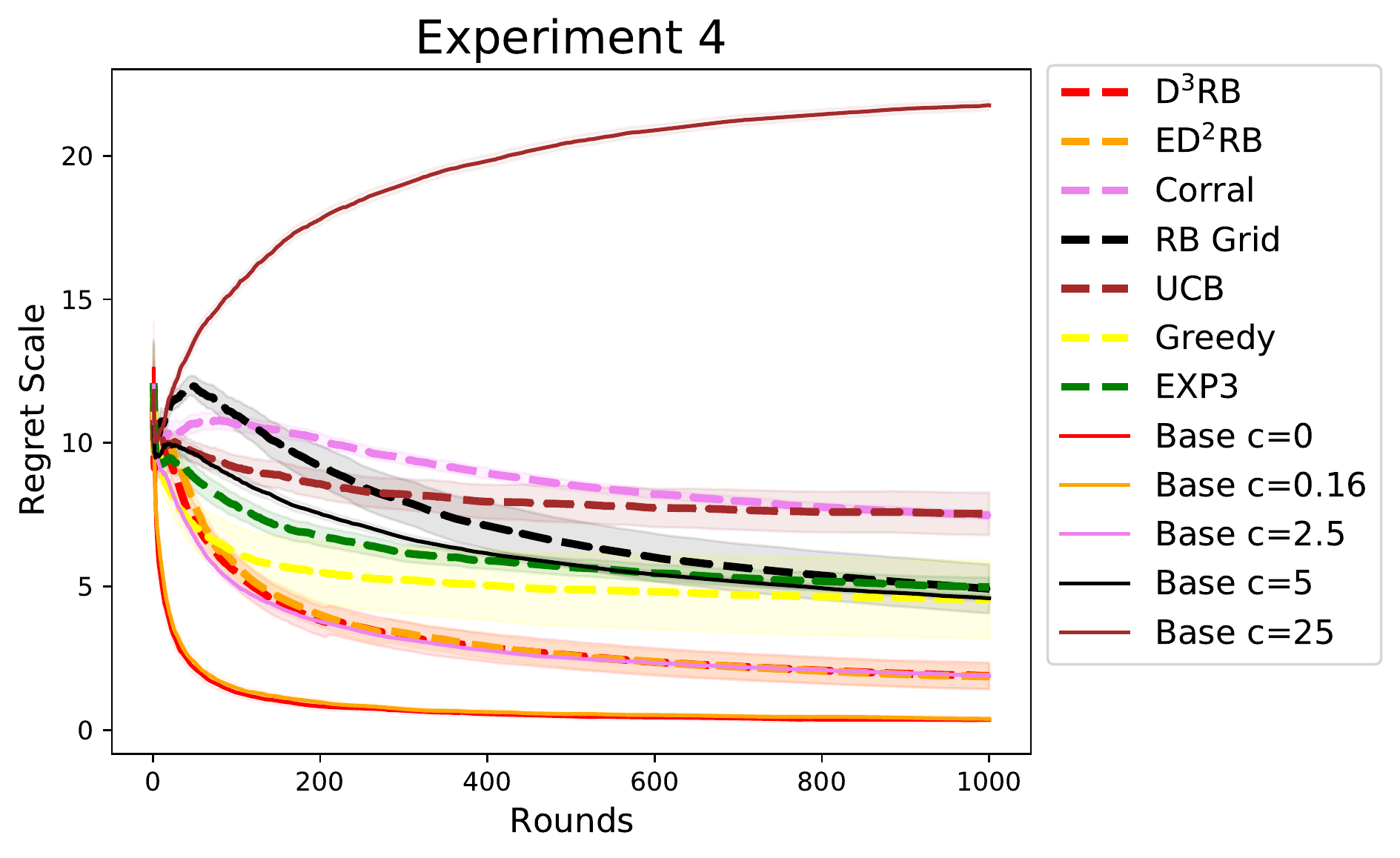}\hspace{-3mm}
     \includegraphics[width=0.338\textwidth]{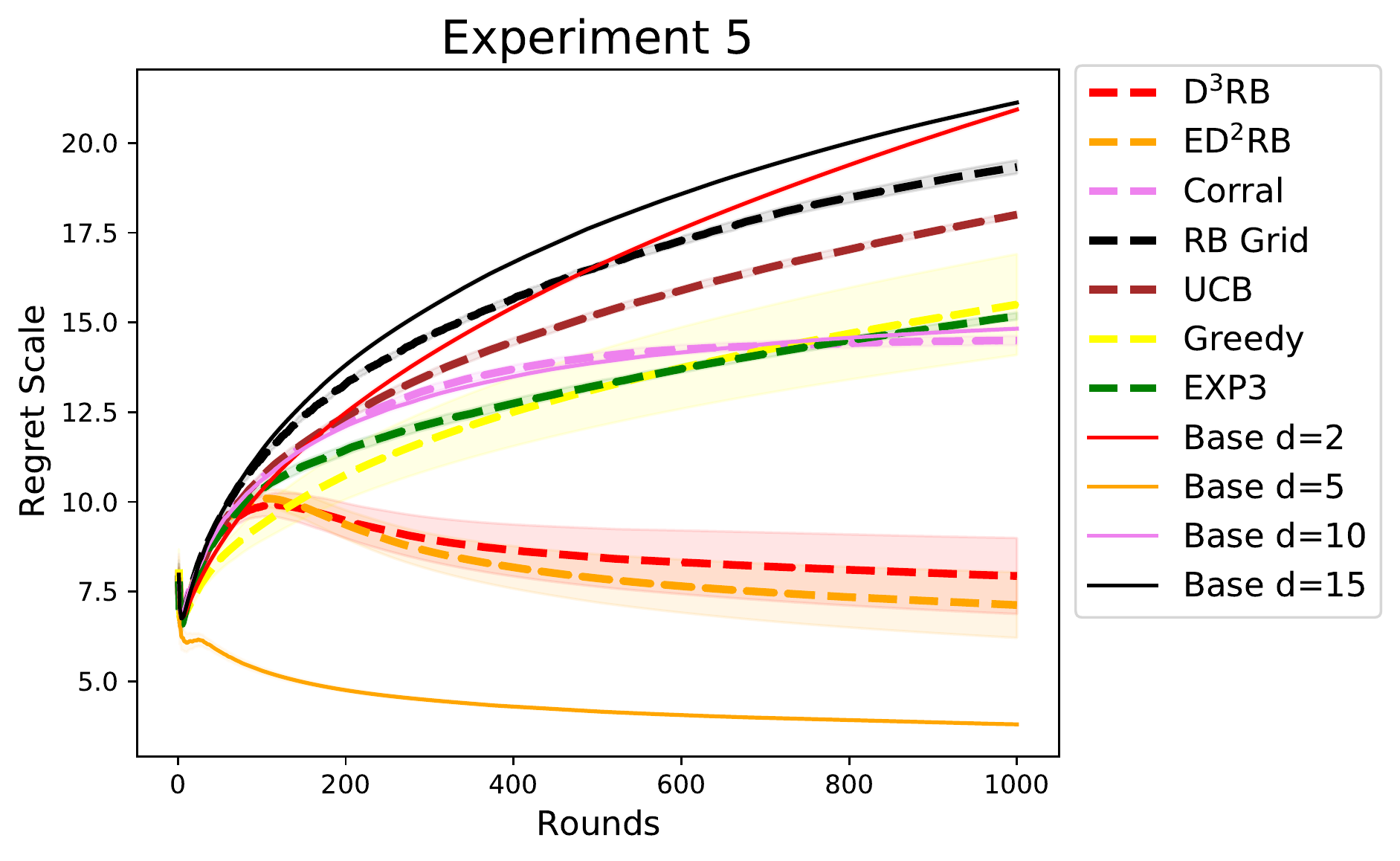}\hspace{-3mm}
    \includegraphics[width=0.338\textwidth]{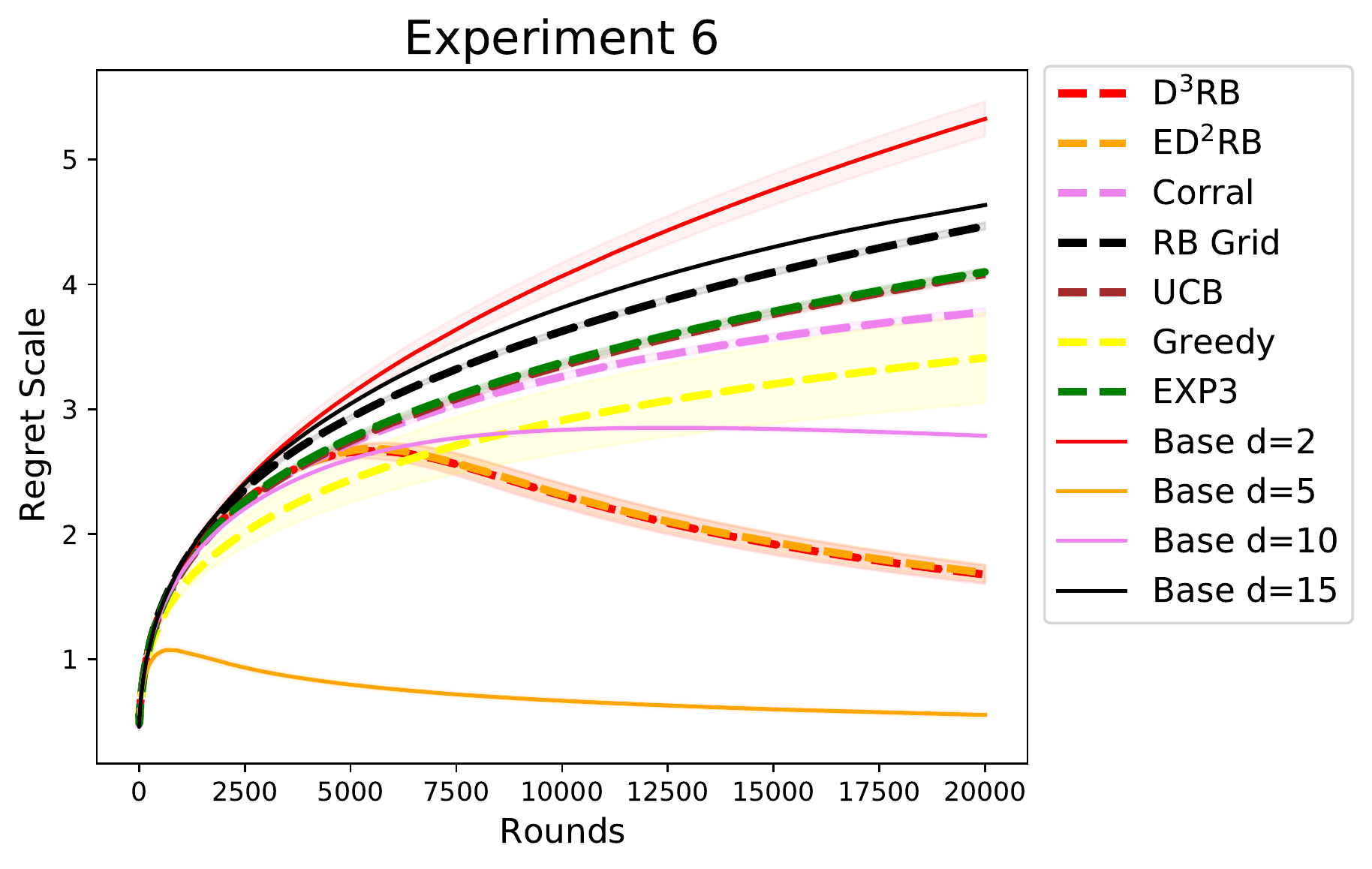}
  \vspace{-0.1in}
  \caption{Average performance comparing all meta-learners (see \pref{tab:general_overview} for reference). \textbf{Experiment $1$:}. Self model selection. See also Figure \ref{fig:expected_regret_sample_runs_appendix} in Appendix \ref{app:detailexperiments}, containing regret curves for \textbf{D$^3$RB} and \textbf{ED$^2$RB} on a single realization. \textbf{Experiment $2$:} base learners ( \textbf{UCB}) with different confidence multipliers $c$. \textbf{Experiments $3$ and $4$:} 
  Dimensionality $d = 10$. \textbf{Experiments $5$ and $6$:} True dimensionality $d^{i_\star} = 5$ and maximal dimensionality $d_M = 15$. In \textbf{Experiments} $3$ and $5$ the action set is the unit sphere. In \textbf{Experiments} $4$ and $6$ the contexts $x_t$ are $10$ actions sampled uniformly from the unit sphere. }
  \label{fig:expected_regret_sample_runs}
  \vspace{-2mm}
\end{figure*}

\section{EXPERIMENTS}\label{sec:experiments}
\vspace{-0.1in}
We evaluate our algorithms on several synthetic benchmarks (environments, base-learners and model selection tasks), and compare their performance against existing meta-learners. For all details of the experimental setup and additional results, see \pref{app:experimental_details}.

These experiments are mostly intended to validate the theory and as a companion to our theoretical results. 
In these experiments, we vary the parameters that we expect to be most important for model selection %
Varying the difficulty of the learning environment itself is something that should mostly be absorbed by base-learners, for example, by choosing base learners operating on more powerful function classes than we do here. Yet, it is important to observe that the meta-algorithms are fairly oblivious to the difficulty of the environment. All that matters here is the regret profile of the base learners. In our experiments, we therefore decided to explore the landscape of model selection by varying the nature of the model selection task itself (dimension, self model selection, and confidence scaling) while keeping the underlying environments fairly simple. 

\textbf{Environments and base-learners:} As the first environment, we use a simple 5-armed multi-armed bandit problem (\textbf{MAB}) with standard Gaussian noise. We then use two linear bandit settings, as also described in \pref{sec:running_example}: linear bandits with stochastic rewards, either with a stochastic context (\textbf{CLB}) or without (\textbf{LB}). As base learners, we
 use \textbf{UCB} for the MAB environment (see also \pref{sec:running_example}) and Linear Thompson (\textbf{LinTS}) sampling \citep{abeille2017linear} for the LB and CLB setting.

\textbf{Model selection task:} We consider 3 different model selection tasks. All the results are reported in~\pref{fig:expected_regret_sample_runs}. In the first, \textbf{conf} (``confidence"), we vary the explore-exploit trade-off in the base learners. For UCB, different base learners correspond to different settings of $c$, the confidence scaling that multiplies the exploration bonus. Analogously, for \textbf{LinTS}, we vary the scale $c$ of the parameter perturbation. 
For the second task \textbf{dim} (``dimension"), we vary the number of dimensions $d_i$ the base learner considers when choosing the action (see second and third example in \pref{sec:running_example}, as well as \pref{fig:expected_regret_sample_runs} for results). Finally, we also consider a ``\textbf{self}'' task, where all base learners are copies of the same algorithm. %

\begin{table*}[t]
\caption{Comparison of meta learners: cumulative regret (averaged over 100 repetitions $\pm 2 \times$standard error) at the end of the sequence of rounds. In bold is the best performer for each environment.
}
\label{tab:general_overview}
\hspace{-0.25cm}
\scalebox{0.85}{
\bgroup
\def\arraystretch{1.1}
\begin{tabular}{|lccc| c| c | c | c | c | c | c |}
\hline
&Env. &Learners & Task & D$^3$RB &  ED$^2$RB & Corral & RB Grid &  UCB &Greedy & EXP3 \\
\hline
 1.& MAB &  UCB &  self &   {$\bf 431 \pm 182 $}  &  {$560 \pm 240$} &  {$5498 \pm 340$}  &  {$6452 \pm 230$} & {$574 \pm 34$} &  {$6404 \pm 1102 $} & {$5892 \pm 356$} \\
 2.& MAB &  UCB &  conf &   $1608 \pm 198$ &  $1413 \pm 208$ &  $2807 \pm 138$  &  $3452 \pm 110$ & $\bf 918 \pm 98$ &  $2505 \pm 362$ &  $3007 \pm 136$\\
 3.& LB &  LinTS &  conf  &   $1150 \pm 134 $ &  $\bf 1135 \pm 148$ &  $2605 \pm 38$  &  $3169 \pm 66$ & $3052 \pm 36$ &  $2553 \pm 302$ &  $2491 \pm 36$\\
 4.& CLB &  LinTS &  conf &  $411 \pm 100$ &  $\bf 406 \pm 94$ &  $1632 \pm 30$  &  $1073 \pm 184$ & $1644 \pm 160$&  $991 \pm 298$&  $1086 \pm 70$ \\
 5.& LB &  LinTS &  dim & $1733 \pm 230$ & $\bf 1556 \pm 198$ &  $3166 \pm 26$  &  $4223 \pm 40$ & $3932 \pm 16$ &  $3385 \pm 306$&  $3315 \pm 20$\\
 6.& CLB &  LinTS &  dim & $\bf2347 \pm 102$ & $ 2365 \pm 96$ &  $5294 \pm 44$  &  $6258 \pm 38$ & $5718 \pm 50$ &  $4778 \pm 506$&  $5742 \pm 46$\\
 \hline
\end{tabular}
\egroup
}
\vspace{-2mm}
\end{table*}

\textbf{Meta-learners:} We evaluate both our algorithms, \textbf{D$^3$RB} and \textbf{ED$^2$RB}, from \pref{alg:balancing}. We compare them against the \textbf{Corral} algorithm~\citep{agarwal2017corralling} with the stochastic wrapper from \citet{pacchiano2020model}, as a representative for FTRL-based meta-learners. We also evaluate  Regret Balanncing from \cite{pacchiano2020model, pmlr-v139-cutkosky21a} with several copies of each base learner, each with a different candidate regret bound, selected on an exponential grid (\textbf{RB Grid}). We also include in our list of competitors three popular algorithms, the \textbf{Greedy} algorithm (always selecting the best base learner so far with no exploration), \textbf{UCB} \citep{10.1023/A:1013689704352} and \textbf{EXP3} \citep{doi:10.1137/S0097539701398375}. These are legitimate choices as meta-algorithms, but either they do not come with theoretical guarantees in the model selection setting (UCB, Greedy) or enjoy worse guarantees \citep{pacchiano2020model}.

\textbf{Discussion.} An overview of our results can be found in \pref{tab:general_overview}, where we report the cumulative regret of each algorithm at the end of each experiment.  \pref{fig:expected_regret_sample_runs} contains the entire learning curves (as regret scale = cumulative regret normalized by $\sqrt{T}$).
We observe that D$^3$RB and ED$^2$RB both outperform all other meta-learners on all but the second benchmark. UCB as a meta-learner performs surprisingly well in benchmarks on MABs but performs poorly on the others. 
Thus, our methods feature the smallest or close to the smallest cumulative regret among meta-learners on all benchmarks. 

Comparing D$^3$RB and ED$^2$RB, we observe overall very similar performance, suggesting that ED$^2$RB may be preferable due to its sharper theoretical guarantee. While the model selection tasks conf and dim are standard in the literature, we also included one experiment with the self task where we simply select among different instances of the same base learner. This task was motivated by our initial observation (see also \pref{fig:expected_regret_motivation}) that base learners have often a very high variability between runs and that model selection can capitalize on. Indeed, \pref{fig:expected_regret_sample_runs} shows that 
our algorithms as well as UCB achieve much smaller overall regret than the base learner. This suggests that model selection can be an effective way to turn a notoriously unreliable algorithm like the base greedy base learner (UCB with $c=0$ is Greedy) into a robust learner.

\vspace{-0.05in}
\section{CONCLUSIONS%
}
\vspace{-0.1in}
We proposed two new algorithms for model selection based on the regret balancing principle but without the need to specify candidate regret bounds a-priori. This calls for more sophisticated regret balancing mechanics that makes our methods data-driven and as an important benefit allows them to capitalize on variability in a base learner's performance. We demonstrate this empirically, showing that our methods perform well across several synthetic benchmarks, as well as theoretically. We prove that both our algorithms achieve regret that is not much worse than the realized regret of any base learner. This data-dependent guarantee recovers existing data-independent results but can be significantly tighter.

In this work, we focused on the fully stochastic setting, with contexts and rewards drawn i.i.d. We believe an extension of our results to arbitrary contexts is fairly easy by replacing the deterministic balancing with a randomized version. In contrast, covering the fully adversarial setting is likely possible by building on top of \citep{pacchiano2022best} but requires substantial innovation.

\bibliography{manual}

\onecolumn
 \appendix
 
 \section*{Appendix}
 The appendix contains the extra material that was omitted from the main body of the paper.  
 
\section{DETAILS ON FIGURE \ref{fig:expected_regret_motivation}}
 \label{app:earlyfigdetails}
We consider a $5$-armed bandit problem with rewards drawn from a Gaussian distribution with standard deviation $6$ and mean $\frac{10}{10}, \frac{6}{10}, \frac{5}{10}, \frac{2}{10}, \frac{1}{10}$ for each arm respectively.
 We use a simple UCB strategy as a base learner that chooses the next action as $\argmax_{a \in \Acal} \hat \mu(a) + c \sqrt{\frac{\ln(n(a) / \delta)}{n(a)}}$ where $n(a)$ and $\hat \mu(a)$ are the number of pulls of arm $a$ so far and the average reward observed.
The base learners use $\delta = \frac{1}{10}$ and $c = 3$ or $c = 4$ respectively.

\section{ANALYSIS COMMON TO BOTH ALGORITHMS}\label{app:common}

\begin{definition}\label{def:evente}
We define the event $\Ecal$ in which we analyze both algorithms as the event in which for all rounds $t \in \NN$ and base learners $i \in [M]$ the following inequalities hold
\begin{align*}
    - c \sqrt{n^i_t \ln \frac{ M \ln n^i_t}{\delta}} \leq \widehat u^i_t - u^i_t \leq c \sqrt{n^i_t \ln \frac{ M \ln n^i_t}{\delta}}
\end{align*}
for the algorithm parameter $\delta \in (0, 1)$ and a universal constant $c > 0$.
\end{definition}
\begin{lemma}\label{lem:highprob}
Event $\Ecal$ from \pref{def:evente} has probability at least $1 - \delta$.
\end{lemma}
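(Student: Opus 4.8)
\textbf{Proof proposal for Lemma~\ref{lem:highprob}.}
The plan is to recognize each centered partial sum $\widehat u^i_t - u^i_t$ as (the evaluation at a random index of) a bounded martingale, apply a time-uniform, iterated-logarithm Azuma--Hoeffding bound indexed by the base learner's internal clock, and finish with a union bound over the $M$ learners. Let $\mathcal{F}_{s}$ denote the $\sigma$-algebra generated by all randomness through global round $s$ (context draws, reward noise, and any internal randomization of the meta- and base learners). The meta-learner's choice $i_s = \argmin_i \phi^i_s$ and each base learner's recommended policy $\pi^{i_s}_s$ are $\mathcal{F}_{s-1}$-measurable, so the played policy $\pi_s$ is previsible; combined with $x_s \sim \mathcal{D}$ independent of $\mathcal{F}_{s-1}$ and $\mathbb{E}[r_s \mid x_s, \pi_s] = \mu^{\pi_s}(x_s)$, this gives $\mathbb{E}[r_s - v^{\pi_s} \mid \mathcal{F}_{s-1}] = \mathbb{E}_{x\sim\mathcal D}[\mu^{\pi_s}(x)] - v^{\pi_s} = 0$, with $r_s - v^{\pi_s} \in [-1,1]$.

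Fix $i \in [M]$ and let $\tau^i_k = \min\{t : n^i_t = k\}$ be the (a.s.\ finite, on the rounds it is defined) global round at which learner $i$ is played the $k$-th time; each $\tau^i_k$ is a stopping time for $(\mathcal{F}_s)$. Define the stopped filtration $\mathcal{G}^i_k = \mathcal{F}_{\tau^i_k}$ and the process $S^i_k = \sum_{j=1}^{k}(r_{\tau^i_j} - v^{\pi_{\tau^i_j}})$. Since a sub-sampled bounded martingale difference sequence along an increasing sequence of stopping times is again a bounded martingale difference sequence (apply optional stopping to the martingale $\sum_{s\le \cdot}(r_s - v^{\pi_s})\mathbf 1\{i_s=i\}$), $(S^i_k)_k$ is a martingale w.r.t.\ $(\mathcal{G}^i_k)$ with increments in $[-1,1]$. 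Crucially, $\widehat u^i_t - u^i_t = S^i_{n^i_t}$ for every global round $t$ (and both sides are $0$ when $n^i_t = 0$, so the claimed inequality is then trivial).

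Next I would apply a time-uniform Hoeffding--Azuma (law-of-iterated-logarithm type) inequality to $S^i_k$: for any $\delta' \in (0,1)$, with probability at least $1-\delta'$, $|S^i_k| \le c'\sqrt{k\,\ln(\ln(k)/\delta')}$ for all $k \ge 1$ simultaneously, where $c'$ is a universal constant and $\ln(\cdot)$ is read as $1 \vee \ln(\cdot)$ to absorb small $k$. This can be quoted from the literature on time-uniform martingale concentration, or proved by the standard peeling/stitching argument: partition the index range into dyadic blocks $k \in [2^m, 2^{m+1})$; on block $m$, Doob's maximal inequality combined with Azuma--Hoeffding gives $\max_{k < 2^{m+1}}|S^i_k| \le \sqrt{2^{m+2}\ln(2/\delta_m)}$ with probability at least $1-\delta_m$; choosing $\delta_m \propto \delta'/(m+1)^2$ makes $\sum_m \delta_m \le \delta'$ while $\ln(1/\delta_m) = O(\ln m + \ln(1/\delta'))$, so on block $m$ (where $k \asymp 2^m$, hence $\ln\ln k \asymp \ln m$) the bound is $O(\sqrt{k(\ln\ln k + \ln(1/\delta'))}) = O(\sqrt{k\ln(\ln k/\delta')})$, and a union over $m$ concludes.

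Finally, set $\delta' = \delta/M$ and union bound over $i \in [M]$: with probability at least $1-\delta$, for all $i$ and all $k$, $|S^i_k| \le c\sqrt{k\,\ln(M\ln(k)/\delta)}$ for a universal constant $c$ (absorbing $c'$ and the peeling constants). Instantiating $k = n^i_t$ at each round $t$ and recalling $\widehat u^i_t - u^i_t = S^i_{n^i_t}$ yields exactly the two-sided bound defining $\mathcal{E}$ in \pref{def:evente}. I expect the only nontrivial step to be the time-uniform control with the correct $\sqrt{n\ln\ln n}$ rate: a naive per-round union bound would cost an extra $\ln T$ factor and would not be anytime, so the peeling step (or invoking an off-the-shelf iterated-logarithm martingale bound) is the technical heart, while the martingale identification, the sub-sampling argument, and the final union bound are routine.
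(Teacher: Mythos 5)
Your proposal is correct and follows essentially the same route as the paper: identify $\widehat u^i_t - u^i_t$ as a bounded martingale (the paper keeps the indicator-weighted sum on the global clock rather than re-indexing via stopping times, but this is equivalent), apply a time-uniform law-of-iterated-logarithm Hoeffding bound, and union bound over the $M$ learners with $\delta \mapsto \delta/M$. The only difference is that the paper imports the uniform concentration step directly from \citet{howard2018uniform} (Hoeffding~I with variance process $n^i_t/4$ and their boundary choice), whereas you additionally sketch a self-contained dyadic peeling proof of that inequality; both are fine.
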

\begin{proof}

Consider a fixed $i \in [M]$ and $t$ and write 
\begin{align*}
    \widehat u^i_t - u^i_t 
    &= \sum_{\ell = 1}^t \indicator{i_\ell = i} \left(r_t - v^{\pi_t}\right)
    &= \sum_{\ell = 1}^t \indicator{i_\ell = i} \left(r_\ell - \mathbb E[r_\ell | \pi_\ell] \right)
\end{align*}
Let $\Fcal_t$ be the sigma-field induced by all variables up to round $t$ before the reward is revealed, i.e., $\Fcal_t = \sigma \left( \{x_\ell, \pi_\ell, i_\ell\}_{\ell \in [t-1]} \cup \{x_t, \pi_t, t_t\}\right)$.
Then, $X_\ell = \indicator{i_\ell = i} \left(r_t - \mathbb E[r_t | \pi_t] \right) \in [-1, +1]$ is a martingale-difference sequence w.r.t. $\Fcal_\ell$. We will now apply a Hoeffding-style uniform concentration bound from \citet{howard2018uniform}.
Using the terminology and definition in this article, by case Hoeffding~I in Table~4, the process  $S_t = \sum_{\ell=1}^t X_\ell$ is sub-$\psi_N$ with variance process $V_t = \sum_{\ell=1}^t \indicator{i_\ell = i}/ 4$.
Thus by using the boundary choice in Equation~(11) of \citet{howard2018uniform}, we get
\begin{align*}
    S_t & \leq 1.7 \sqrt{V_t \left( \ln \ln(2 V_t) + 0.72 \ln(5.2 / \delta) \right) }  = 0.85\sqrt{n^i_t\left( \ln \ln(n^i_t/2) + 0.72 \ln(5.2 / \delta)\right)}
\end{align*}
for all $k$ where $V_k \geq 1$ with probability at least $1 - \delta$.
Applying the same argument to $-S_k$ gives that
\begin{align*}
    \left|  \widehat u^i_t - u^i_t  \right|
    \leq 3 \vee 0.85\sqrt{n^i_t\left( \ln \ln(n^i_t /2) + 0.72 \ln(10.4 / \delta)\right)}
\end{align*}
holds with probability at least $1 - \delta$ for all $t$.

We now take a union bound over $i \in [M]$ and rebind $\delta \rightarrow \delta / M$. Then picking the absolute constant $c$ sufficiently large gives the desired statement.
\end{proof}

\begin{lemma}[Balancing potential lemma]\label{lem:balancing_abstract_lemma}
For each $i \in [M]$, let $F_i: \mathbb{N}\cup\{0\} \rightarrow \mathbb{R}_+$ be a nondecreasing potential function that does not increase too quickly, i.e., 
\begin{align*}
F_i(\ell) \leq F_i(\ell+1) \leq \alpha \cdot F_i(\ell) \qquad \forall \ell \in \NN \cup \{0\} %
\end{align*}
and that $0<F_i(0) \leq \alpha \cdot F_j(0)$ for all $i, j \in [M]^2$.
Consider a sequence $(i_t)_{t \in \NN}$ such that  $i_t = \argmin_{i \in [M]} F_i(n^i_{t-1})$ and $n^i_t = \sum_{\ell = 1}^{t} \mathbf{1}\{i_\ell = i\}$, i.e., $i_t \in [M]$ is always chosen as the smallest current potential. Then, for all $t \in \NN$
\begin{equation*}
    \max_{i \in [M]} F_i(n^i_t)  \leq \alpha \cdot \min_{j \in [M]} F_j(n^j_t). 
\end{equation*}
\end{lemma}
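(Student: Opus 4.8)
The plan is to argue by induction on $t$, taking as base case $t = 0$. Since $n^i_0 = 0$ for every $i$, the desired inequality at $t = 0$ reads $\max_{i \in [M]} F_i(0) \le \alpha \min_{j \in [M]} F_j(0)$, which is exactly the hypothesis $0 < F_i(0) \le \alpha F_j(0)$ for all $i, j$. The inductive step will then cover every $t \in \NN$.

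\textbf{Inductive step.} Assume $\max_{i \in [M]} F_i(n^i_{t-1}) \le \alpha \min_{j \in [M]} F_j(n^j_{t-1})$. Only the selected learner's count changes: $n^{i_t}_t = n^{i_t}_{t-1} + 1$ and $n^i_t = n^i_{t-1}$ for $i \ne i_t$. The key observation is that, by the selection rule, $F_{i_t}(n^{i_t}_{t-1}) = \min_{j} F_j(n^j_{t-1})$, so the only coordinate that moves, $F_{i_t}(n^{i_t}_t) = F_{i_t}(n^{i_t}_{t-1} + 1)$, is sandwiched by monotonicity and slow growth: $F_{i_t}(n^{i_t}_{t-1}) \le F_{i_t}(n^{i_t}_t) \le \alpha F_{i_t}(n^{i_t}_{t-1})$. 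From the lower bound I get $\min_j F_j(n^j_t) \ge \min_j F_j(n^j_{t-1}) = F_{i_t}(n^{i_t}_{t-1})$, because the coordinates $j \ne i_t$ are unchanged and the coordinate $i_t$ only increased. From the upper bound, together with the induction hypothesis, I bound the new maximum: the unchanged coordinates satisfy $F_i(n^i_t) = F_i(n^i_{t-1}) \le \alpha \min_j F_j(n^j_{t-1})$, and the changed one satisfies $F_{i_t}(n^{i_t}_t) \le \alpha F_{i_t}(n^{i_t}_{t-1}) = \alpha \min_j F_j(n^j_{t-1})$. Hence $\max_{i} F_i(n^i_t) \le \alpha \min_j F_j(n^j_{t-1}) \le \alpha \min_j F_j(n^j_t)$, closing the induction.

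\textbf{Main difficulty.} I do not expect a real obstacle: this is a structural robustness property that holds by design of the greedy rule $i_t \in \argmin_i F_i(n^i_{t-1})$. The one point requiring a little care is the update of the minimum — one must check that incrementing the count of the current argmin cannot turn some other coordinate into a global minimum in a way that violates the bound — but monotonicity of each $F_i$ disposes of this at once. Ties in the $\argmin$ are harmless, since $F_{i_t}(n^{i_t}_{t-1}) = \min_j F_j(n^j_{t-1})$ holds for any chosen minimizer.
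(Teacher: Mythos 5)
Your proof is correct and follows essentially the same route as the paper's: induction on $t$, using the fact that the selection rule makes the updated coordinate the current minimizer, so the slow-growth condition caps its new value at $\alpha\min_j F_j(n^j_{t-1})$, while monotonicity ensures the minimum cannot decrease. If anything your version is slightly cleaner, since bounding every coordinate directly by $\alpha \min_j F_j(n^j_{t-1})$ avoids the paper's case split on whether $i_t$ also attains the maximum.
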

\begin{proof}
Our proof works by induction over $t$. At $t = 1$, we have $n^i_0 = 0$ for all $i \in [M]$ and thus, by assumption, the statement holds. Assume now the statement holds for $t$. 
Notice that since $n^i_t$ and $F_i$ are non-decreasing, we have for all $i \in [M]$
\begin{equation*}
     \min_i F_i(n^i_t) \geq \min_i F_i(n^i_{t-1})).
\end{equation*}
Further, for all $i \neq i_t$ that were not chosen in round $t$, we even have $F_i(n^i_{t-1}) = F_i(n^i_t)$ for all $i \neq i_t$. 
We now distinguish two cases:
\paragraph{Case $i_t \notin \argmax_i F_i(n^i_{t-1})$.} Since the potential of all $i \neq i_t$ that attain the $\max$ is unchanged, we have
\begin{align*}
    \max_i F_i(n^i_t) &= \max_i F_i(n^i_{t-1}) 
\end{align*}
and therefore $\frac{\max_i F_i(n^i_t)}{\min_j F_j(n^j_t)} \leq \frac{\max_i F_i(n^i_{t-1})}{\min_j F_j(n^j_{t-1})} \leq \alpha$.

\paragraph{Case $i_t \in  \argmax_i F_i(n^i_{t-1})$.}
Since $i_t$ attains both the maximum and the minimum, and hence all potentials are identical, we have
\begin{equation*}
     \max_i F_i(n^i_t) = F_{i_t}(n^{i_t}_t) \leq  F_{i_t}(n^{i_t}_{t-1} + 1) \leq  \alpha F_{i_t}(n^{i_t}_{t-1}) = \alpha \min_j F_{j}(n^{j}_{t-1}).
\end{equation*}
\end{proof}

\section{PROOFS FOR THE DOUBLING ALGORITHM (ALGORITHM \ref{alg:balancing}, LEFT)}
\label{app:doubling_proofs}
\begin{lemma}\label{lem:dbound}
In event $\Ecal$, for each base learner $i$ all rounds $t \in \NN$, the regret multiplier $\widehat d^i_t$ satisfies 
\begin{align*}
    \widehat d^i_t \leq 2 \bar d^i_t~.
\end{align*}
\end{lemma}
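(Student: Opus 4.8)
The plan is to prove the bound by induction on the global round index $t \ge 0$, establishing simultaneously for every base learner $i \in [M]$ that $\widehat d^i_t \le 2\bar d^i_t$. The base case $t = 0$ is immediate: $\widehat d^i_0 = d_{\min}$ by initialization and $\bar d^i_0 = d_{\min}$ (no pulls yet), so $\widehat d^i_0 = d_{\min} \le 2 d_{\min} = 2\bar d^i_0$.

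For the inductive step I would fix $t \ge 1$ and assume the claim at $t-1$. For each $i \ne i_t$ the statistics are untouched — $\widehat d^i_t = \widehat d^i_{t-1}$ and $n^i_t = n^i_{t-1}$, hence $\bar d^i_t = \bar d^i_{t-1}$ — so the bound is inherited verbatim. The only learner to worry about is $i = i_t$; write $k = n^i_t = n^i_{t-1}+1$. If the misspecification test does \emph{not} trigger, then $\widehat d^i_t = \widehat d^i_{t-1} \le 2\bar d^i_{t-1} \le 2\bar d^i_t$, where the last inequality uses that $\bar d^i_{(\cdot)}$ is nondecreasing in the number of pulls. All the real work is in the case where the test \emph{does} trigger, where $\widehat d^i_t = 2\widehat d^i_{t-1}$ and I must show $\widehat d^i_{t-1} \le \bar d^i_t$.

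To handle that case, let $j^\star$ attain the maximum on the right-hand side of the test. I would plug in the two concentration bounds defining event $\Ecal$ — the lower bound on $\widehat u^i_t$ and the upper bound on $\widehat u^{j^\star}_t$ — after which the confidence radius of $i$ on the left of the test is absorbed into $u^i_t/k$ and the confidence radius of $j^\star$ on the right is absorbed into $u^{j^\star}_t/n^{j^\star}_t$, so that the triggered inequality collapses to
\[
\frac{u^i_t}{k} + \frac{\widehat d^i_{t-1}}{\sqrt{k}} \;<\; \frac{u^{j^\star}_t}{n^{j^\star}_t}\,.
\]
Since every policy recommended by $j^\star$ has value at most $v^\star$, the right-hand side is at most $v^\star$, hence $\widehat d^i_{t-1}/\sqrt{k} < v^\star - u^i_t/k = \frac1k\sum_{\ell=1}^k \reg(\pi^i_{(\ell)})$. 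Multiplying through by $\sqrt{k}$ and invoking \pref{def:regretcoeff}, $\widehat d^i_{t-1} < \frac{1}{\sqrt{k}}\sum_{\ell=1}^k \reg(\pi^i_{(\ell)}) \le d^i_{(k)} \le \bar d^i_{(k)} = \bar d^i_t$; therefore $\widehat d^i_t = 2\widehat d^i_{t-1} < 2\bar d^i_t$, which closes the induction.

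The step I expect to require the most care is the triggered case: reading the misspecification test through event $\Ecal$ so that the concentration radii cancel cleanly, and recognizing that once they do — together with the deterministic fact $u^{j^\star}_t/n^{j^\star}_t \le v^\star$ — the test is precisely a certificate that $\widehat d^i_{t-1}$ still underestimates the realized regret scale $\frac1{\sqrt{k}}\sum_{\ell\le k}\reg(\pi^i_{(\ell)})$ at the current internal clock $k$. Everything else is just monotonicity of $\widehat d^i_{(\cdot)}$ and $\bar d^i_{(\cdot)}$ together with the single doubling step.
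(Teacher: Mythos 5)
Your proposal is correct and is essentially the paper's proof read in the contrapositive direction: the paper argues by a minimal violating round and shows that $\widehat d^i_{t-1} > \bar d^i_t$ forces the misspecification test not to trigger, while you run a direct induction and show that a triggered test forces $\widehat d^i_{t-1} < \bar d^i_t$ — both via the identical chain of inequalities (event $\Ecal$ to pass from $\widehat u$ to $u$, the bound $u^{j}_t/n^j_t \le v^\star$, and the definition of the regret coefficient). No gap; the two write-ups are logically equivalent.
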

\begin{proof}
Note that instead of showing this for all rounds $t$, we can also show this equivalently for all number $k$ of plays of base learner $i$.
If the statement is violated for base learner $i$, then there is a minimum number $k$ of plays at which this statement is violated.
Note that by definition $\bar d_{(0)}^i = d_{\min}$ and by initialization $\widehat d^i_{(0)} = d_{\min}$, hence this $k$ cannot be $0$.

Consider now the round $t$ where the learner $i$ was played the $k$-th time, i.e., the first round at which the statement was violated.
This means $\widehat d^i_t > 2 \bar d^i_t$ but $\widehat d^i_{t-1} \leq \bar 2 d^i_{t-1}$ still holds. Since $\widehat d^i_t$ can be at most $2\widehat d^i_{t-1}$, we have
$\widehat d^i_{t-1} > \bar d^i_{t}$. We will now show that in this case, the misspecification test could not have triggered and therefore $\widehat d^i_{t} = \widehat d^i_{t-1} \leq 2 \bar d^i_{t-1} \leq 2 \bar d^i_{t}$ which is a contradiction. To show that the test cannot trigger, consider the LHS of the test condition and bound it from below as
\begin{align*}
\frac{\widehat u^{i_t}_t}{n_t^{i_t}} + \frac{\widehat d^{i_{t}}_{t-1} \sqrt{n^{i_t}_t}}{n_t^{i_t}} + c\sqrt{\frac{\ln \frac{M\ln n^{i_t}_t}{\delta}}{n_t^{i_t}}}
    &\geq 
\frac{u^{i_t}_t}{n_t^{i_t}} + \frac{\widehat d^{i_{t}}_{t-1} \sqrt{n^{i_t}_t}}{n_t^{i_t}}  \tag{Event $\Ecal$}
    \\
        &\geq 
        \frac{u^{i_t}_t}{n_t^{i_t}} + \frac{\bar  d^{i_{t}}_{t} \sqrt{n^{i_t}_t}}{n_t^{i_t}} \tag{$\widehat d^{i_{t}}_{t-1} > \bar d^{i_{t}}_{t}$}
        \\
        &\geq     \frac{u^{i_t}_t + \sum_{\ell=1}^{n_t^{i_t}} \reg(\pi^{i_t}_{(\ell)})}{n_t^{i_t}} \tag{definition of $d^{i}_{t}$}
        \\
        & \geq v^\star \tag{definition of regret}
        \\
        & \geq  \frac{u^{j}_t}{n_t^{j}} \tag{definition of $v^\star$}
        \\
        & \geq \frac{u^{j}_t}{n_t^{j}} - c \sqrt{\frac{\ln \frac{M\ln n^{j}_t}{\delta}}{n_t^{j}}}.
        \tag{Event $\Ecal$}
\end{align*}
This holds for any $j \in [M]$ and thus, the test does not trigger.
\end{proof}

\begin{corollary}\label{corr:num_doublings}
In event $\Ecal$, for each base learner $i$ all rounds $t \in \NN$, the number of times the regret multiplier $\widehat d^i_t$ has doubled so far is bounded as follows:
\begin{align*}
\widehat d^i_t \leq  1 + \log_2 \frac{ \bar d^i_{t}}{d_{\min}}~.
\end{align*}
\end{corollary}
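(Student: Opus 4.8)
The plan is to read the corollary as a bound on the number $D^i_t$ of times the estimate $\widehat d^i_t$ has been doubled up through round $t$; the displayed inequality then follows immediately by unwinding the relation between $\widehat d^i_t$ and $D^i_t$. The key bookkeeping observation is that the \emph{only} operation that ever modifies $\widehat d^i_t$ in D$^3$RB is multiplication by $2$: we start from $\widehat d^i_0 = d_{\min}$, and on each round where the misspecification test triggers we replace $\widehat d^{i_t}_{t-1}$ by $2\widehat d^{i_t}_{t-1}$, leaving it unchanged otherwise. Hence after $D^i_t$ doublings we have exactly $\widehat d^i_t = 2^{D^i_t} d_{\min}$. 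As in the proof of \pref{lem:dbound}, it is cleanest to carry out this counting on base learner $i$'s internal clock (number of plays $k$) rather than on the global round index.

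First I would invoke \pref{lem:dbound}: in event $\Ecal$, for every base learner $i$ and every round $t$ we have $\widehat d^i_t \le 2\,\bar d^i_t$. Substituting the identity $\widehat d^i_t = 2^{D^i_t} d_{\min}$ gives $2^{D^i_t} d_{\min} \le 2\,\bar d^i_t$, i.e.\ $2^{D^i_t} \le 2\,\bar d^i_t / d_{\min}$, and taking base-$2$ logarithms yields $D^i_t \le 1 + \log_2\!\big(\bar d^i_t / d_{\min}\big)$, which is exactly the asserted bound (note that $\bar d^i_t \ge d_{\min}$ by \pref{def:regretcoeff}, so the right-hand side is nonnegative and the statement is meaningful).

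I expect essentially no obstacle here: the corollary is a one-line algebraic consequence of \pref{lem:dbound} together with the counting identity $\widehat d^i_t = 2^{D^i_t} d_{\min}$. The only points requiring a little care are (i) making the bookkeeping precise — that $\widehat d^i$ changes only by doubling, so $D^i_t$ is well defined and $\widehat d^i_t = 2^{D^i_t}d_{\min}$ holds exactly — and (ii) noting that the displayed inequality in the statement should be understood as bounding the number of doublings rather than $\widehat d^i_t$ itself, since it is that count which appears on the left of $1 + \log_2(\bar d^i_t/d_{\min})$ after substitution.
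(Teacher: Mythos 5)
Your proposal is correct and is exactly the intended argument: the paper states this corollary without proof as an immediate consequence of \pref{lem:dbound}, and your derivation ($\widehat d^i_t = 2^{D^i_t} d_{\min}$ combined with $\widehat d^i_t \le 2\bar d^i_t$) is precisely that one-line consequence. Your reading of the left-hand side as the doubling count rather than $\widehat d^i_t$ itself is also the right way to interpret the statement.
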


\begin{lemma}\label{lem:doubling_balanced}
The potentials in \pref{alg:balancing} (left) are balanced at all times up to a factor $3$, that is,
$\phi^i_{t} \leq 3 \phi^j_{t}$ for all rounds $t \in \NN$ and base learners $i, j \in [M]$.
\end{lemma}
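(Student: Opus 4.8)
The goal is to show that the balancing potentials $\phi^i_t = \widehat d^i_{t-1} \sqrt{n^i_{t-1}}$ in D$^3$RB stay balanced up to a factor $3$. The natural route is to invoke the abstract Balancing Potential Lemma (\pref{lem:balancing_abstract_lemma}) with the potential functions $F_i(\ell) = \widehat d^i_{(\ell)}\sqrt{\ell}$ suitably interpreted, so the bulk of the work is verifying the two hypotheses of that lemma: that each $F_i$ is nondecreasing, that it does not grow by more than a multiplicative factor $\alpha$ per step, and that the initial values are within a factor $\alpha$ of each other; then reading off $\alpha = 3$. (One also needs to match the indexing: in the algorithm the potential used for selection in round $t$ is $\phi^i_t$, which equals $F_i(n^i_{t-1})$, exactly the form in \pref{lem:balancing_abstract_lemma}.)

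First I would check the initialization: $\phi^i_1 = d_{\min}$ for all $i$, so $F_i(0) = d_{\min}$ and trivially $F_i(0) \le \alpha F_j(0)$ for any $\alpha \ge 1$. Next, monotonicity: when base learner $i_t$ is played, $n^{i_t}$ increases by one and $\widehat d^{i_t}$ either stays the same or doubles; since $\sqrt{\ell}$ is increasing in $\ell$ and $\widehat d$ is nondecreasing along the plays of a fixed learner, $F_i$ is nondecreasing. The quantitative step is the ``does not increase too quickly'' bound: in one play, $F_i(\ell) = \widehat d^i_{(\ell-1)}\sqrt{\ell-1}$ becomes $F_i(\ell+1\text{-th selection step})$... more precisely, from one selection of learner $i$ to the next, $\sqrt{n^i}$ grows to $\sqrt{n^i+1}$ — a factor at most $\sqrt{2}$ (worst case from $n^i = 1$ to $2$), and $\widehat d^i$ grows by a factor at most $2$. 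Hence $F_i$ grows by at most $2\sqrt 2 \approx 2.83 < 3$ per step. So the hypotheses hold with $\alpha = 3$, and \pref{lem:balancing_abstract_lemma} yields $\max_i \phi^i_t \le 3 \min_j \phi^j_t$, which is exactly the claim.

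The main obstacle — and the only real subtlety — is the bookkeeping around the $\sqrt{\ell}$ factor at small $\ell$: the ratio $\sqrt{\ell+1}/\sqrt{\ell}$ is largest ($=\sqrt 2$) when $\ell = 1$, so I must make sure the argument handles the first couple of plays of each learner correctly (the step from $F_i(0) = d_{\min}$ to $F_i(1) = \widehat d^i_{(1)}\sqrt{1}$, where $\widehat d^i_{(1)}$ could already be $2 d_{\min}$, giving a factor-$2$ jump, still within $3$). After that, $\sqrt{\ell+1}/\sqrt{\ell} \le \sqrt{3/2} < 1.23$, so combined with the factor-$2$ doubling we get at most $\approx 2.45$. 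In all cases the per-step blow-up is below $3$, and since the algorithm always selects the learner of currently smallest potential, the conclusion of \pref{lem:balancing_abstract_lemma} applies verbatim. I would present the verification of the three conditions as three short displayed inequalities and then simply cite \pref{lem:balancing_abstract_lemma}.
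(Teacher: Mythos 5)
Your proposal is correct and follows essentially the same route as the paper's proof: both invoke \pref{lem:balancing_abstract_lemma} with $\alpha=3$ applied to $F_i(n^i_{t-1})=\phi^i_t$, verify the initialization $F_i(0)=d_{\min}$, and bound the per-play growth by the product of the doubling factor $2$ and the ratio $\sqrt{n^i_t/(n^i_t-1)}\le\sqrt{2}$, giving $2\sqrt{2}<3$, with the first play handled separately. Your treatment of the small-$\ell$ edge cases matches the paper's case split on $n^i_t=1$ versus $n^i_t>1$.
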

\begin{proof}
We will show that \pref{lem:balancing_abstract_lemma} with $\alpha = 3$ holds when we apply the lemma to $F_i(n^i_{t-1}) = \phi^i_t$. 

First $F_i(0) = \phi^i_1 = d_{\min}$ for all $i \in [M]$ and, thus, the initial condition holds. To show the remaining condition, it suffices to show that $\phi^i_t$ is non-decreasing in $t$ and cannot increase more than a factor of $3$ per round.
If $i$ was not played in round $t$, then $\phi^i_t = \phi^i_{t-1}$ and both conditions holds.
If $i$ was played, i.e., $i = i_t$, then 
\begin{align*}
    \phi^i_t = \widehat d^i_t \sqrt{n^i_t} \leq 2\widehat d^i_{t-1} \sqrt{n^i_t} \leq \begin{cases}
    2\widehat d^i_{t-1} \sqrt{n^i_t - 1} \sqrt{\frac{n^i_t}{n^i_t - 1}} = 2 \phi^i_{t-1} \sqrt{\frac{n^i_t}{n^i_t - 1}} \leq 3 \phi^i_{t-1} & \textrm{if }n^i_t > 1\\
   2\widehat d_{\min} \sqrt{1} = \phi_{t-1}^i & \textrm{if } n^i_t = 1
    \end{cases}
\end{align*}
\end{proof}

\begin{lemma}\label{lem:base_learner_regret}
In event $\Ecal$, the regret of all base learners $i$ is bounded in all rounds $T$ as
\begin{align*}
    \sum_{k = 1}^{n^i_T} \reg(\pi^i_{(k)}) \leq \frac{6(\bar d^j_T)^2}{d_{\min}} \sqrt{n^i_T} 
                    + 6\bar d_T^j \sqrt{n_T^j} 
                    +  \left(6 c\frac{\bar d^j_T}{d_{\min}}  + 2c\right)\sqrt{n_T^i \ln \frac{M\ln T}{\delta}}+ 1 + \log_2 \frac{ \bar d^i_{T}}{d_{\min}}~,
\end{align*}
where $j \in [M]$ is an arbitrary base learner with $n^j_T > 0$.
\end{lemma}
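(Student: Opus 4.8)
The plan is to write $\sum_{k=1}^{n^i_T}\reg(\pi^i_{(k)}) = n^i_T v^\star - u^i_T$ and bound this quantity. If $i=j$ the left side is at most $\bar d^i_T\sqrt{n^i_T}$ by \pref{def:regretcoeff}, which is dominated by the claimed bound, and if $n^i_T=0$ there is nothing to prove; so I assume $i\neq j$ and $n^i_T\ge 1$. The three ingredients will be: (a) an upper bound on $v^\star$ obtained through base learner $j$ using \pref{def:regretcoeff} and event $\Ecal$; (b) a lower bound on $u^i_T$ obtained by invoking the misspecification test at the last ``good'' play of $i$; and (c) a rebalancing of the resulting $n^j$-terms into $\sqrt{n^i_T}$- and $\sqrt{n^j_T}$-scalings via the balancing property in \pref{lem:doubling_balanced}.

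First I would let $t^\star\le T$ be the last round at which $i$ is played and the misspecification test does \emph{not} trigger. If no such round exists, then every play of $i$ triggered the test and hence a doubling of $\widehat d^i$, so by \pref{corr:num_doublings} $n^i_T\le 1+\log_2\tfrac{\bar d^i_T}{d_{\min}}$, and since $\reg(\cdot)\le 1$ the claim already follows. Otherwise every play of $i$ in $(t^\star,T]$ triggers the test (else there would be a later good play), so there are at most $1+\log_2\tfrac{\bar d^i_T}{d_{\min}}$ of them by \pref{corr:num_doublings}, and therefore
\[
\sum_{k=1}^{n^i_T}\reg(\pi^i_{(k)}) \;\le\; \big(n^i_{t^\star}v^\star - u^i_{t^\star}\big) \;+\; 1 + \log_2\tfrac{\bar d^i_T}{d_{\min}},
\]
reducing the task to bounding $n^i_{t^\star}v^\star - u^i_{t^\star} = \sum_{k=1}^{n^i_{t^\star}}\reg(\pi^i_{(k)})$.

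Next, since the test did not trigger at $t^\star$, taking the $j$-term in the $\max$ (and assuming $n^j_{t^\star}>0$) gives a lower bound on $\widehat u^i_{t^\star}/n^i_{t^\star}$ in terms of $\widehat u^j_{t^\star}/n^j_{t^\star}$, the two confidence widths, and $\widehat d^i_{t^\star-1}\sqrt{n^i_{t^\star}}/n^i_{t^\star}$. On event $\Ecal$ I replace $\widehat u^j_{t^\star}$ by $u^j_{t^\star}$ and $\widehat u^i_{t^\star}$ by $u^i_{t^\star}$ at the cost of two further confidence widths, and use $u^j_{t^\star}/n^j_{t^\star}\ge v^\star - \bar d^j_{t^\star}/\sqrt{n^j_{t^\star}}$ from \pref{def:regretcoeff}; multiplying through by $n^i_{t^\star}$ yields
\[
n^i_{t^\star}v^\star - u^i_{t^\star} \;\le\; \frac{n^i_{t^\star}\,\bar d^j_{t^\star}}{\sqrt{n^j_{t^\star}}} \;+\; \widehat d^i_{t^\star-1}\sqrt{n^i_{t^\star}} \;+\; 2c\,\frac{n^i_{t^\star}}{\sqrt{n^j_{t^\star}}}\sqrt{\ln\tfrac{M\ln T}{\delta}} \;+\; 2c\sqrt{n^i_{t^\star}\ln\tfrac{M\ln T}{\delta}},
\]
using $n^i_{t^\star},n^j_{t^\star}\le T$ to simplify the log-terms. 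If instead $n^j_{t^\star}=0$, then $\phi^j_{t^\star+1}=d_{\min}$ and \pref{lem:doubling_balanced} forces $\widehat d^i_{t^\star-1}\sqrt{n^i_{t^\star}}\le 3d_{\min}$, so $n^i_{t^\star}=O(1)$ and the bound follows trivially.

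Finally I would rebalance. Since the test did not trigger at $t^\star$, $\widehat d^i_{t^\star}=\widehat d^i_{t^\star-1}$, so $\phi^i_{t^\star+1}=\widehat d^i_{t^\star-1}\sqrt{n^i_{t^\star}}$; as $i\neq j$ is played at $t^\star$ we have $\phi^j_{t^\star+1}=\widehat d^j_{t^\star}\sqrt{n^j_{t^\star}}\le 2\bar d^j_{t^\star}\sqrt{n^j_{t^\star}}$ by \pref{lem:dbound}, and \pref{lem:doubling_balanced} gives $\phi^i_{t^\star+1}\le 3\phi^j_{t^\star+1}$. Hence $\widehat d^i_{t^\star-1}\sqrt{n^i_{t^\star}}\le 6\bar d^j_{t^\star}\sqrt{n^j_{t^\star}}\le 6\bar d^j_T\sqrt{n^j_T}$, which already bounds the second term; dividing by $d_{\min}\le\widehat d^i_{t^\star-1}$ gives $\sqrt{n^i_{t^\star}}/\sqrt{n^j_{t^\star}}\le 6\bar d^j_{t^\star}/d_{\min}$, hence $n^i_{t^\star}/\sqrt{n^j_{t^\star}}\le 6\bar d^j_T\sqrt{n^i_T}/d_{\min}$, turning the first term into $6(\bar d^j_T)^2\sqrt{n^i_T}/d_{\min}$ and the third into $O(c\,\bar d^j_T/d_{\min})\sqrt{n^i_T\ln\tfrac{M\ln T}{\delta}}$. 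Using $\bar d^j_{t^\star}\le\bar d^j_T$, $n^i_{t^\star}\le n^i_T$, $n^j_{t^\star}\le n^j_T$, $n^i_{t^\star}\le T$, and choosing the universal constant $c$ large enough to absorb the $\ln\ln$ terms from \pref{def:evente}, collecting everything gives the stated bound. The main obstacle I anticipate is the bookkeeping around $t^\star$: arguing that plays of $i$ after $t^\star$ all trigger the test (so only $O(\log)$ many), relating $\phi^i_{t^\star+1}$ and $\phi^j_{t^\star+1}$ to the monotone regret coefficients at the correct internal time indices ($t^\star$ versus $T$), and cleanly disposing of the degenerate cases ($i=j$, $n^j_{t^\star}=0$, and ``$i$ never has a non-triggering play''). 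The remaining algebra, once these are in place, is routine.
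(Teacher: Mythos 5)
Your proposal is correct and follows essentially the same route as the paper's proof: reduce to the last non-triggering play of $i$ via \pref{corr:num_doublings}, express $n^i_{t^\star}v^\star - u^i_{t^\star}$ through learner $j$'s realized regret coefficient plus an empirical-value difference controlled by the failed test and event $\Ecal$, and convert the $\sqrt{n^i/n^j}$ factors using \pref{lem:doubling_balanced} together with $\widehat d^j \le 2\bar d^j$ and $\widehat d^i \ge d_{\min}$. The only differences are cosmetic (explicit handling of $i=j$ and $n^j_{t^\star}=0$, and a factor-of-two discrepancy in the confidence-width term that is absorbed into the universal constant).
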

\begin{proof}
Consider a fixed base learner $i$ and time horizon $T$, and let $t \leq T$ be the last round where $i$ was played but the misspecification test did not trigger. If no such round exists, then set $t = 0$. By \pref{corr:num_doublings}, $i$ can be played at most $1 + \log_2 \frac{\bar d^i_{T}}{d_{\min}}$ times between $t$ and $T$ and thus
\begin{align*}
   \sum_{k = 1}^{n^i_T} \reg(\pi^i_{(k)}) \leq \sum_{k = 1}^{n^i_t} \reg(\pi^i_{(k)}) + 1 + \log_2 \frac{\bar  d^i_{T}}{d_{\min}}.
\end{align*}
If $t = 0$, then the desired statement holds. Thus, it remains to bound the first term in the RHS above when $t > 0$. Since $i = i_t$ and the test did not trigger we have, for any base learner $j$ with $n^j_t > 0$,
\begin{align*}
    \sum_{k = 1}^{n^i_t} \reg(\pi^i_{(k)}) &= n^i_t v^\star - u^i_t \tag{definition of regret}\\
    &= n^i_t v^\star - \frac{n^i_t}{n^j_t}u^j_t + \frac{n^i_t}{n^j_t}u^j_t - u^i_t\\
    &=  \frac{n^i_t}{n^j_t}\left( n^j_t v^\star - u^j_t\right) + \frac{n^i_t}{n^j_t}u^j_t - u^i_t\\
    &=  \frac{n^i_t}{n^j_t}\left( \sum_{k = 1}^{n^j_t} \reg(\pi^j_{(k)})\right) + \frac{n^i_t}{n^j_t}u^j_t - u^i_t \tag{definition of regret}\\
    &\leq   \frac{n^i_t}{n^j_t}\left(d^j_t \sqrt{n^j_t}\right) + \frac{n^i_t}{n^j_t}u^j_t - u^i_t \tag{definition of regret rate}\\
    &\leq   \sqrt{\frac{n^i_t}{n^j_t}} d^j_t \sqrt{n^i_t} + \frac{n^i_t}{n^j_t}u^j_t - u^i_t.
\end{align*}
We now use the balancing condition in \pref{lem:doubling_balanced} to bound the first factor $\sqrt{n^i_t / n^j_t}$. This condition gives that $\phi^i_{t+1} \leq 3\phi^j_{t+1}$. Since both $n^j_t > 0$ and $n^i_t > 0$, we have $\phi^i_{t+1} = \widehat d^i_t \sqrt{n^i_t}$ and $\phi^j_{t+1} = \widehat d^j_t \sqrt{n^j_t}$.
Thus, we get
\begin{align}\label{eqn:dn_connection}
    \sqrt{\frac{n^i_t}{n^j_t}} = \sqrt{\frac{n^i_t}{n^j_t}} \cdot \frac{\widehat d^i_t}{\widehat d^j_t} \cdot \frac{\widehat d^j_t}{\widehat d^i_t} = \frac{\phi^i_{t+1}}{\phi^j_{t+1} } \cdot  \frac{\widehat d^j_t}{\widehat d^i_t} \leq 3  \frac{\widehat d^j_t}{\widehat d^i_t} 
    \leq 6 \frac{\bar d^j_t}{d_{\min}}.
\end{align}
Plugging this back into the expression above, we have
\begin{align*}
     \sum_{k = 1}^{n^i_t} \reg(\pi^i_{(k)}) \leq \frac{6(\bar d^j_t)^2}{d_{\min}} \sqrt{n^i_t} + \frac{n^i_t}{n^j_t}u^j_t - u^i_t.
\end{align*}
To bound the last two terms, we use the fact that the misspecification test did not trigger in round $t$. Therefore
\begin{align*}
    u^i_t &\geq \widehat u^i_t - c\sqrt{n_t^i \ln \frac{M\ln n^{i}_t}{\delta}} \tag{event $\Ecal$}\\
    &=n^i_t \left(  \frac{\widehat u^i_t}{n^i_t} + c\sqrt{\frac{\ln \frac{M\ln n^{i}_t}{\delta}}{n^i_t}} + \frac{\widehat d^i_t}{\sqrt{n^i_t}}\right) - 2c\sqrt{n_t^i \ln \frac{M\ln n^{i}_t}{\delta}} - \widehat d_t^i \sqrt{n_t^i}\\
    & \geq \frac{n^i_t}{n^j_t}\widehat u^j_t  - \sqrt{\frac{n^i_t}{n^j_t}} c\sqrt{n^i_t \ln \frac{M\ln n^{j}_t}{\delta}} - 2c\sqrt{n_t^i \ln \frac{M\ln n^{i}_t}{\delta}} - \widehat d_t^i \sqrt{n_t^i} \tag{test not triggered}
\end{align*}
Rearranging terms and plugging this expression in the bound above gives
\begin{align*}
     \sum_{k = 1}^{n^i_t} \reg(\pi^i_{(k)}) &\leq \frac{6( \bar d^j_t)^2}{d_{\min}} \sqrt{n^i_t} +  \sqrt{\frac{n^i_t}{n^j_t}} c\sqrt{n^i_t \ln \frac{M\ln n^{j}_t}{\delta}} + 2c\sqrt{n_t^i \ln \frac{M\ln n^{i}_t}{\delta}} + \widehat d_t^i \sqrt{n_t^i} \\
     &\leq \frac{6(\bar d^j_t)^2}{d_{\min}} \sqrt{n^i_t} +  6 \frac{\bar d^j_t}{d_{\min}} c\sqrt{n^i_t \ln \frac{M\ln n^{j}_t}{\delta}} + 2c\sqrt{n_t^i \ln \frac{M\ln n^{i}_t}{\delta}} + \widehat d_t^i \sqrt{n_t^i} \tag{\pref{eqn:dn_connection}}\\
          &\leq \frac{6(\bar d^j_t)^2}{d_{\min}} \sqrt{n^i_t} +  6 \frac{\bar d^j_t}{d_{\min}} c\sqrt{n^i_t \ln \frac{M\ln n^{j}_t}{\delta}} + 2c\sqrt{n_t^i \ln \frac{M\ln n^{i}_t}{\delta}} + 3\widehat d_t^j \sqrt{n_t^j} \tag{\pref{eqn:dn_connection}}\\
    &\leq \frac{6(\bar d^j_t)^2}{d_{\min}} \sqrt{n^i_t} 
                    + 3\widehat d_t^j \sqrt{n_t^j} 
                    +  \left(6 c\frac{\bar d^j_t}{d_{\min}}  + 2c\right)\sqrt{n_t^i \ln \frac{M\ln t}{\delta}} \tag{$n^i_t \leq t$}\\
    &\leq \frac{6(\bar d^j_t)^2}{d_{\min}} \sqrt{n^i_t} 
                    + 6 \bar d_t^j \sqrt{n_t^j} 
                    +  \left(6 c\frac{\bar d^j_t}{d_{\min}}  + 2c\right)\sqrt{n_t^i \ln \frac{M\ln t}{\delta}} \tag{\pref{lem:dbound}}
\end{align*}
Finally, since $t \leq T$ and therefore $\bar d^j_t \leq \bar d^j_T$ and $n^j_t \leq n^j_T$ (and similarly for $i$), the statement follows.
\end{proof}

\maindouble*
\begin{proof}
By \pref{lem:highprob}, event $\Ecal$ from \pref{def:evente} has probability at least $1 - \delta$. In event $\Ecal$, we can apply \pref{lem:base_learner_regret} for each base learner. 
Summing up the bound from that lemma gives
\begin{align*}
    \Reg(T) &\leq \sum_{i = 1}^M \left[ \frac{6(\bar d^j_T)^2}{d_{\min}} \sqrt{n^i_T} 
                    + 6 \bar d_T^j \sqrt{n_T^j} 
                    +  \left(6 c\frac{\bar d^j_T}{d_{\min}}  + 2c\right)\sqrt{n_T^i \ln \frac{M\ln T}{\delta}}+ 1 + \log_2 \frac{\bar d^i_{T}}{d_{\min}} \right]\\
&\leq 6M \bar d^j_T \sqrt{T} + M + M \log_2 \frac{\sqrt{T}}{d_{\min}} + \left[ \frac{6(\bar d^j_T)^2}{d_{\min}} 
                    +  \frac{4 \bar d^j_T}{d_{\min}} 2c\sqrt{\ln \frac{M\ln T}{\delta}}\right]\sum_{i = 1}^M\sqrt{n_T^i}\\
                    &\leq \left( 6\sqrt{M} \bar d^j_T + \frac{6(\bar d^j_T)^2}{d_{\min}}
                    + \frac{8c \bar d^j_T}{d_{\min}}\sqrt{\ln \frac{M\ln T}{\delta}}
                    \right)\sqrt{MT} + M + M \log_2 \frac{T}{d_{\min}}.
\end{align*}
Plugging in $d_{\min} \geq 1$ yields
\begin{align*}
    \Reg(T) &\leq \left( 6\sqrt{M} \bar d^j_T + 6(\bar d^j_T)^2 + 8c \bar d^j_T\sqrt{\ln \frac{M\ln T}{\delta}}\right)\sqrt{MT} + M + M \log_2 T\\
    & = O \left( \left(M \bar d^j_T + \sqrt{M}(\bar d^j_T)^2 + \bar d^j_T \sqrt{\ln \frac{M\ln T}{\delta}}\right) \sqrt{T}+ M \ln(T)\right)\\
    & = \tilde O \left(\bar d^j_T M\sqrt{T} + (\bar d^j_T)^2 \sqrt{MT}\right)~,
\end{align*}
as desired.
\end{proof}

\section{PROOFS FOR THE ESTIMATING ALGORITHM (ALGORITHM \ref{alg:balancing}, RIGHT)}
\label{app:estimating_proofs}

\begin{lemma}\label{lem:dboundest}
In event $\Ecal$, the regret rate estimate in \pref{alg:balancing} (right) does not overestimate the current regret rate, that is, for all base learners $i \in [M]$ and rounds $t \in \NN$, we have
\begin{align*}
    \widehat d^i_t \leq d^i_t.
\end{align*}
\end{lemma}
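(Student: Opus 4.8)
The plan is to show that the estimated regret coefficient $\widehat d^i_t$ never exceeds the true regret coefficient $d^i_t$ by unwinding the definition of the estimator and using the concentration event $\Ecal$. Recall that on the rounds where $i$ is played, the algorithm sets
\[
\widehat d^{i}_{t} = \max\Biggl\{d_{\min}, \,\,\sqrt{n_t^{i}} \Biggl( \max_{j \in [M]} \frac{\widehat u^{j}_t}{n_t^{j}} - c\sqrt{\frac{\ln \frac{M\ln n^{j}_t}{\delta}}{n_t^{j}}} - \frac{\widehat u^{i}_t}{n_t^{i}} - c\sqrt{\frac{\ln \frac{M\ln n^{i}_t}{\delta}}{n_t^{i}}}\Biggl)\Biggl\},
\]
and on the rounds where $i$ is not played $\widehat d^i_t = \widehat d^i_{t-1}$ while $n^i_t$, $u^i_t$, $\widehat u^i_t$ are also unchanged, so it suffices to prove the bound on the rounds where $i$ is played. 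Since $d^i_t \geq d_{\min}$ always holds by \pref{def:regretcoeff}, I only need to control the second argument of the $\max$.

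The key step is a chain of inequalities applied to $\sqrt{n_t^{i}}$ times the bracketed term. First, for each fixed $j$, event $\Ecal$ gives $\frac{\widehat u^{j}_t}{n^j_t} - c\sqrt{\frac{\ln(M\ln n^j_t/\delta)}{n^j_t}} \leq \frac{u^j_t}{n^j_t} \leq v^\star$, where the last inequality is just the definition of $v^\star$ (the average of values $v^{\pi^j_{(\ell)}}$ over plays of $j$ is at most $v^\star$). So the $\max_j$ term is at most $v^\star$. Second, again by event $\Ecal$, $-\frac{\widehat u^{i}_t}{n^i_t} - c\sqrt{\frac{\ln(M\ln n^i_t/\delta)}{n^i_t}} \leq -\frac{u^i_t}{n^i_t}$. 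Combining, the bracketed expression is at most $v^\star - \frac{u^i_t}{n^i_t} = \frac{1}{n^i_t}\sum_{\ell=1}^{n^i_t} (v^\star - v^{\pi^i_{(\ell)}}) = \frac{1}{n^i_t}\sum_{\ell=1}^{n^i_t} \reg(\pi^i_{(\ell)})$, and multiplying by $\sqrt{n^i_t}$ gives $\frac{\sum_{\ell=1}^{n^i_t} \reg(\pi^i_{(\ell)})}{\sqrt{n^i_t}} \leq d^i_t$ by the definition of the regret coefficient. Taking the max with $d_{\min}$ preserves the inequality since $d^i_t \geq d_{\min}$, which completes the argument.

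I do not expect a serious obstacle here; this lemma is essentially a direct computation. The only thing to be a little careful about is bookkeeping the difference between the internal clock index $(k)$ and the global index $t$: the quantity $d^i_t$ really means $d^i_{(n^i_t)}$, and $\sum_{\ell=1}^{n^i_t}\reg(\pi^i_{(\ell)})$ is the cumulative base-learner regret at that internal time, so I need $\frac{\widehat u^i_t}{n^i_t}$ and $\frac{u^i_t}{n^i_t}$ to be averages over exactly those $n^i_t$ plays — which they are by definition of $\widehat u^i_t$ and $u^i_t$. A minor edge case is $n^i_t = 0$, but then $\widehat d^i_t = d_{\min} = d^i_{(0)}$ by initialization, so the bound holds trivially and the induction/case split need not even invoke the estimator formula.
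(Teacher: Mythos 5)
Your proof is correct and follows essentially the same route as the paper's: reduce to the rounds where $i$ is updated, dispose of the $d_{\min}$ branch via $d^i_t \geq d_{\min}$, then use event $\Ecal$ to replace the empirical averages by true averages, bound the best average value by $v^\star$, and recognize $\sqrt{n^i_t}\,(v^\star - u^i_t/n^i_t)$ as the regret coefficient. The only cosmetic difference is that you bound each $u^j_t/n^j_t$ by $v^\star$ before taking the max over $j$, whereas the paper takes the max first; these are equivalent.
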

\begin{proof}
Note that the algorithm only updates $\widehat d^i_t$ when learner $i$ is chosen and only then $d^i_t$ changes. Further, the condition holds initially since $\widehat d^i_1 = d_{\min} \leq d^i_t$. Hence, it is sufficient to show that this condition holds whenever $\widehat d^i_t$ is updated.
The algorithm estimates $\widehat d^i_t$ as
\begin{align*}
    \widehat d^{i}_{t} = \max\left\{d_{\min}, \,\,\sqrt{n_t^{i}} \left( \max_{j \in [M]} \frac{\hat u^{j}_t}{n_t^{j}} - c\sqrt{\frac{\ln \frac{M\ln n^{j}_t}{\delta}}{n_t^{j}}}
- \frac{\hat u^{i_t}_t}{n_t^{i}} - c\sqrt{\frac{\ln \frac{M\ln n^{i}_t}{\delta}}{n_t^{i}}}\right) \right\}~.
\end{align*}
If $\widehat d^i_t \leq d_{\min}$, then the result holds since by definition $d^i_t \geq d_{\min}$. In the other case, we have
\begin{align*}
     \widehat d^{i}_{t} &= \sqrt{n_t^{i}} \left( \max_{j \in [M]} \frac{\hat u^{j}_t}{n_t^{j}} - c\sqrt{\frac{\ln \frac{M\ln n^{j}_t}{\delta}}{n_t^{j}}}
- \frac{\hat u^{i}_t}{n_t^{i}} - c\sqrt{\frac{\ln \frac{M\ln n^{i}_t}{\delta}}{n_t^{i}}}\right)\\
& \leq \sqrt{n_t^{i}} \left( \max_{j \in [M]} \frac{u^{j}_t}{n_t^{j}} - \frac{u^{i}_t}{n_t^{i}} \right)\tag{event $\Ecal$}\\
& \leq \sqrt{n_t^{i}} \left( v^\star - \frac{u^{i_t}_t}{n_t^{i}} \right)\tag{definition of optimal value $v^\star$}\\
& = \frac{n_t^i v^\star - u^i_t}{\sqrt{n_t^{i}}} = \frac{\sum_{k=1}^{n_t^i}\reg(\pi^i_{(k)})}{\sqrt{n_t^{i}}} \tag{regret definition}\\
& \leq d^i_t~, \tag{definition of $d^i_t$}
\end{align*}
as claimed.
\end{proof}

\begin{lemma}\label{lem:phibound}
In event $\Ecal$, the balancing potentials $\phi^i_t$ in \pref{alg:balancing} (right) satisfy for all $t \in \NN$ and $i \in [M]$ where $n^i_t \geq 1$
\begin{align*}
    \phi^i_{t+1} \leq d^i_t \sqrt{n^i_t}.
\end{align*}
\end{lemma}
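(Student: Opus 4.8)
\textbf{Proof plan for \pref{lem:phibound}.} The plan is to proceed by induction on $t$, exploiting the definition of the clipped potential update $\phi^{i_t}_{t+1} = \operatorname{clip}(\widehat d^{i_t}_t \sqrt{n^{i_t}_t};\,\phi^{i_t}_t,\,2\phi^{i_t}_t)$ together with the no-overestimation property \pref{lem:dboundest}, which gives $\widehat d^i_t \leq d^i_t$ in event $\Ecal$. For indices $i \neq i_t$ the potential is unchanged, so $\phi^i_{t+1} = \phi^i_t$ and $n^i_t = n^i_{t-1}$, and the claim follows immediately from the inductive hypothesis (noting the regret rate $d^i_t$ does not change on rounds where $i$ is not played). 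Hence the only work is in the round $t$ where $i = i_t$ is the chosen learner.

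For the chosen learner, the clip ensures $\phi^{i_t}_{t+1} \leq \max\{\phi^{i_t}_t, \widehat d^{i_t}_t \sqrt{n^{i_t}_t}\}$ — actually just $\phi^{i_t}_{t+1} \leq \max\{\phi^{i_t}_t,\, \widehat d^{i_t}_t\sqrt{n^{i_t}_t}\}$ since clipping to $[\phi^{i_t}_t, 2\phi^{i_t}_t]$ can only raise the value up to $\phi^{i_t}_t$ when the target is below, and never exceeds the target otherwise. The term $\widehat d^{i_t}_t \sqrt{n^{i_t}_t} \leq d^{i_t}_t \sqrt{n^{i_t}_t}$ by \pref{lem:dboundest}, so that branch is fine. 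The remaining branch requires $\phi^{i_t}_t \leq d^{i_t}_t \sqrt{n^{i_t}_t}$. Here I would split on whether $n^{i_t}_t = 1$ or $n^{i_t}_t \geq 2$. If $n^{i_t}_t = 1$, then $\phi^{i_t}_t$ is still the initial value $d_{\min}$ (this is the first time $i_t$ is played), and $d^{i_t}_t \sqrt{n^{i_t}_t} = d^{i_t}_{(1)} \geq d_{\min}$ by \pref{def:regretcoeff}, so the bound holds. If $n^{i_t}_t \geq 2$, then $i_t$ was played before at some earlier round $s < t$ with $n^{i_t}_s = n^{i_t}_t - 1 = n^{i_t}_{t-1} \geq 1$, and the inductive hypothesis applied at $s$ gives $\phi^{i_t}_{s+1} \leq d^{i_t}_s \sqrt{n^{i_t}_s}$; since the potential of $i_t$ is frozen between $s+1$ and $t$, we have $\phi^{i_t}_t = \phi^{i_t}_{s+1} \leq d^{i_t}_s \sqrt{n^{i_t}_s}$.

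The one genuine subtlety — and the step I expect to be the main obstacle — is relating $d^{i_t}_s \sqrt{n^{i_t}_s}$ to $d^{i_t}_t \sqrt{n^{i_t}_t}$, i.e., showing that the quantity $d^i_{(k)}\sqrt{k} = \sum_{\ell=1}^k \reg(\pi^i_{(\ell)}) \vee d_{\min}\sqrt{k}$ is nondecreasing in $k$. This is true because $\sum_{\ell=1}^{k+1}\reg(\pi^i_{(\ell)}) \geq \sum_{\ell=1}^k \reg(\pi^i_{(\ell)})$ (instantaneous regrets are nonnegative) and $d_{\min}\sqrt{k+1} \geq d_{\min}\sqrt{k}$, so the max of the two is nondecreasing; equivalently $d^i_{(k)}\sqrt{k}$ is exactly the running regret sum clipped below by $d_{\min}\sqrt{k}$, which is monotone. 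With $n^{i_t}_s = n^{i_t}_t - 1 < n^{i_t}_t$ we then get $d^{i_t}_s \sqrt{n^{i_t}_s} \leq d^{i_t}_t \sqrt{n^{i_t}_t}$, completing the chain $\phi^{i_t}_{t+1} \leq \max\{\phi^{i_t}_t, \widehat d^{i_t}_t\sqrt{n^{i_t}_t}\} \leq d^{i_t}_t\sqrt{n^{i_t}_t}$ and closing the induction. I would state the monotonicity of $d^i_{(k)}\sqrt{k}$ explicitly (perhaps as an observation right after invoking \pref{def:regretcoeff}) since it is used implicitly and is the crux of why the clipped potential stays below the realized regret bound.
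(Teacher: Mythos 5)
Your proposal is correct and follows essentially the same route as the paper's proof: induction on $t$, reducing to the chosen learner $i_t$, bounding the clipped potential by $\max\{\phi^{i_t}_t, \widehat d^{i_t}_t\sqrt{n^{i_t}_t}\}$, invoking \pref{lem:dboundest} for the second term, handling the base case $n^{i_t}_t=1$ via the initialization $\phi^{i_t}_1 = d_{\min}$, and closing the induction with the monotonicity of $d^i_t\sqrt{n^i_t}$ as a maximum of two nondecreasing quantities. Your explicit identification of that monotonicity as the crux matches the paper's key observation exactly.
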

\begin{proof}
If $i \neq i_t$, then $\phi^i_{t+1} = \phi^i_t$, $d^i_t = d^i_{t-1}$ and $n^i_t = n^i_{t-1}$. It is therefore sufficient to only check this condition for $i = i_t$.
By definition of the balancing potential, we have when $i = i_t$
\begin{align*}
    \phi^i_{t+1} &\leq \max\left\{ \phi^i_{t}, \widehat d^i_t \sqrt{n^i_t} \right\} \leq \max\left\{ \phi^i_{t}, d^i_t \sqrt{n^i_t}\right\}~,
\end{align*}
where the last inequality holds because of \pref{lem:dboundest}. If $n^i_t = 1$, then $\phi^i_{t} = d_{\min}$ and $d^i_t \sqrt{n^i_t} \geq d_{\min} \sqrt{1}$ by definition, and the statement holds. Otherwise, we can assume that $ \phi^i_{t} \leq d^i_{t-1} \sqrt{n^i_{t-1}}$ by induction. This gives
\begin{align*}
    \phi^i_{t+1}  \leq \max\left\{d^i_{t-1} \sqrt{n^i_{t-1}}, d^i_t \sqrt{n^i_t}\right\}.
\end{align*}
We notice that $ d^i_t \sqrt{n^i_t} = \max\{d_{\min} \sqrt{n^i_t}, \sum_{k=1}^{n^i_t} \reg(\pi^i_{(k)})\}$. Since each term inside the $\max$ is non-decreasing in $t$, $d^i_t \sqrt{n^i_t}$ is also non-decreasing in $t$, and therefore $\phi^i_{t+1}  \leq  d^i_t \sqrt{n^i_t}$, as anticipated.
\end{proof}

\begin{lemma}\label{lem:est_num_double}
In event $\Ecal$, for all $T \in \NN$ and $i \in [M]$, the number of times the balancing potential $\phi^i_t$ doubled until time $T$ in \pref{alg:balancing} (right) is bounded by
\begin{align*}
    \log_2 \left(t \max\{1, 1 / d_{\min}\} \right).
\end{align*}
\end{lemma}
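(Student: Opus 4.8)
The plan is to bound the number of doublings of $\phi^i_t$ by controlling how large $\phi^i_t$ can ever get and how small it is initially. The key structural fact, to be extracted from the clipping rule in \pref{alg:balancing} (right), is that each update either leaves $\phi^i_t$ unchanged (when $i \neq i_t$) or multiplies it by a factor in $[1,2]$; in particular a ``doubling event'' in the sense of the lemma is precisely a round in which $\phi^{i_t}_{t+1} = 2\phi^{i_t}_t$, i.e., the upper clip is active. Since the potential only ever increases and each doubling event exactly doubles it, if $D$ denotes the number of doubling events for learner $i$ up to time $t$, then $\phi^i_{t+1} \geq 2^D \phi^i_1 = 2^D d_{\min}$, so $D \leq \log_2(\phi^i_{t+1} / d_{\min})$.

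First I would formalize the ``each update multiplies by a factor in $[1,2]$'' claim directly from the definition $\phi^{i_t}_{t+1} = \operatorname{clip}(\widehat d^{i_t}_t \sqrt{n^{i_t}_t};\ \phi^{i_t}_t,\ 2\phi^{i_t}_t)$: by construction the clipped value lies in $[\phi^{i_t}_t, 2\phi^{i_t}_t]$, so the potential is nondecreasing and each doubling event genuinely contributes a factor of $2$ (not less). Next I would upper bound $\phi^i_{t+1}$ itself. Here I would invoke \pref{lem:phibound}, which gives $\phi^i_{t+1} \leq d^i_t \sqrt{n^i_t}$ whenever $n^i_t \geq 1$. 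Then I bound $d^i_t = \max\{d_{\min}, \sum_{\ell=1}^{n^i_t}\reg(\pi^i_{(\ell)})/\sqrt{n^i_t}\}$ crudely: since each instantaneous regret lies in $[0,1]$, the sum is at most $n^i_t \leq t$, so $d^i_t \sqrt{n^i_t} \leq \max\{d_{\min}\sqrt{n^i_t},\ n^i_t\} \leq \max\{d_{\min}, 1\}\, t$ (using $\sqrt{n^i_t} \leq n^i_t \leq t$). Hence $\phi^i_{t+1} \leq t\max\{1, d_{\min}\}$.

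Combining, $D \leq \log_2(\phi^i_{t+1}/d_{\min}) \leq \log_2\big(t \max\{1,d_{\min}\}/d_{\min}\big) = \log_2\big(t\max\{1/d_{\min}, 1\}\big)$, which is the claimed bound. I would also handle the trivial edge case $n^i_t = 0$ separately: then learner $i$ has never been played, no update has occurred, and the number of doublings is $0$, consistent with the bound. The main obstacle is not any hard estimate but rather making sure the bookkeeping about what exactly counts as a ``doubling'' is airtight — i.e., that a clip to the \emph{upper} endpoint is the only way the potential can grow by a full factor of two, and that \pref{lem:phibound}'s inductive hypothesis $\phi^i_t \leq d^i_{t-1}\sqrt{n^i_{t-1}}$ is available at the round in question; both follow from the monotonicity of $d^i_t\sqrt{n^i_t}$ already observed in the proof of \pref{lem:phibound}, so the argument goes through cleanly.
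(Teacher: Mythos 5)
Your proof is correct and follows essentially the same route as the paper's: both bound the number of doublings by $\log_2(\phi^i_{t+1}/d_{\min})$ using that the potential starts at $d_{\min}$, is non-decreasing, and is capped above via \pref{lem:phibound} by $d^i_t\sqrt{n^i_t} \leq \max\{d_{\min}\sqrt{n^i_t},\, n^i_t\}$, which yields the stated $\log_2(t\max\{1,1/d_{\min}\})$ bound. Your write-up is slightly more explicit about what counts as a doubling event (the upper clip being active) and about the $n^i_t=0$ edge case, but the substance is identical.
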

\begin{proof}
The balancing potential $\phi^i_t$ is non-decreasing in $t$ and $\phi^i_1 = d_{\min}$. Further, by \pref{lem:phibound}, we have
\begin{align*}
    \phi^i_{t+1} \leq d^i_t \sqrt{n^i_t} \leq  \max\left\{d_{\min} \sqrt{n^i_t} , n^i_t\right\}.
\end{align*}
Thus, the number of times $\phi^i_t$ can double is at most
\begin{align*}
    \log_2 \left( \max\left\{\sqrt{n^i_t} , \frac{n^i_t}{d_{\min}}\right\} \right) \leq \log_2 \left(t \max\{1, 1 / d_{\min}\} \right)~.
\end{align*}
\end{proof}

\begin{lemma}\label{lem:estimating_balanced}
The balancing potentials in \pref{alg:balancing} (right) are balanced at all times up to a factor $2$, that is,
$\phi^i_{t} \leq 2 \phi^j_{t}$ for all rounds $t \in \NN$ and base learners $i, j \in [M]$.
\end{lemma}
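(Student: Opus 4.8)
The plan is to invoke the abstract balancing potential lemma (\pref{lem:balancing_abstract_lemma}) with multiplier $\alpha = 2$, applied to the potential functions $F_i$ defined by $F_i(n^i_{t-1}) = \phi^i_t$. To do this, I need to verify the two hypotheses of that lemma: (i) the initial condition $0 < F_i(0) \leq \alpha \cdot F_j(0)$ for all $i,j$, and (ii) that each $F_i$ is nondecreasing and does not increase by more than a factor $\alpha = 2$ per step, i.e., $F_i(\ell) \leq F_i(\ell+1) \leq 2 F_i(\ell)$.

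The first condition is immediate: $F_i(0) = \phi^i_1 = d_{\min} > 0$ for every $i$, so $F_i(0) = F_j(0)$ and in particular $F_i(0) \leq 2 F_j(0)$. For the second condition, the key observation is the structure of the clipped update in ED$^2$RB: when $i$ is played in round $t$ we set $\phi^{i_t}_{t+1} = \operatorname{clip}(\widehat d^{i_t}_t \sqrt{n^{i_t}_t};\, \phi^{i_t}_t,\, 2\phi^{i_t}_t)$, and when $i$ is not played $\phi^i_{t+1} = \phi^i_t$. In either case $\phi^i_{t+1} \in [\phi^i_t, 2\phi^i_t]$ directly by construction of $\operatorname{clip}$ (whose output always lies in $[\phi^i_t, 2\phi^i_t]$). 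Translating back to $F_i$, this says precisely $F_i(\ell) \leq F_i(\ell+1) \leq 2 F_i(\ell)$, since incrementing the play count of $i$ by one corresponds to advancing $t$ by one on a round where $i_t = i$, and rounds where $i$ is not played leave both $n^i$ and $\phi^i$ unchanged. Note that this argument does not even require event $\Ecal$ — the clipping enforces the bound deterministically — so the lemma holds unconditionally, though stating it in event $\Ecal$ causes no harm.

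With both hypotheses checked, \pref{lem:balancing_abstract_lemma} yields $\max_{i \in [M]} F_i(n^i_t) \leq 2 \min_{j \in [M]} F_j(n^j_t)$ for all $t$, which is exactly $\max_i \phi^i_{t+1} \leq 2 \min_j \phi^j_{t+1}$; reindexing $t+1 \to t$ gives the claim $\phi^i_t \leq 2\phi^j_t$ for all $t \in \NN$ and $i,j \in [M]$. I do not anticipate a genuine obstacle here — the entire point of introducing the $\operatorname{clip}$ operator in the definition of ED$^2$RB's potential (as opposed to D$^3$RB, whose potential can jump by a factor $3$ when $n^i_t = 2$, forcing $\alpha = 3$ in \pref{lem:doubling_balanced}) was to make exactly this one-line verification go through with the tighter constant $\alpha = 2$. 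The only mild subtlety worth spelling out is the bookkeeping that maps the "per play of learner $i$" indexing used by $F_i$ to the "per global round" indexing used by $\phi^i_t$, together with the fact that on rounds where $i_t \neq i$ nothing about learner $i$ changes; this is the same correspondence already used implicitly in the proof of \pref{lem:doubling_balanced}.
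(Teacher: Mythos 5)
Your proposal is correct and follows essentially the same route as the paper: both invoke \pref{lem:balancing_abstract_lemma} with $\alpha = 2$ applied to $F_i(n^i_{t-1}) = \phi^i_t$, check the initial condition $\phi^i_1 = d_{\min}$, and observe that the $\operatorname{clip}$ operation deterministically enforces $\phi^i_t \leq \phi^i_{t+1} \leq 2\phi^i_t$. Your additional remarks on the play-count versus global-round indexing and on the fact that event $\Ecal$ is not needed are accurate elaborations of what the paper leaves implicit.
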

\begin{proof}
We will show that \pref{lem:balancing_abstract_lemma} with $\alpha = 2$ holds when we apply the lemma to $F_i(n^i_{t-1}) = \phi^i_t$. 

First $F_i(0) = \phi^i_1 = d_{\min}$ for all $i \in [M]$ and, thus, the initial condition holds. To show the remaining condition, it suffices to show that $\phi^i_t$ is non-decreasing in $t$ and cannot increase more than a factor of $2$ per round. This holds by the clipping in the definition of $\phi^i_{t+1}$ in the algorithm.
\end{proof}

\begin{lemma}\label{lem:base_learner_regret_est}
In event $\Ecal$, the regret of all base learners $i$ is bounded in all rounds $T$ as
\begin{align*}
   \sum_{k = 1}^{n^i_T} \reg(\pi^i_{(k)}) 
     \leq 
     \frac{2(d^j_t)^2}{d_{\min}} \sqrt{n^i_t} + 2 d^j_t \sqrt{n^j_t} + 2c\left(1 + \frac{2d^j_t}{d_{\min}}\right)\sqrt{n^i_t \ln \frac{M\ln t}{\delta}} + \log_2 \max\left\{T, \frac{T}{ d_{\min}}\right\} ,
\end{align*}
where $j \in [M]$ is an arbitrary base learner with $n^j_T > 0$ and $t \leq T$ is the last round where $i = i_t$ and $\phi^i_{t+1} < 2\phi^i_t$.
\end{lemma}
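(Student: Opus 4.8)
The plan is to mirror the structure of \pref{lem:base_learner_regret} from the doubling analysis, but using the sharper ingredients available for ED$^2$RB: \pref{lem:dboundest} (the estimate never overestimates $d^i_t$), \pref{lem:phibound} (the potential is bounded by $d^i_t\sqrt{n^i_t}$), \pref{lem:est_num_double} (few doublings), and \pref{lem:estimating_balanced} (potentials balanced up to factor $2$). Fix a base learner $i$ and horizon $T$, and let $t \le T$ be the last round where $i = i_t$ and the clip did not hit the upper endpoint, i.e.\ $\phi^i_{t+1} < 2\phi^i_t$; if no such round exists set $t = 0$, in which case the potential only ever doubled, so by \pref{lem:est_num_double} learner $i$ was played at most $\log_2\max\{T, T/d_{\min}\}$ times and the bound is immediate. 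Otherwise, between round $t$ and $T$ learner $i$ is chosen only on rounds where $\phi^i$ doubles, so again by \pref{lem:est_num_double} there are at most $\log_2\max\{T, T/d_{\min}\}$ such rounds, and $\sum_{k=1}^{n^i_T}\reg(\pi^i_{(k)}) \le \sum_{k=1}^{n^i_t}\reg(\pi^i_{(k)}) + \log_2\max\{T, T/d_{\min}\}$. It remains to bound the first sum.

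For the main term, write $\sum_{k=1}^{n^i_t}\reg(\pi^i_{(k)}) = n^i_t v^\star - u^i_t$ and, exactly as in \pref{lem:base_learner_regret}, insert $\pm \frac{n^i_t}{n^j_t} u^j_t$ to get
\begin{align*}
\sum_{k=1}^{n^i_t}\reg(\pi^i_{(k)}) = \frac{n^i_t}{n^j_t}\sum_{k=1}^{n^j_t}\reg(\pi^j_{(k)}) + \frac{n^i_t}{n^j_t}u^j_t - u^i_t \le \sqrt{\tfrac{n^i_t}{n^j_t}}\, d^j_t \sqrt{n^i_t} + \frac{n^i_t}{n^j_t}u^j_t - u^i_t,
\end{align*}
using $\sum_{k=1}^{n^j_t}\reg(\pi^j_{(k)}) \le d^j_t\sqrt{n^j_t}$ and $n^i_t/n^j_t \le \sqrt{n^i_t/n^j_t}\cdot\sqrt{n^i_t/n^j_t}$. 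The ratio $\sqrt{n^i_t/n^j_t}$ is controlled via the balancing lemma: since $t$ was chosen so the clip was not active, $\phi^i_{t+1} = \widehat d^i_t\sqrt{n^i_t}$ (or at least $\phi^i_{t+1} \le d^i_t\sqrt{n^i_t}$ by \pref{lem:phibound}), while $\phi^j_{t+1} \le 2\phi^i_{t+1}$ is bounded below because $\phi^j_{t+1} \ge \widehat d^j_t \sqrt{n^j_t}$ is not achievable directly — instead I use $\phi^j_{t+1} \ge \phi^j_1 \cdot(\text{lower bound})$; more carefully, since $\phi^j_{t+1} \ge d_{\min}\sqrt{n^j_t}$ is \emph{not} automatic, I follow the doubling proof and use $\phi^i_{t+1}/\phi^j_{t+1} \le 2$ together with the lower bound $\phi^j_{t+1}\ge d_{\min}\sqrt{n^j_t}$ that holds for ED$^2$RB because the potential is the clip of $\widehat d^j\sqrt{n^j}$ above $\phi^j_t \ge d_{\min}\sqrt{n^j_{\cdot}}$ inductively, yielding $\sqrt{n^i_t/n^j_t} \le 2 d^j_t/d_{\min}$ after also applying $\widehat d^i_t \ge d_{\min}$ and $\widehat d^j_t \le d^j_t$ from \pref{lem:dboundest}. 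This gives the $\frac{2(d^j_t)^2}{d_{\min}}\sqrt{n^i_t}$ term.

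For the residual $\frac{n^i_t}{n^j_t}u^j_t - u^i_t$, I invoke the defining property of round $t$: because $\phi^i_{t+1} < 2\phi^i_t$, the clip selected the value $\widehat d^i_t\sqrt{n^i_t}$, which in turn means the estimated regret coefficient $\widehat d^i_t$ equals the unclipped max expression, i.e.\ $\widehat d^i_t \ge \sqrt{n^i_t}\big(\max_j(\frac{\widehat u^j_t}{n^j_t} - c\sqrt{\cdots}) - \frac{\widehat u^i_t}{n^i_t} - c\sqrt{\cdots}\big)$ with equality unless $d_{\min}$ binds — and this plays the role that "test not triggered" played in \pref{lem:base_learner_regret}. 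Rearranging, and using event $\Ecal$ to pass between $u$ and $\widehat u$, gives $u^i_t \ge \frac{n^i_t}{n^j_t}\widehat u^j_t - \sqrt{n^i_t/n^j_t}\,c\sqrt{n^i_t\ln\frac{M\ln n^j_t}{\delta}} - 2c\sqrt{n^i_t\ln\frac{M\ln n^i_t}{\delta}} - \widehat d^i_t\sqrt{n^i_t}$, hence $\frac{n^i_t}{n^j_t}u^j_t - u^i_t \le \sqrt{n^i_t/n^j_t}\,c\sqrt{n^i_t\ln(\cdot)} + 2c\sqrt{n^i_t\ln(\cdot)} + \widehat d^i_t\sqrt{n^i_t}$. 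Bounding $\sqrt{n^i_t/n^j_t} \le 2d^j_t/d_{\min}$ again, $\widehat d^i_t\sqrt{n^i_t} = \phi^i_{t+1} \le 2\phi^j_{t+1} \le 2 d^j_t\sqrt{n^j_t}$ by \pref{lem:estimating_balanced} and \pref{lem:phibound}, and $n^i_t \le t$ inside the logarithm, collects the terms $2 d^j_t\sqrt{n^j_t} + 2c(1 + \frac{2d^j_t}{d_{\min}})\sqrt{n^i_t\ln\frac{M\ln t}{\delta}}$. Adding back the $\log_2\max\{T,T/d_{\min}\}$ from the doubling count and noting $d^j_t \le d^j_T$, $n^j_t\le n^j_T$ finishes the proof.

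\textbf{Main obstacle.} The delicate point is step three: extracting a usable inequality from the clip-not-active condition $\phi^i_{t+1} < 2\phi^i_t$. In the doubling algorithm the trigger of the misspecification test is a clean Boolean, whereas here I must argue that when the upper clip is inactive the potential equals $\widehat d^i_t\sqrt{n^i_t}$ (so the balancing and potential lemmas apply with equality), and simultaneously that $\widehat d^i_t$ in this case is exactly the max-difference estimator (not the $d_{\min}$ floor) — or handle the $d_{\min}$-floor case separately, where the bound is even easier. Getting these case distinctions to line up with the lower endpoint $\phi^i_t$ of the clip (which is needed to claim $\phi^i_{t+1} \ge d_{\min}\sqrt{n^i_t}$ and thereby $\sqrt{n^i_t/n^j_t}\le 2d^j_t/d_{\min}$) is where the "finer analysis" alluded to in \pref{sec:ED2RB} is really spent, and I expect that to be the crux of the formal write-up.
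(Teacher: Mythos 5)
Your proposal follows the paper's proof essentially step for step: the same choice of $t$ (the last round with $i = i_t$ and $\phi^i_{t+1} < 2\phi^i_t$), the same use of \pref{lem:est_num_double} to absorb the at most $\log_2(T\max\{1,1/d_{\min}\})$ plays after $t$, the same regret decomposition through learner $j$, and the same treatment of the residual $\frac{n^i_t}{n^j_t}u^j_t - u^i_t$ via the definition of $\widehat d^i_t$, event $\Ecal$, and the chain $\widehat d^i_t\sqrt{n^i_t} \le \phi^i_{t+1} \le 2\phi^j_{t+1} \le 2 d^j_t\sqrt{n^j_t}$.

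The one step where your written argument does not go through is the control of $\sqrt{n^i_t/n^j_t}$. You propose to use the lower bound $\phi^j_{t+1} \ge d_{\min}\sqrt{n^j_t}$, but that yields an \emph{upper} bound on $\sqrt{n^j_t}$, which points the wrong way when you want to bound $\sqrt{n^i_t/n^j_t}$ from above. What the argument actually needs for learner $j$ is the \emph{upper} bound $\phi^j_{t+1} \le d^j_t\sqrt{n^j_t}$, i.e., \pref{lem:phibound} applied to $j$ (valid since $n^j_t > 0$), combined with $\phi^i_{t+1} \ge \widehat d^i_t\sqrt{n^i_t} \ge d_{\min}\sqrt{n^i_t}$ from the clip-not-active condition for $i$ and the balancing bound $\phi^i_{t+1} \le 2\phi^j_{t+1}$ of \pref{lem:estimating_balanced}:
\begin{align*}
\sqrt{\frac{n^i_t}{n^j_t}} \;=\; \frac{\widehat d^i_t\sqrt{n^i_t}}{d^j_t\sqrt{n^j_t}}\cdot\frac{d^j_t}{\widehat d^i_t} \;\le\; \frac{\phi^i_{t+1}}{\phi^j_{t+1}}\cdot\frac{d^j_t}{\widehat d^i_t} \;\le\; \frac{2 d^j_t}{d_{\min}}.
\end{align*}
You do list \pref{lem:phibound} among your ingredients and mention $\widehat d^j_t \le d^j_t$, so the right pieces are in view, but the chain as you state it is inverted and would not close. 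A second, minor point: your closing remark that $d^j_t \le d^j_T$ would let you pass to time $T$ is false in general --- $d^j_t$ is not monotone in $t$, which is precisely why ED$^2$RB's guarantee is stated at the specific time $t$ defined in the lemma --- but this step is unnecessary since the claimed bound is already expressed in terms of $t$.
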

\begin{proof}
Consider fixed base learner $i$ and time horizon $T$, and let $t \leq T$ be the last round where $i$ was played and $\phi^i_t$ did not double, i.e., $\phi^i_{t+1} < 2\phi^i_t$. If no such round exists, then set $t = 0$. By \pref{lem:est_num_double}, $i$ can be played at most $\log_2 \left(T \max\{1, 1 / d_{\min}\} \right)$ times between $t$ and $T$ and thus
\begin{align*}
   \sum_{k = 1}^{n^i_T} \reg(\pi^i_{(k)}) \leq \sum_{k = 1}^{n^i_t} \reg(\pi^i_{(k)}) + \log_2 \left(T \max\{1, 1 / d_{\min}\} \right).
\end{align*}
If $t = 0$, then the desired statement holds. Thus, it remains to bound the first term above when $t > 0$. We can write the regret of base learner $i$ up to $t$ in terms of the regret of any learner $j$ with $n^j_t > 0$ as follows:
\begin{align*}
    \sum_{k = 1}^{n^i_t} \reg(\pi^i_{(k)}) &= n^i_t v^\star - u^i_t \tag{definition of regret}\\
    &= n^i_t v^\star - \frac{n^i_t}{n^j_t}u^j_t + \frac{n^i_t}{n^j_t}u^j_t - u^i_t\\
    &=  \frac{n^i_t}{n^j_t}\left( n^j_t v^\star - u^j_t\right) + \frac{n^i_t}{n^j_t}u^j_t - u^i_t\\
    &=  \frac{n^i_t}{n^j_t}\left( \sum_{k = 1}^{n^j_t} \reg(\pi^j_{(k)})\right) + \frac{n^i_t}{n^j_t}u^j_t - u^i_t \tag{definition of regret}\\
    &\leq   \frac{n^i_t}{n^j_t}\left(d^j_t \sqrt{n^j_t}\right) + \frac{n^i_t}{n^j_t}u^j_t - u^i_t \tag{definition of regret rate}\\
    &\leq   \sqrt{\frac{n^i_t}{n^j_t}} d^j_t \sqrt{n^i_t} + \frac{n^i_t}{n^j_t}u^j_t - u^i_t.
\end{align*}
We now use the balancing condition in \pref{lem:estimating_balanced} to bound the first factor $\sqrt{n^i_t / n^j_t}$. This condition gives that $\phi^i_{t+1} \leq 2\phi^j_{t+1}$. 
Since $\phi^i_{t+1} < 2\phi^i_t$ and, thus, the balancing potential was not clipped from above, we have $\phi^i_{t+1} \geq \widehat d^i_t \sqrt{n^i_t}$. Further, 
since $n^j_t > 0$ we can apply \pref{lem:phibound} to get $\phi^j_{t+1} \leq d^j_t \sqrt{n^j_t}$.
Thus, we get
\begin{align}\label{eqn:dn_connection_est}
    \sqrt{\frac{n^i_t}{n^j_t}} = \sqrt{\frac{n^i_t}{n^j_t}} \cdot \frac{\widehat d^i_t}{d^j_t} \cdot \frac{ d^j_t}{\widehat d^i_t} \leq \frac{\phi^i_{t+1}}{\phi^j_{t+1} } \cdot  \frac{d^j_t}{\widehat d^i_t} \leq 2  \frac{d^j_t}{\widehat d^i_t} 
    \leq 2 \frac{d^j_t}{d_{\min}}.
\end{align}
Plugging this back into the expression above, we have
\begin{align*}
     \sum_{k = 1}^{n^i_t} \reg(\pi^i_{(k)}) \leq \frac{2(d^j_t)^2}{d_{\min}} \sqrt{n^i_t} + \frac{n^i_t}{n^j_t}u^j_t - u^i_t.
\end{align*}
To bound the last two terms, we use the regret coefficient estimate:
\begin{align*}
\frac{n^i_t}{n^j_t}u^j_t - u^i_t
&= n^i_t \left(\frac{u^j_t}{n^j_t} - \frac{u^i_t}{n^i_t} \right)\\
& \leq n^i_t \left(\frac{\hat u^j_t}{n^j_t} - \frac{\hat u^i_t}{n^i_t} \right) 
+ c\sqrt{n^i_t \ln \frac{M\ln n^{i}_t}{\delta}} + c n^i_t \sqrt{\frac{\ln \frac{M\ln n^{j}_t}{\delta}}{n_t^{j}}}
\tag{event $\Ecal$}\\
& = n^i_t \left(\frac{\hat u^j_t}{n^j_t} - c  \sqrt{\frac{\ln \frac{M\ln n^{j}_t}{\delta}}{n_t^{j}}} - \frac{\hat u^i_t}{n^i_t}  - c\sqrt{\frac{\ln \frac{M\ln n^{i}_t}{\delta}}{n_t^{i}}}\right) 
+ 2c\sqrt{n^i_t \ln \frac{M\ln n^{i}_t}{\delta}} + 2c n^i_t \sqrt{\frac{\ln \frac{M\ln n^{j}_t}{\delta}}{n_t^{j}}}\\
 & \leq \widehat d^i_t \sqrt{n^i_t} 
+ 2c\sqrt{n^i_t \ln \frac{M\ln n^{i}_t}{\delta}} + 2c n^i_t \sqrt{\frac{\ln \frac{M\ln n^{j}_t}{\delta}}{n_t^{j}}} \tag{definition of $\widehat d^i_t$}\\
& \leq \widehat d^i_t \sqrt{n^i_t} + 2c\left(1 + \sqrt{\frac{n^i_t}{n^j_t}} \right)\sqrt{n^i_t \ln \frac{M\ln t}{\delta}} \tag{$n^i_t \leq t$ and $n^j_t \leq t$}\\
& \leq \widehat d^i_t \sqrt{n^i_t} + 2c\left(1 + 2 \frac{d^j_t}{d_{\min}}\right)\sqrt{n^i_t \ln \frac{M\ln t}{\delta}} \tag{\pref{eqn:dn_connection_est}}\\
& \leq \phi^i_{t+1} + 2c\left(1 + 2 \frac{d^j_t}{d_{\min}}\right)\sqrt{n^i_t \ln \frac{M\ln t}{\delta}} \tag{$\phi^i_{t+1} \geq \widehat d^i_t \sqrt{n^i_t}$}\\
& \leq 2\phi^j_{t+1} + 2c\left(1 + 2 \frac{d^j_t}{d_{\min}}\right)\sqrt{n^i_t \ln \frac{M\ln t}{\delta}} \tag{\pref{lem:estimating_balanced}}\\
& \leq 2 d^j_t \sqrt{n^j_t} + 2c\left(1 + 2 \frac{d^j_t}{d_{\min}}\right)\sqrt{n^i_t \ln \frac{M\ln t}{\delta}}. \tag{\pref{lem:phibound}}
\end{align*}
Plugging this back into the expression above, we get the desired statement:
\begin{align*}
     \sum_{k = 1}^{n^i_T} \reg(\pi^i_{(k)}) 
     \leq 
     \frac{2(d^j_t)^2}{d_{\min}} \sqrt{n^i_t} + 2 d^j_t \sqrt{n^j_t} + 2c\left(1 + \frac{2d^j_t}{d_{\min}}\right)\sqrt{n^i_t \ln \frac{M\ln t}{\delta}} + \log_2 \max\left\{T, \frac{T}{ d_{\min}}\right\}~. 
\end{align*}
\end{proof}

\mainestimate*
\begin{proof}
By \pref{lem:highprob}, event $\Ecal$ from \pref{def:evente} has probability at least $1 - \delta$. In event $\Ecal$, we can apply \pref{lem:base_learner_regret_est} for each base learner. 
Summing up the bound for all base learners $i \in [M]$ with $j \in \argmin_{i' \in [M]} \max_{i} d^{i'}_{T_{i}}$ from that lemma gives
\begin{align*}
    \Reg(T) &\leq \sum_{i = 1}^M \left[ \frac{2(d^j_{T_i})^2}{d_{\min}} \sqrt{n^i_{T_i}} + 2 d^j_{T_i} \sqrt{n^j_{T_i}} + 2c\left(1 + \frac{2d^j_{T_i}}{d_{\min}}\right)\sqrt{n^i_{T_i} \ln \frac{M\ln T}{\delta}} + \log_2 \max\left\{T, \frac{T}{ d_{\min}}\right\} \right]\\
&\leq 2M d^\star_T \sqrt{T} + M \log_2 \max\left\{T, \frac{T}{ d_{\min}}\right\} + \left[ \frac{2(d^\star_T)^2}{d_{\min}} 
                    +  \frac{6 d^\star_T}{d_{\min}} c\sqrt{\ln \frac{M\ln T}{\delta}}\right]\sum_{i = 1}^M\sqrt{n_T^i}\\
                    &\leq \left( 2\sqrt{M}  d^\star_T + \frac{2( d^\star_T)^2}{d_{\min}}
                    + \frac{6c  d^\star_T}{d_{\min}}\sqrt{\ln \frac{M\ln T}{\delta}}
                    \right)\sqrt{MT} + M \log_2 \max\left\{T, \frac{T}{ d_{\min}}\right\}.
\end{align*}
Here we have used that $d^j_{T_i} \leq d^\star_T$ for all $i \in [M]$ by the definition of $d^\star_T$.
Plugging in $d_{\min} \geq 1$ gives 
\begin{align*}
    \Reg(T) &\leq \left( 2\sqrt{M} d^\star_T + 2(d^\star_T)^2 + 6c  d^\star_T\sqrt{\ln \frac{M\ln T}{\delta}}\right)\sqrt{MT} +  M \log_2 T\\
    & = O \left( \left(M d^\star_T + \sqrt{M}( d^\star_T)^2 + d^\star_T \sqrt{M \ln \frac{M\ln T}{\delta}}\right) \sqrt{T}+ M \ln(T)\right)\\
    & = \tilde O \left( d^\star_T M\sqrt{T} + (d^\star_T)^2 \sqrt{MT}\right)~,
\end{align*}
as claimed.
\end{proof}

\section{EXPERIMENTAL DETAILS}\label{app:experimental_details}

We used a 50 core machine to run our experiments. We made use of this computing infrastructure by parallelizing our experiment runs. The experiments %
take ~12 hours to complete. 

\subsection{Meta-Learners}
We now list the meta-learners used in our experiments.

\begin{algorithm}[H]
\textbf{Input:} $M$ base learners, learning rate $\eta$.  \\
Initialize: $\gamma = 1/T, \beta = e^{\frac{1}{\ln T}}, \eta_{1,j} = \eta, \rho^j_{1} = 2M, \underline{p}^j_{1} = \frac{1}{ \rho^j_{1}},  {p}^j_1= 1/M$ for all $j \in [M]$.\\ 
\For{$t = 1, \cdots, T$}{
Sample $i_t \sim p_t$. \\
Receive feedback $r_t$ from base learner $i_t$. \\ 
Update $p_{t}$, $\eta_t$, $\underline{p}_t$ and $\rho_t$ to $p_{t+1}$, $\eta_{t+1}$, $\underline{p}_{t+1}$ and $\rho_{t+1}$ using $\mathsf{CORRAL-Update}$ Algorithm~\ref{Alg:corral_update}. \\
}
\caption{CORRAL Meta-Algorithm}
\label{Alg:corral_meta-algorithm}
\end{algorithm}

\begin{algorithm}[h]
\textbf{Input:} learning rate vector $\eta_t$, previous distribution $p_t$ and current loss $\ell_t$ \\
\textbf{Output:} updated distribution $p_{t+1}$ \\
Find $\lambda \in [\min_{j} \ell_{t,j} , \max_{j} \ell_{t,j} ]$ such that $\sum_{j=1}^M \frac{ 1}{ \frac{1}{p^i_{t}} + \eta_{t,j}(\ell_{t,j} - \lambda)   }= 1$\\
Return ${p}_{t+1}$ such that $\frac{1}{p^j_{t+1}} = \frac{1}{p^j_{t}} + \eta_{t,j}(\ell_{t,j} - \lambda)$\\
 \caption{Log-Barrier-OMD($p_t, \ell_t, \eta_t$) }
\label{Alg:log_barrier}
\end{algorithm}

\begin{algorithm}[H]
\textbf{Input:} learning rate vector $\eta_t$, distribution $p_t$, lower bound $\underline{p}_t$ and current loss $r_t$ \\
\textbf{Output:} updated distribution $p_{t+1}$, learning rate $\eta_{t+1}$ and loss range $\rho_{t+1}$ \\
Update $p_{t+1} = \text{Log-Barrier-OMD}(p_t, \frac{r_{t}}{{p}_{t,j_t}}\mathbf{e}_{j_t}, \eta_t)$. \\
Set ${p}_{t+1} = (1-\gamma)p_{t+1} + \gamma \frac{1}{M}$. 

\For{$j =1, \cdots, M$}{
\If{$ \underline{p}^j_{t} > {{p}^j_{t+1}} $}{
Set $\underline{p}^j_{t+1} = \frac{{p}^j_{t+1}}{2}, \eta_{t+1, j} = \beta\eta_{t,i}$, \\
}
\Else{
 Set $\underline{p}^j_{t+1}=\underline{p}^j_{t}, \eta_{t+1,j} = \eta_{t,i}$. \\
}
Set $\rho^j_{t+1} = \frac{1}{\underline{p}^j_{t+1}}$.\\
}
Return $p_{t+1}$, $\eta_{t+1}$, $\underline{p}_{t+1}$ and $\rho^j_{t+1}$. 
\caption{$\mathsf{CORRAL-Update}$}\label{Alg:corral_update}
\end{algorithm}

\paragraph{Corral.} We used the \textbf{Corral} Algorithm as described in~\cite{agarwal2017corralling} and~\cite{pacchiano2020model}. Since we work with stochastic base algorithms we use the Stochastic Corral version of~\cite{pacchiano2020model} where the base algorithms are updated with the observed reward $r_t$ instead of the importance sampling version required by the original \textbf{Corral} algorithm of~\cite{agarwal2017corralling}. The pseudo-code is in Algorithm~\ref{Alg:corral_meta-algorithm}. In accordance with theoretical results we set $\eta = \Theta(\frac{1}{\sqrt{T}} ) $. We test the performance of the \textbf{Corral} meta-algorithm with different settings of the initial learning rate $\eta \in \{  .1/\sqrt{T}, 1/\sqrt{T}, 10/\sqrt{T} \}  $. In the table and plots below we call them \textbf{CorralLow}, \textbf{Corral} and \textbf{CorralHigh} respectively. In \pref{tab:exp3_overview_appendix} we compare their performance on different experiment benchmarks. We see \textbf{Corral} and \textbf{CorralHigh} achieve a better formance than \textbf{CorralLow}. The performance of \textbf{Corral} and \textbf{CorralHigh} is similar.

\paragraph{EXP3.} At the beginning of each time step the \textbf{EXP3} meta-algorithm samples a base learner index $i_t \sim p_t$ from its base learner distribution $p_t$. The meta-algorithm maintains importance weighted estimator of the cumulative rewards for each base learner $R_t^{i}$ for all $i \in [M]$. After receiving feedback $r_t$ from base learner $i_t$ the importance weighted estimators are updated as $R_{t+1}^{i} = R_{t}^{i} + \mathbf{1}(i = i_t) \frac{r_t}{p_t^{i_t}}$. The distribution $p_{t+1}^i = (1-\gamma)\exp( \eta R_{t+1}^i )/\sum_{i'} \exp(\eta R_{t+1}^{i'}) + \gamma/M$ where $\eta$ is a and $\gamma$ are a learning rate and exploration parameters. In accordance with theoretical results (see for example \citep[Th.~11.1]{lattimore2020bandit}) in our experiments we set the learning rate to $\eta = \sqrt{\frac{\log(M)}{MT}}$ and set the forced exploration parameter $\gamma = \frac{0.1}{\sqrt{T}}$. We test the performance of the \textbf{EXP3} meta-algorithm with different settings of the forced exploration parameter $\gamma \in \{0, \frac{.1}{\sqrt{T}}, \frac{1}{\sqrt{T}} \}$. In \pref{tab:exp3_overview_appendix} we call them \textbf{EXP3Low}, \textbf{EXP3} and \textbf{EXP3High}. All these different variants have a similar performance.

\paragraph{Greedy.} This is a pure exploitation meta-learner. After playing each base learner at least once, the \textbf{Greedy} meta-algorithm maintains the same cumulative reward statistics $\{ \widehat u^{i}_t \}_{i \in [M]}$ as D$^3$RB and ED$^2$RB.  The base learner $i_t$ chosen at time $t$ is $i_t = \argmax_{i\in [M]} \frac{u^{i}_t}{n_t^i}$.

\paragraph{UCB.} We use the same \textbf{UCB} algorithm as described in \pref{sec:running_example}. We set the scaling parameter $c = 1$.

\paragraph{D$^3$RB and ED$^2$RB.} These are the algorithms in Algorithm \ref{alg:balancing}. We set therein $c = 1$ and $d_{\min} = 1$.

\subsection{Base Learners}
All base learners have essentially been described, except for the Linear Thompson Sampling Algorithm ($\mathrm{LinTS}$) algorithm, which was used in all our linear experiments.

In our implementation we use the algorithm described as in~\cite{abeille2017linear}. On round $t$ the Linear Thompson Sampling algorithm has played $x_1, \cdots x_{t-1} \subset \mathbb{R}^d$ with observed responses $r_1, \cdots, r_{t-1}$. The rewards are assumed to be of the form $r_\ell = x_\ell^\top \theta_\star + \xi_t$ for an unknown vector $\theta_\star$ and a conditionally zero mean random variable $\xi_t$. An empirical model of the unknown vector $\theta_\star$ is produced by fitting a ridge regression least squares estimator  $\widehat{\theta}_t = \argmin_{\theta} \lambda \| \theta\|^2 +  \sum_{\ell=1}^{t-1} ( x_\ell^\top \theta - r_\ell)^2$ for a user specified parameter $\lambda > 0$. This can be written in closed form as $\widehat{\theta}_t = \left(  \mathbf{X}^\top \mathbf{X} + \lambda \mathbb{I}\right)^{-1} \mathbf{X}^\top y$ where $\mathbf{X} \in \mathbb{R}^{t-1\times d}$ matrix where row $\ell$ equals $x_\ell$. At time $t$ a sample model is computed $\widetilde{\theta}_t= \widehat{\theta}_t + c \sqrt{d} \left(  \mathbf{X}^\top \mathbf{X} + \lambda \mathbb{I}\right)^{-1/2} \boldsymbol{\eta}_t$ where $\boldsymbol{\eta}_t \sim \mathcal{N}(\mathbf{0}, \mathbb{I})$ and $c > 0$ is a confidence scaling parameter. This is one of the parameters that we vary in our experiments. If the action set at time $t$ equals $\mathcal{A}_t$ (in the contextual setting $\mathcal{A}_t$ changes every time-step while in the fixed action set linear bandits case it ) the action $x_t = \argmax_{ x \in \mathcal{A}_t} x_t^\top \widetilde{\theta}_t$. In our experiments $\lambda = 1$ and $\theta_\star$ is set to a scaled version of the vector $(0, \cdots, d-1)$. In the detailed experiment description below we specify the precise value of $\theta_\star$ in each experiment.

\begin{figure}[t!]
\vspace{-.3cm}
\begin{center}
\hspace{-1.0cm}
\scalebox{0.85}{
\begin{tikzpicture}[level distance=1.5cm,
  level 1/.style={sibling distance=9cm},
  level 2/.style={sibling distance=5cm},
  level 3/.style={sibling distance=1.5cm},
  level 4/.style={sibling distance=1cm}]
  \node {Model Selection Experiments}
    child {node(A) {MAB}
      child {node[text width=2cm, align=center](B) {\small Self Model Selection}
            child{node[text width=1cm, align=center](C){\small Gaussian \textcolor{red}{\textbf{1}} }}
            child{node[text width=1cm, align=center](D){\small Bernoulli \textbf{B} }}
      }
      child {node[text width=2cm, align=center](E) {\small Confidence Scalings}
        child{node[text width=1cm, align=center](F){\small Gaussian \textcolor{red}{\textbf{2}}    }}
        child{node[text width=1cm, align=center](G){\small Bernoulli \textbf{A} }}
      }
    }
    child {node { Linear Bandits}
    child {node[text width=2cm, align=center] {\small Nested Dimension}
        child{node[text width=1cm, align=center](H){\small Sphere  \textcolor{red}{\textbf{5}}, \textbf{J}      }}
        child{node[text width=1cm, align=center](I){\small Hypercube \textbf{K, L}  }}
        child{node[text width=1cm, align=center](J){\small Contextual \textcolor{red}{\textbf{6}} }}
    }
      child {node[text width=2cm, align=center] {\small Confidence Scalings}
        child{node[text width=1cm, align=center](K){\small Sphere   \textcolor{red}{\textbf{3}}, \textbf{F, G}     }}
        child{node[text width=1cm, align=center](L){\small Hypercube    \textbf{C, D, E}  }}
        child{node[text width=1cm, align=center](M){\small Contextual \textcolor{red}{\textbf{4}}, \textbf{H, I}     }}
      }
    };
\end{tikzpicture}
}
\caption{Experiment Map.} \label{fig::binary_tree}
\end{center}
\end{figure}
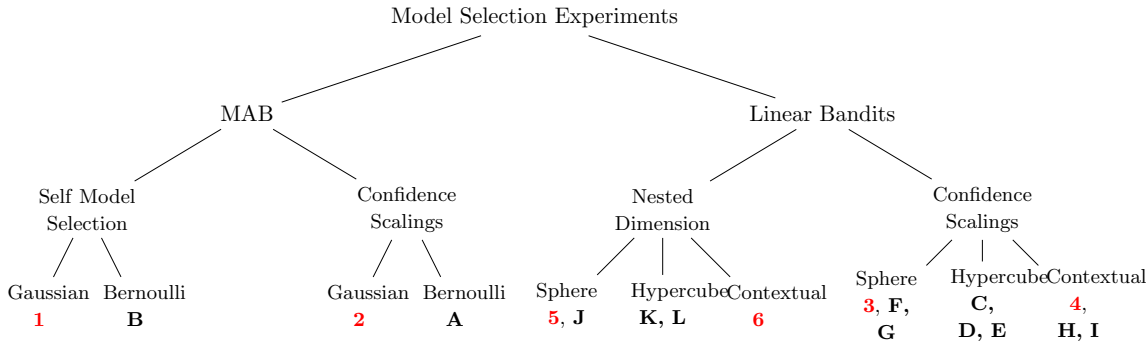

\subsection{Detailed Experiments Description}\label{app:detailexperiments}
\begin{figure}[t!]
  \centering
    \hspace{-1.5cm}\includegraphics[width=0.375\textwidth]{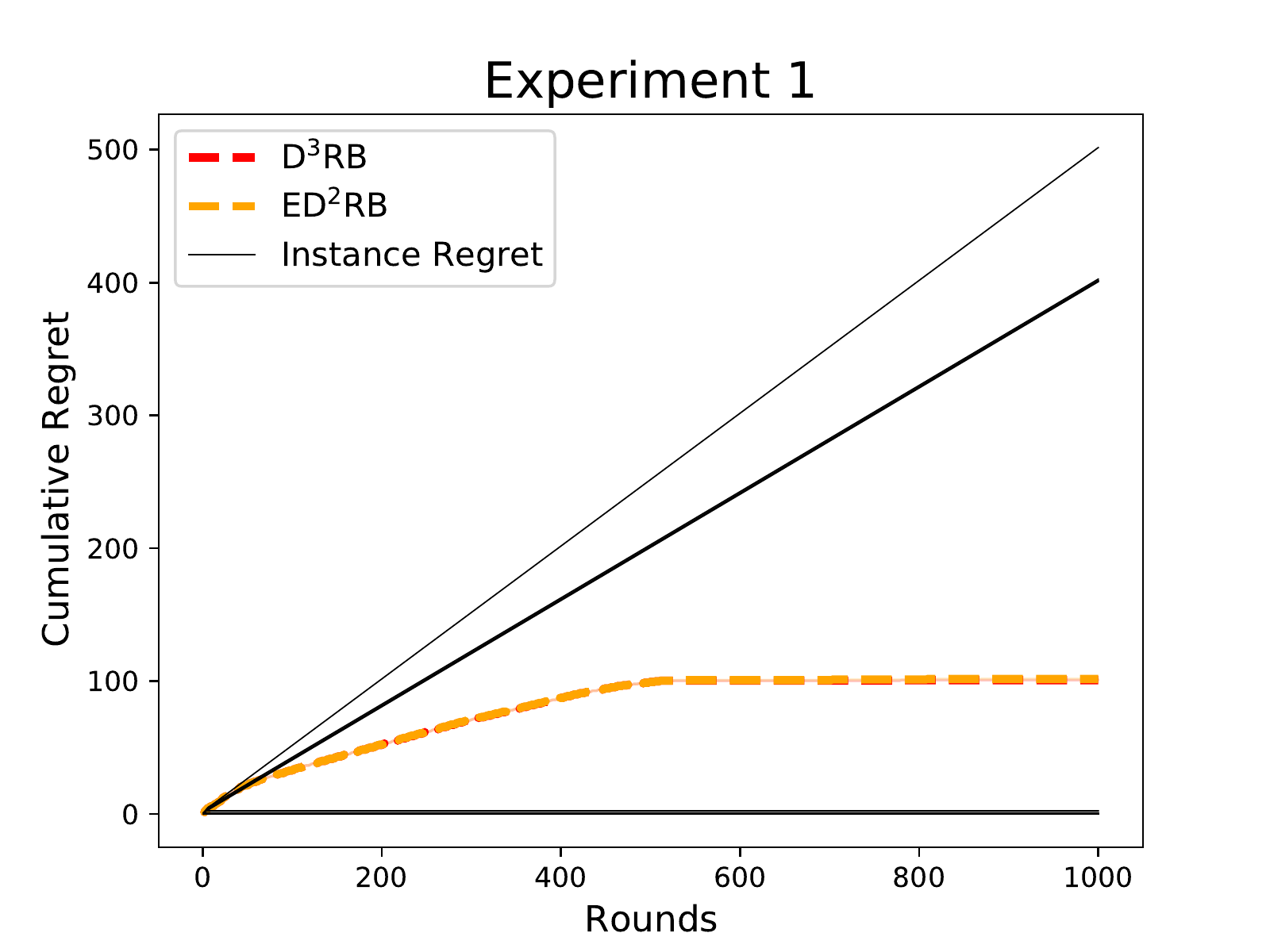}
  \vspace{-0.1in}
  \caption{Experiment $1$ (see \pref{tab:general_overview} for reference). Regret for \textbf{D$^3$RB} and \textbf{ED$^2$RB} (\pref{alg:balancing}) on a single realization.}
  \label{fig:expected_regret_sample_runs_appendix}
\end{figure}

Figure \ref{fig::binary_tree} illustrates the overall structure of our experiments. Experiments $1$ through $6$ are those also reported in the main body of the paper. The below table contains a detailed description of each experiment, together with the associated evidence in the form of learning curves (regret scale vs. rounds). Finally, Table \ref{tab:general_overview_appendix} contains the final (average) cumulative regret for each meta-learner on each experiment.

\begin{table}[htbp]
    \centering
    \begin{tabular}{|p{5cm}|p{10cm}|}
        \hline
        \textbf{Description} & \textbf{Figure} \\
        \hline
        \begin{minipage}[t]{5cm}
        \vspace{-5.7cm}
            \textbf{Experiment 1.} Self model selection for a $5$ armed Gaussian bandit problem with means $[ .5, 1, .2, .1, .6]$ and standard deviations equal to $1$. The base learners are $\mathrm{UCB}$ algorithms with a confidence scaling $c=0$. This reduces them to instances of \textbf{Greedy}. It is well known that greedy algorithms do not satisfy a sublinear expected regret bound. These results show that combining multiple instances of \textbf{Greedy} via self model selection can turn them into algorithms with a sublinear regret guarantee. We initialize $10$ \textbf{Greedy} base learners. 
        \end{minipage} &
        \begin{minipage}[t]{10cm}
            \includegraphics[width=\linewidth]{figs/expectedregret/norm_exp7_DoublingDataDrivenEstimatingDataDrivenCorralBalancingClassicUCBGreedyEXP3_T20000.pdf}
        \end{minipage} \\
        \hline
        \begin{minipage}[t]{5cm}
        \vspace{-5.7cm}
        \paragraph{Experiment 2.} Model selection for a $5$ armed Gaussian bandit problem with means $[ .5, 1, .2, .1, .6]$ and standard deviations equal to $1$ where we select among $4$ UCB base learners with confidence scalings in $\{0,4,6,20\}$. 
        \end{minipage} &
        \begin{minipage}[t]{10cm}
            \includegraphics[width=\linewidth]{figs/confidenceMAB/norm_exp7multiple_DoublingDataDrivenEstimatingDataDrivenCorralBalancingClassicUCBGreedyEXP3_T10000.pdf}
        \end{minipage} \\
        \hline
        \begin{minipage}[t]{5cm}
        \vspace{-5.7cm}
        \paragraph{Experiment 3.} Linear bandits model selection where we select among LinTS base learners with different confidence scalings. The action set is the $10$ dimensional unit sphere. The $\theta_\star$ vector equals $(0,\ldots, 9)/\|(0,\ldots, 9)\| * 5$. The base learners are LinTS instances with confidence scalings in $\{ 0,.16, 2.5, 5, 25\}$.
        \end{minipage} &
        \begin{minipage}[t]{10cm}
            \includegraphics[width=\linewidth]{figs/confidenceLinear/norm_linear1big_DoublingDataDrivenEstimatingDataDrivenCorralBalancingClassicUCBGreedyEXP3_T1000.pdf}
        \end{minipage} \\
        \hline
    \end{tabular}
    \label{tab:figures1}
\end{table}

\begin{table}[htbp]
    \centering
    \begin{tabular}{|p{5cm}|p{10cm}|}
        \hline
        \textbf{Description} & \textbf{Figure} \\
        \hline
        \begin{minipage}[t]{5cm}
        \vspace{-5.7cm}
           \paragraph{Experiment 4.} Contextual linear bandits model selection where we select among LinTS base learners with different confidence scalings. The contexts are generated by producing $10$ i.i.d. uniformly distributed vectors from the unit sphere. The ambient space dimension is $d = 10$.  The $\theta_\star$ vector is $(0,\ldots, 9)$. The base learners are LinTS instances with confidence scalings in $\{ 0,.16, 2.5, 5, 25\}$.
        \end{minipage} &
        \begin{minipage}[t]{10cm}
            \includegraphics[width=\linewidth]{figs/confidenceLinear/norm_linear3_DoublingDataDrivenEstimatingDataDrivenCorralBalancingClassicUCBGreedyEXP3_T1000.pdf}
        \end{minipage} \\
        \hline
        \begin{minipage}[t]{5cm}
        \vspace{-5.7cm}
        \paragraph{Experiment 5.} Nested linear bandits model selection where we select among different LinTS base learners with different ambient dimensions. The action set is the unit sphere and the true ambient dimension equals $5$. The $\theta_\star$ vector is $(0,1,2,3,4)$ and the base learners are LinTS instances with dimensions $d = 2, 5, 10, 15$, and confidence scaling $2$. 
        \end{minipage} &
        \begin{minipage}[t]{10cm}
            \includegraphics[width=\linewidth]{figs/nestedLinear/norm_nestedlinear1_DoublingDataDrivenEstimatingDataDrivenCorralBalancingClassicUCBGreedyEXP3_T1000.pdf}
        \end{minipage} \\
        \hline
        \begin{minipage}[t]{5cm}
        \vspace{-5.7cm}
        \paragraph{Experiment 6.} Nested contextual linear bandits model selection where we select among different LinTS base learners with different ambient dimensions. The context set is generated by sampling $10$ i.i.d. vectors from the unit sphere. The true ambient dimension equals $5$. The $\theta_\star$ vector equals $(0,1,2,3,4)/ \|  (0,1,2,3,4)  \|$ and the base learners are LinTS instances with dimensions $d = 2, 5, 10, 15$  and confidence scaling $2$.
        \end{minipage} &
        \begin{minipage}[t]{10cm}
            \includegraphics[width=\linewidth]{figs/nestedLinear/norm_nestedlinear3_DoublingDataDrivenEstimatingDataDrivenCorralBalancingClassicUCBGreedyEXP3_T20000.pdf}
        \end{minipage} \\
        \hline
    \end{tabular}
    \label{tab:figures2}
\end{table}

\begin{table}[htbp]
    \centering
    \begin{tabular}{|p{5cm}|p{10cm}|}
        \hline
        \textbf{Description} & \textbf{Figure} \\
        \hline
        \begin{minipage}[t]{5cm}
        \vspace{-5.7cm}
\paragraph{Experiment A.} Selecting among different confidence scalings in a $4$ armed Bernoulli bandit problem with mean rewards $[.1, .2, .5, .8]$. We select among $9$ \textbf{UCB} base learners with confidence scalings in $\{ 0, .08, .16, .64, 1.24, 2.5, 5, 10, 25  \}$. 
        \end{minipage} &
        \begin{minipage}[t]{10cm}
            \includegraphics[width=\linewidth]{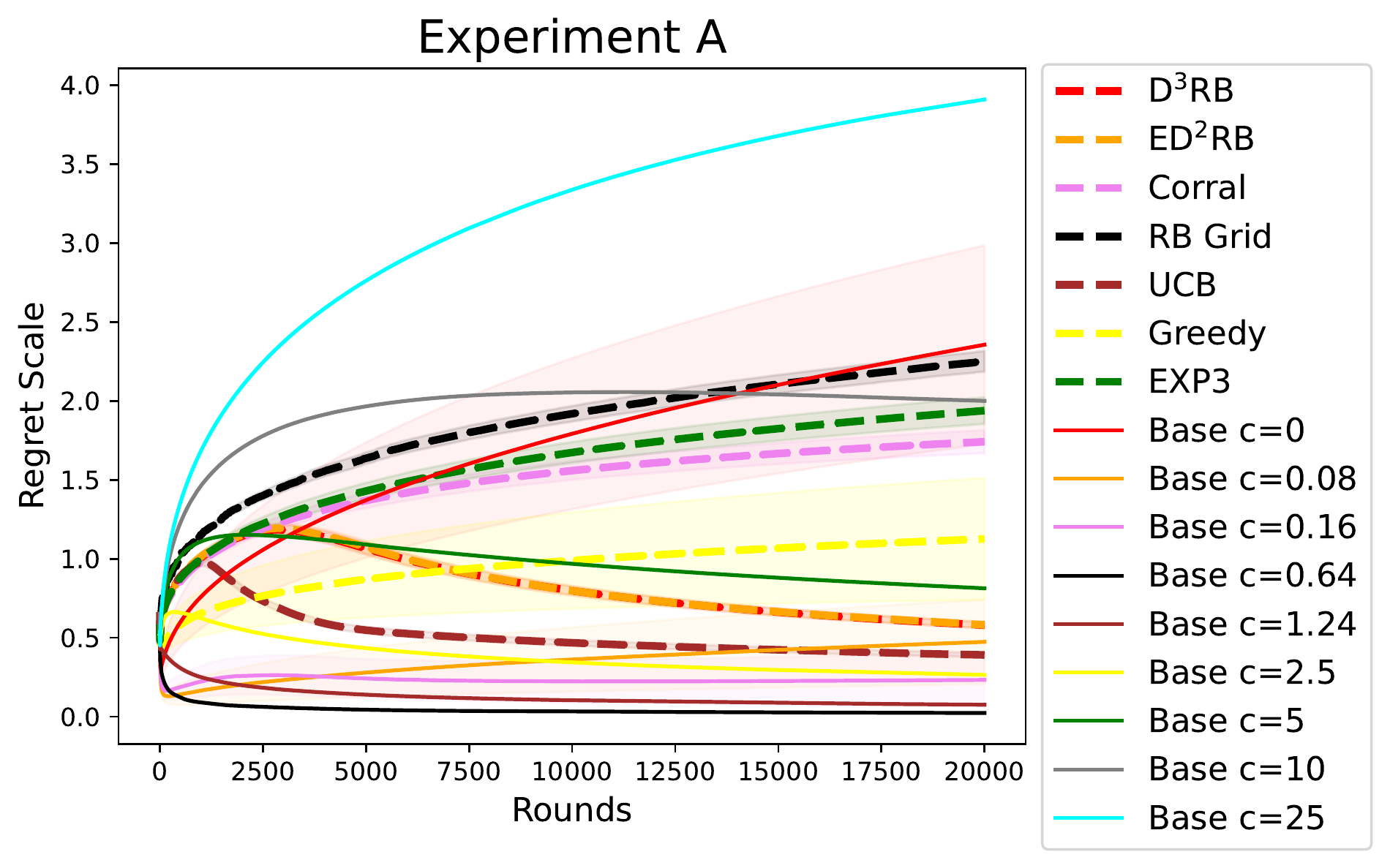}
        \end{minipage} \\
        \hline
        \begin{minipage}[t]{5cm}
        \vspace{-5.7cm}
\paragraph{Experiment B.} Selecting among different confidence scalings in a $2$ armed bandit problem with reward distributions $r \sim p_1$ and $r \sim p_2$ where samples from $p_i$ are of the form $30b_i$ where $b_i$ are two Bernoulli variables with means in $\{ .1, .2\}$. We test self-model selection among $10$ \textbf{UCB} base learners with confidence scalings $1$. 
        \end{minipage} &
        \begin{minipage}[t]{10cm}
            \includegraphics[width=\linewidth]{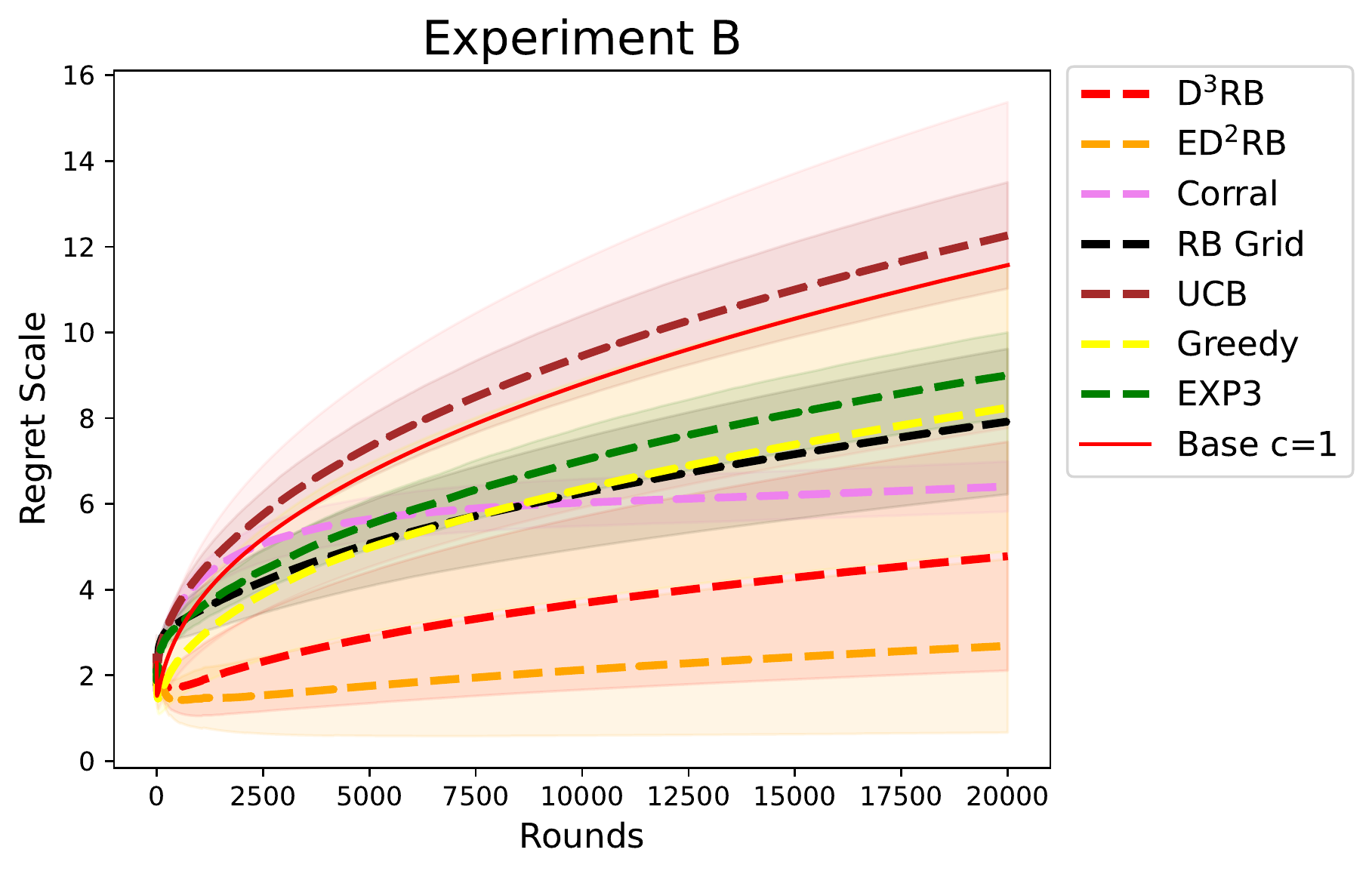}
        \end{minipage} \\
        \hline
        \begin{minipage}[t]{5cm}
        \vspace{-5.7cm}
\paragraph{Experiment C.} Linear bandit model selection where we select among LinTS base learners with different confidence scalings. The action set equals the $d = 5$ dimensional hypercube (i.e., the arm set equals $\{-1,1\}^d$ ). The $\theta_\star$ vector equals $(0,\ldots, 4)/\|(0,\ldots, 4)\| * 5$. The base learners are LinTS instances with confidence scalings in $\{ 0,.16, 2.5, 5, 25\}$. 
        \end{minipage} &
        \begin{minipage}[t]{10cm}
            \includegraphics[width=\linewidth]{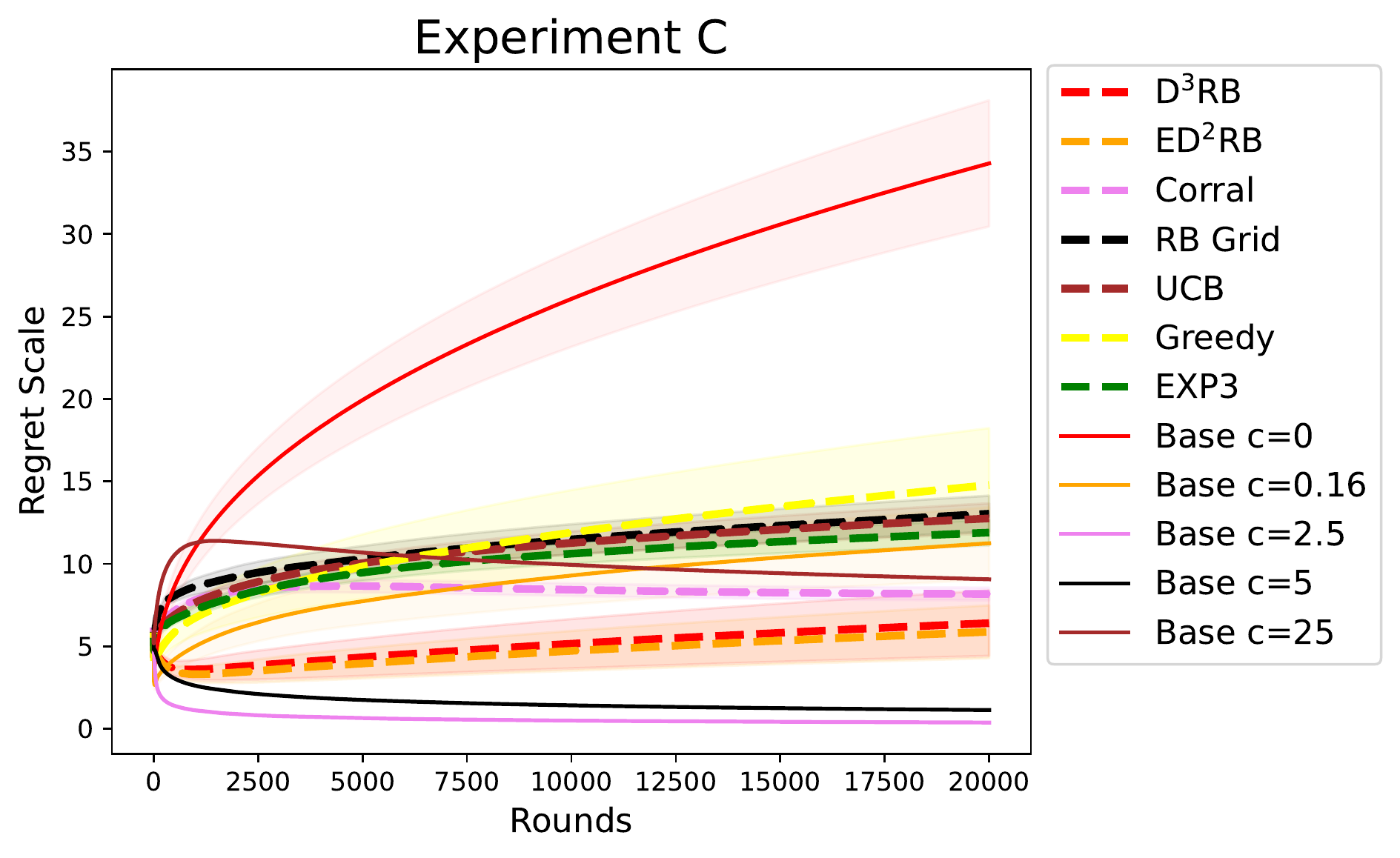}
        \end{minipage} \\
        \hline
    \end{tabular}
    \label{tab:figures3}
\end{table}

\begin{table}[htbp]
    \centering
    \begin{tabular}{|p{5cm}|p{10cm}|}
        \hline
        \textbf{Description} & \textbf{Figure} \\
        \hline
        \begin{minipage}[t]{5cm}
        \vspace{-5.7cm}
\paragraph{Experiment D.} Linear bandit model selection where we select among LinTS base learners with different confidence scalings. The action set equals the $d = 10$ dimensional hypercube (i.e., the arm set equals $\{-1/\sqrt{d},1/\sqrt{d}\}^d$ ). The $\theta_\star$ vector equals $(0,\ldots, 9)/\|(0,\ldots, 9)\| * 5$. The base learners are LinTS instances with confidence scalings in $\{ 0,.16, 2.5, 5, 25\}$. 
        \end{minipage} &
        \begin{minipage}[t]{10cm}
            \includegraphics[width=\linewidth]{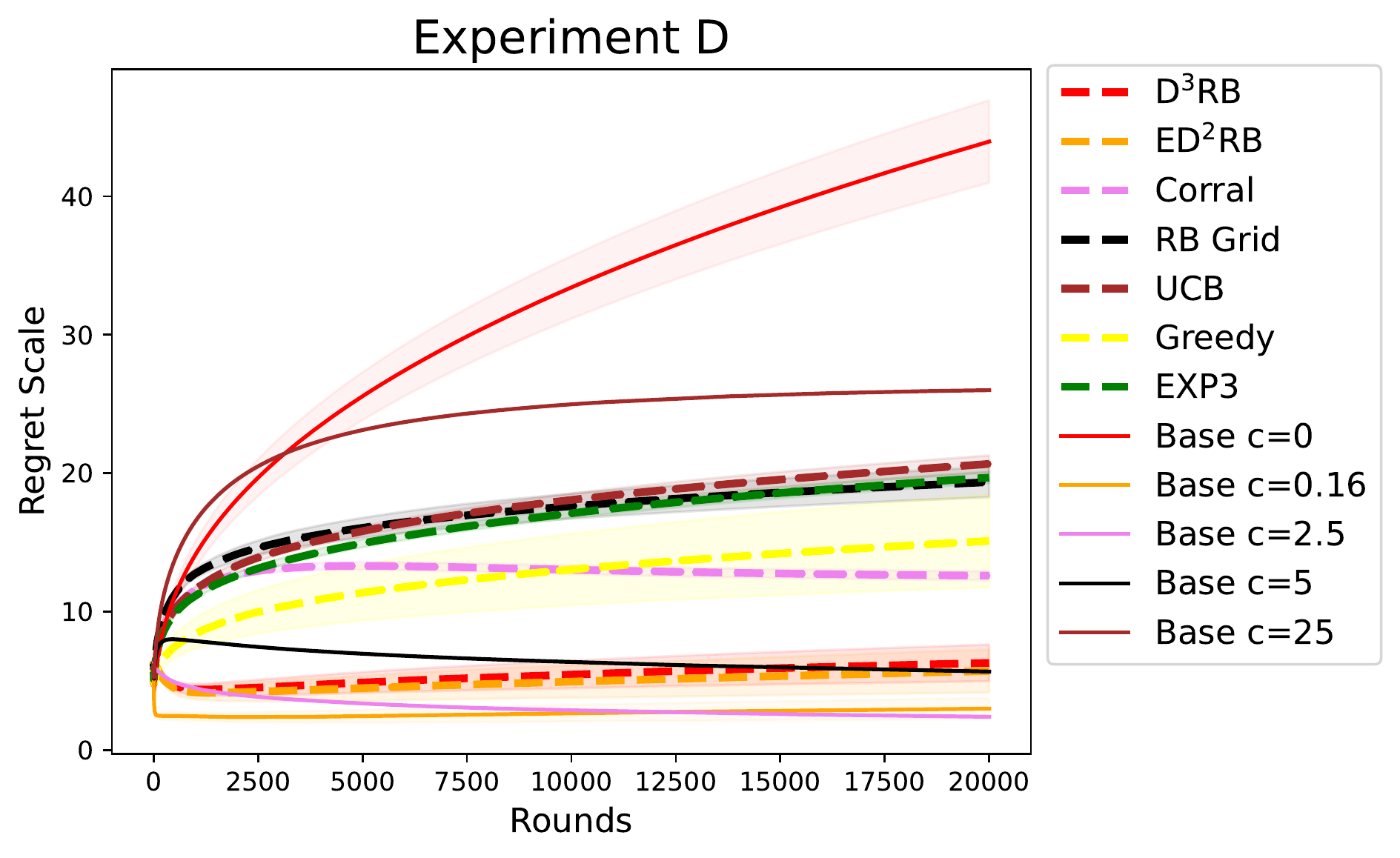}
        \end{minipage} \\
        \hline
        \begin{minipage}[t]{5cm}
        \vspace{-5.7cm}
\paragraph{Experiment E.} Linear bandit model selection where we select among LinTS base learners with different confidence scalings. The action set equals the $d = 100$ dimensional hypercube (i.e., the arm set equals $\{-1/\sqrt{d},1/\sqrt{d}\}^d$ ). The $\theta_\star$ vector equals $(0,\ldots, 99)/\|(0,\ldots, 99)\| * 5$. The base learners are LinTS instances with confidence scalings in $\{ 0,.16, 2.5, 5, 25\}$. 
        \end{minipage} &
        \begin{minipage}[t]{10cm}
            \includegraphics[width=\linewidth]{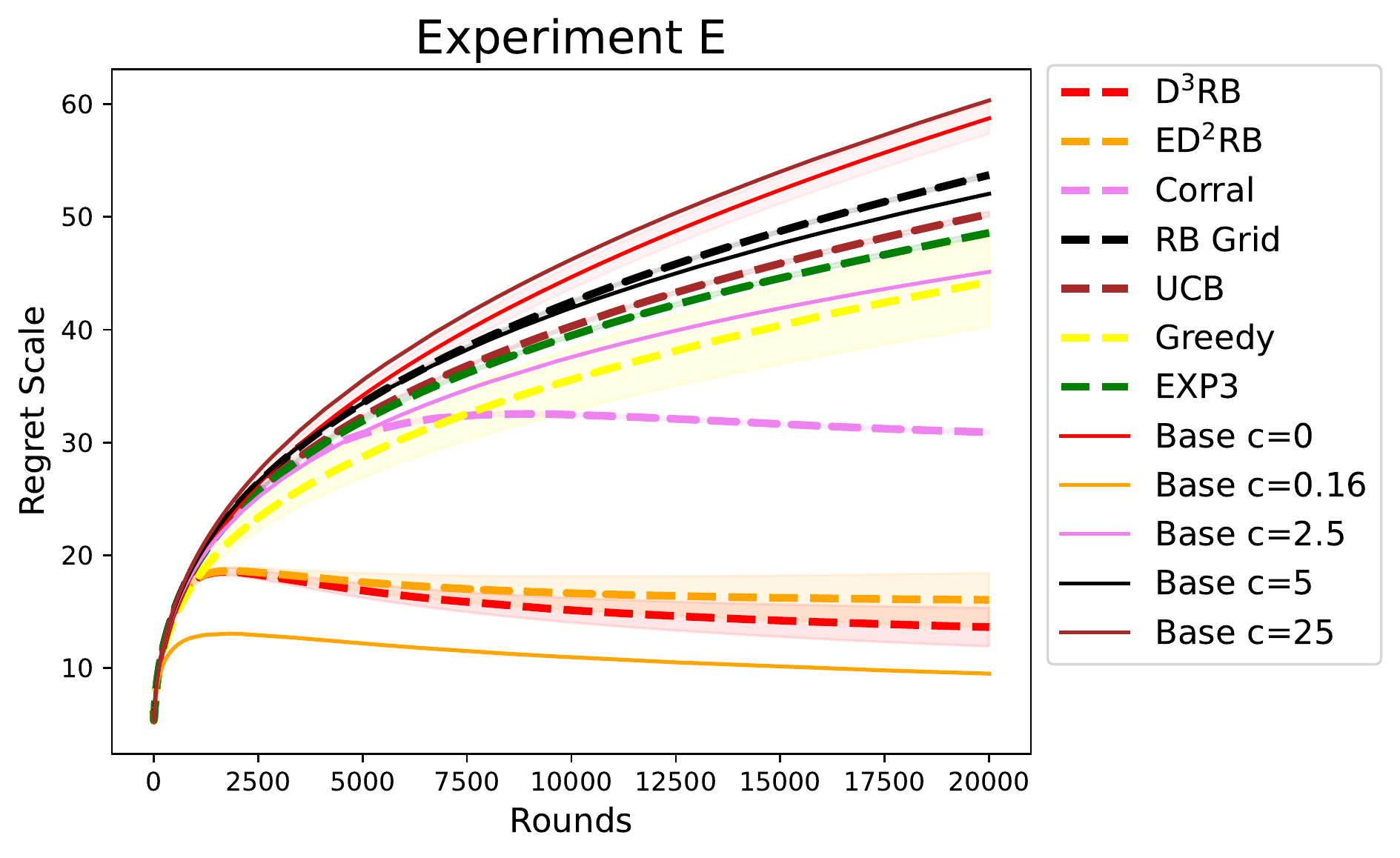}
        \end{minipage} \\
        \hline
        \begin{minipage}[t]{5cm}
        \vspace{-5.7cm}
\paragraph{Experiment F.}  Linear bandit model selection where we select among LinTS base learners with different confidence scalings. The action set equals the $d = 5$ dimensional unit sphere. The $\theta_\star$ vector equals $(0,\ldots, 4)/\|(0,\ldots, 4)\| * 5$. The base learners are LinTS instances with confidence scalings in $\{ 0,.16, 2.5, 5, 25\}$. 
        \end{minipage} &
        \begin{minipage}[t]{10cm}
            \includegraphics[width=\linewidth]{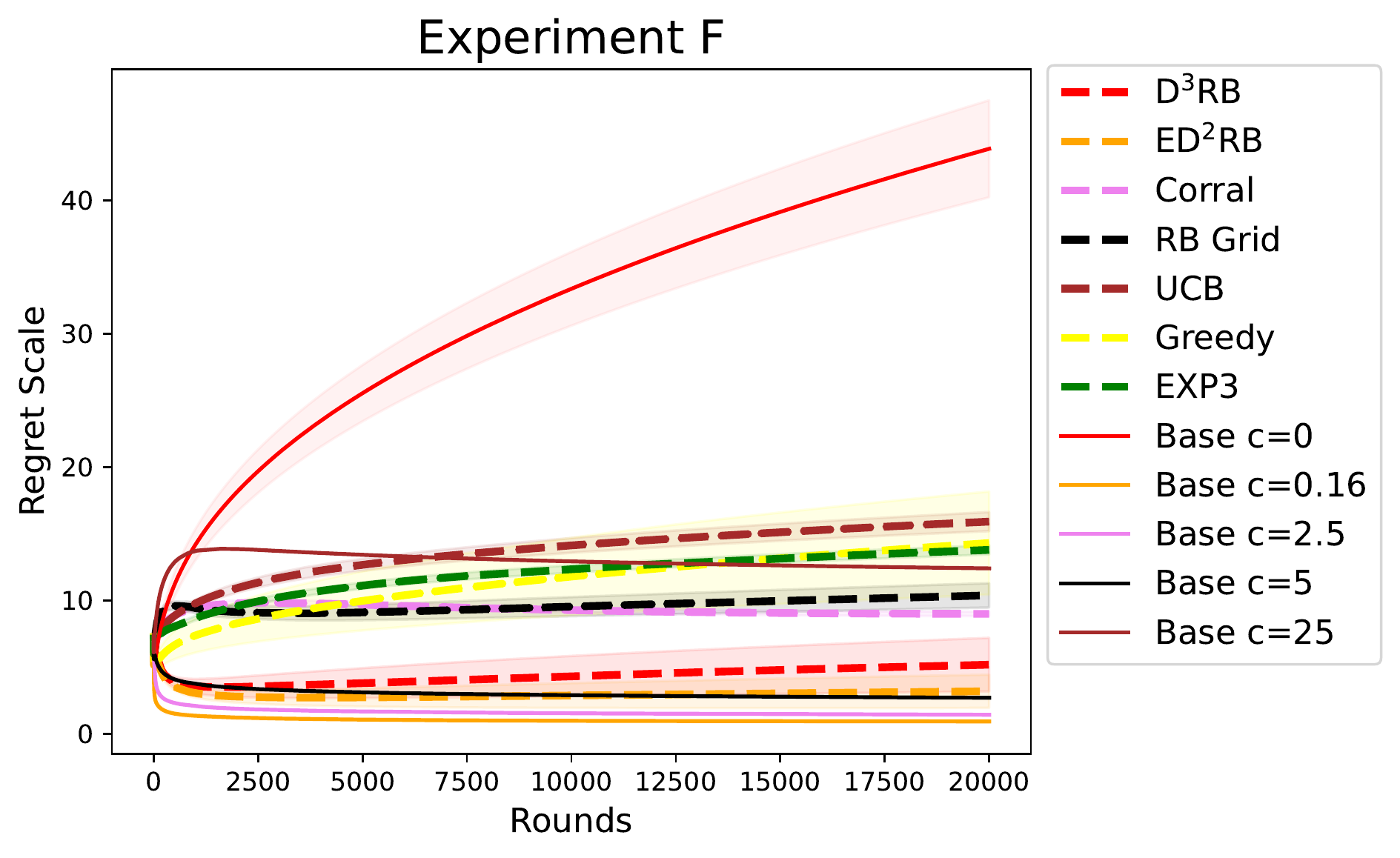}
        \end{minipage} \\
        \hline
    \end{tabular}
    \label{tab:figures4}
\end{table}

\begin{table}[htbp]
    \centering
    \begin{tabular}{|p{5cm}|p{10cm}|}
        \hline
        \textbf{Description} & \textbf{Figure} \\
        \hline
        \begin{minipage}[t]{5cm}
        \vspace{-5.7cm}
\paragraph{Experiment G.}  Linear bandit model selection where we select among LinTS base learners with different confidence scalings. The action set equals the $d = 100$ dimensional unit sphere. The $\theta_\star$ vector equals $(0,\ldots, 99)/\|(0,\ldots, 99)\| * 5$. The base learners are LinTS instances with confidence scalings in $\{ 0,.16, 2.5, 5, 25\}$. 
        \end{minipage} &
        \begin{minipage}[t]{10cm}
            \includegraphics[width=\linewidth]{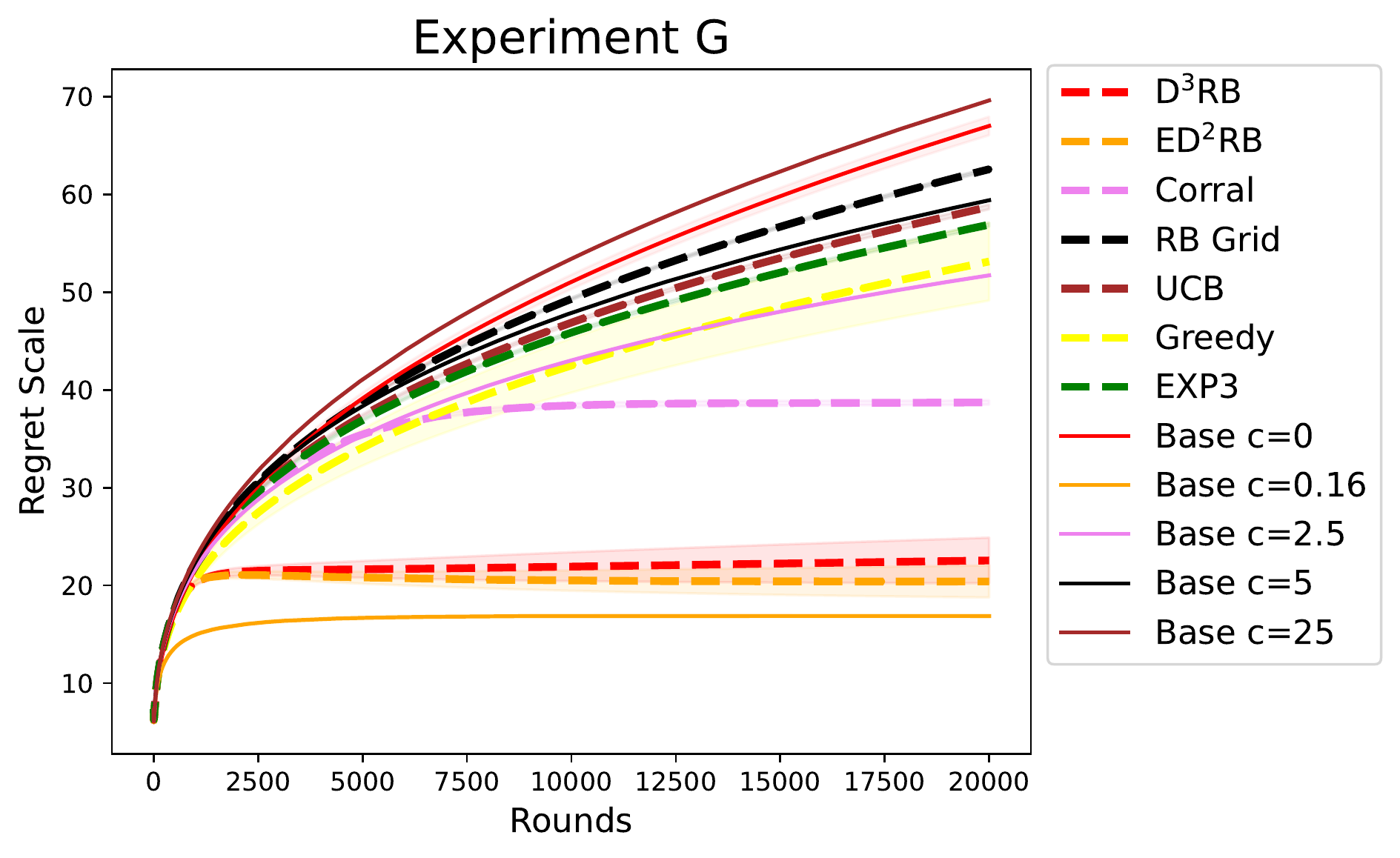}
        \end{minipage} \\
        \hline
        \begin{minipage}[t]{5cm}
        \vspace{-5.7cm}
\paragraph{Experiment H.} Contextual linear bandit model selection where we select among LinTS base learners with different confidence scalings. The contexts are generated by producing $10$ i.i.d. uniformly distributed vectors from the unit sphere. The ambient space dimension equals $d = 5$.  The $\theta_\star$ vector equals $(0,\ldots, 4)/\| (0,\ldots, 4)\|*5$. The base learners are LinTS instances with confidence scalings in $\{ 0,.16, 2.5, 5, 25\}$.
        \end{minipage} &
        \begin{minipage}[t]{10cm}
            \includegraphics[width=\linewidth]{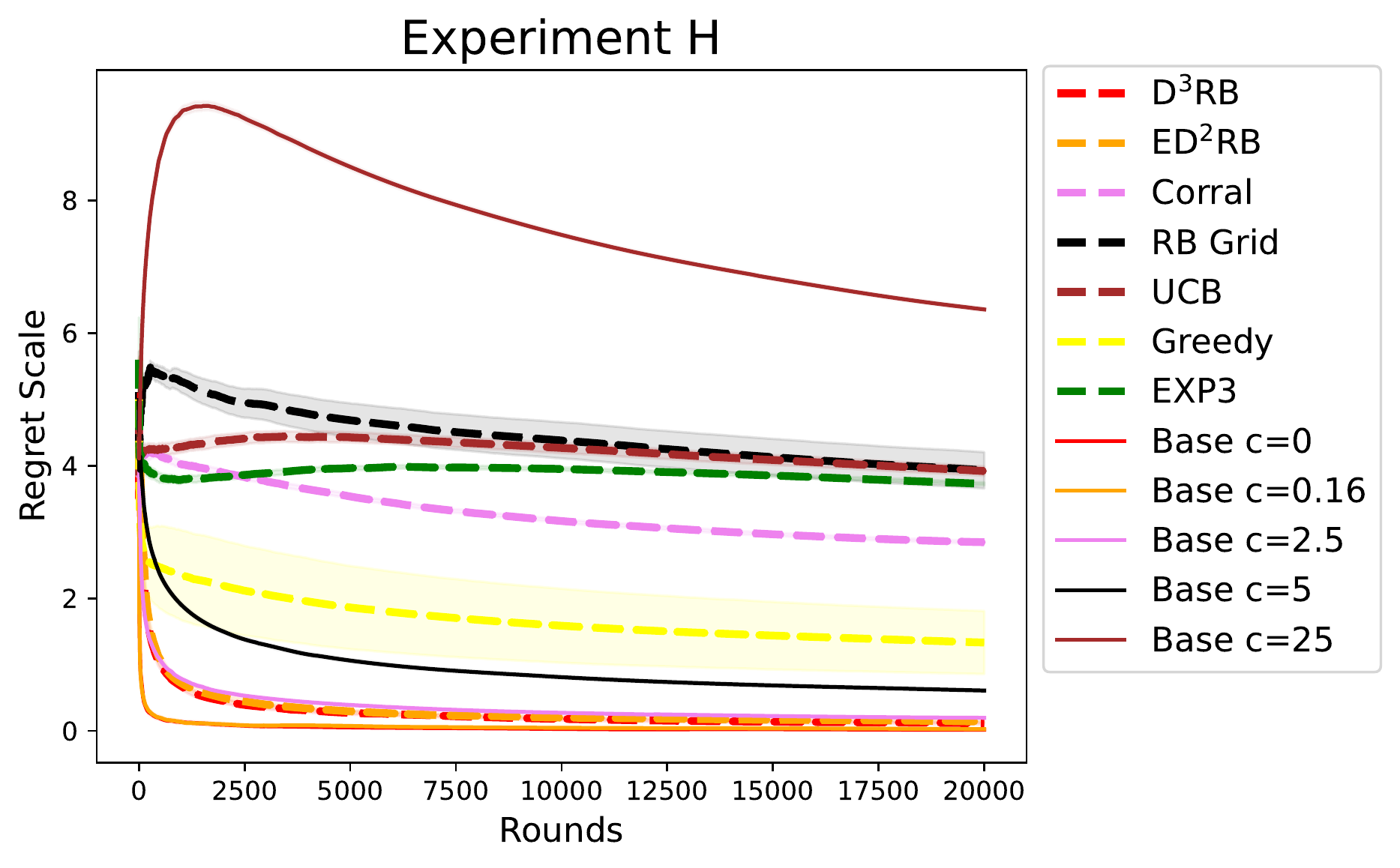}
        \end{minipage} \\
        \hline
        \begin{minipage}[t]{5cm}
        \vspace{-5.7cm}
\paragraph{Experiment I.} Contextual linear bandit model selection where we select among LinTS base learners with different confidence scalings. The contexts are generated by producing $10$ i.i.d. uniformly distributed vectors from the unit sphere. The ambient space dimension equals $d = 100$.  The $\theta_\star$ vector is $(0,\ldots, 99)/\| (0,\ldots, 99)\|*5$. The base learners are LinTS instances with confidence scalings in $\{ 0,.16, 2.5, 5, 25\}$.
        \end{minipage} &
        \begin{minipage}[t]{10cm}
            \includegraphics[width=\linewidth]{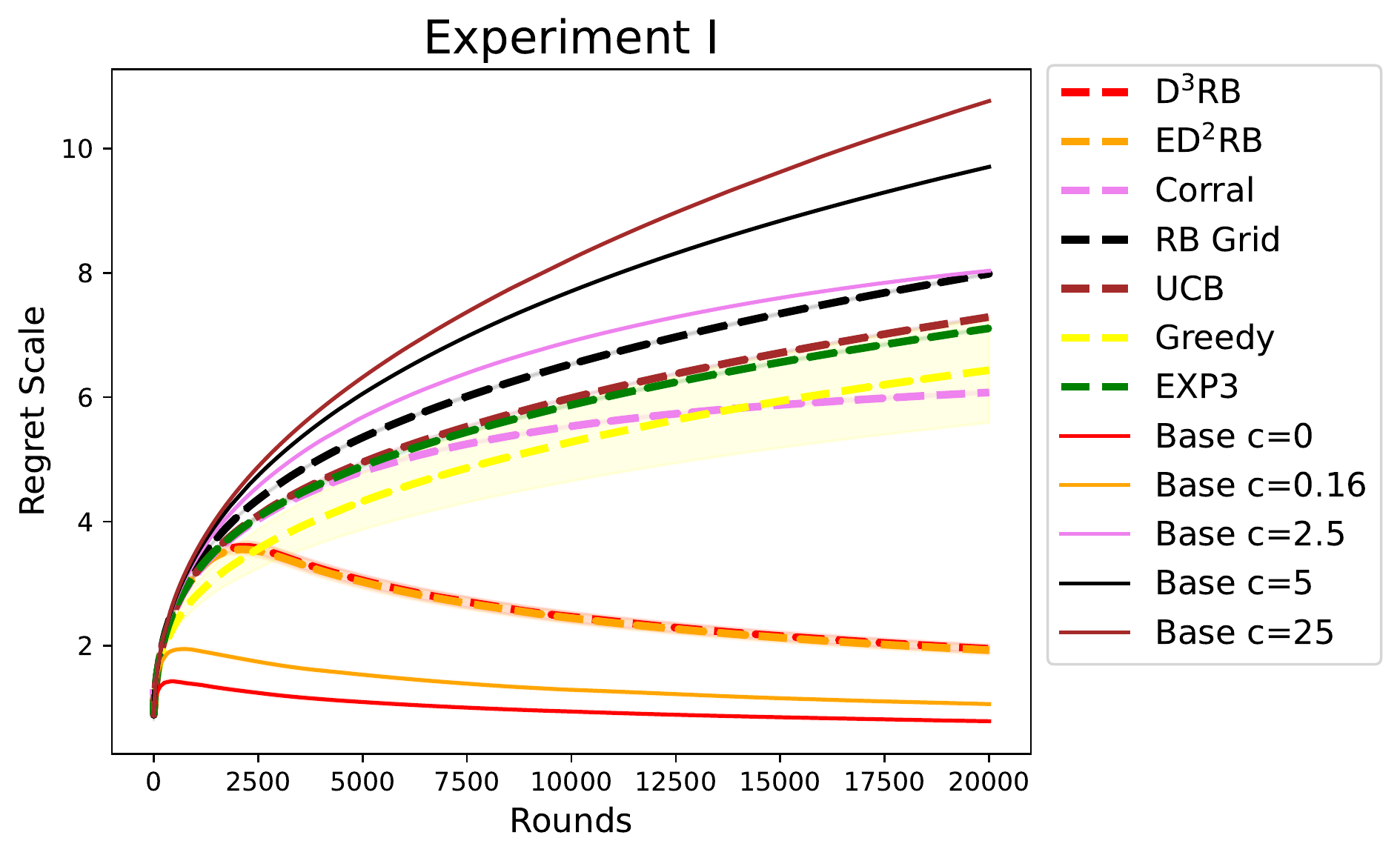}
        \end{minipage} \\
        \hline
    \end{tabular}
    \label{tab:figures5}
\end{table}

\begin{table}[htbp]
    \centering
    \begin{tabular}{|p{5cm}|p{10cm}|}
        \hline
        \textbf{Description} & \textbf{Figure} \\
        \hline
        \begin{minipage}[t]{5cm}
        \vspace{-5.7cm}
\paragraph{Experiment J.} Nested linear bandit model selection where we select among different LinTS base learners with different ambient dimensions. The action set is the unit sphere and the true ambient dimension is $30$. The $\theta_\star$ vector is $(0,1,\ldots,29)/\|(0,1,\ldots,29)\|*5$ and the base learners are LinTS instances with dimensions $d = 10, 30, 50, 100$,  and confidence scaling $2$. 
        \end{minipage} &
        \begin{minipage}[t]{10cm}
            \includegraphics[width=\linewidth]{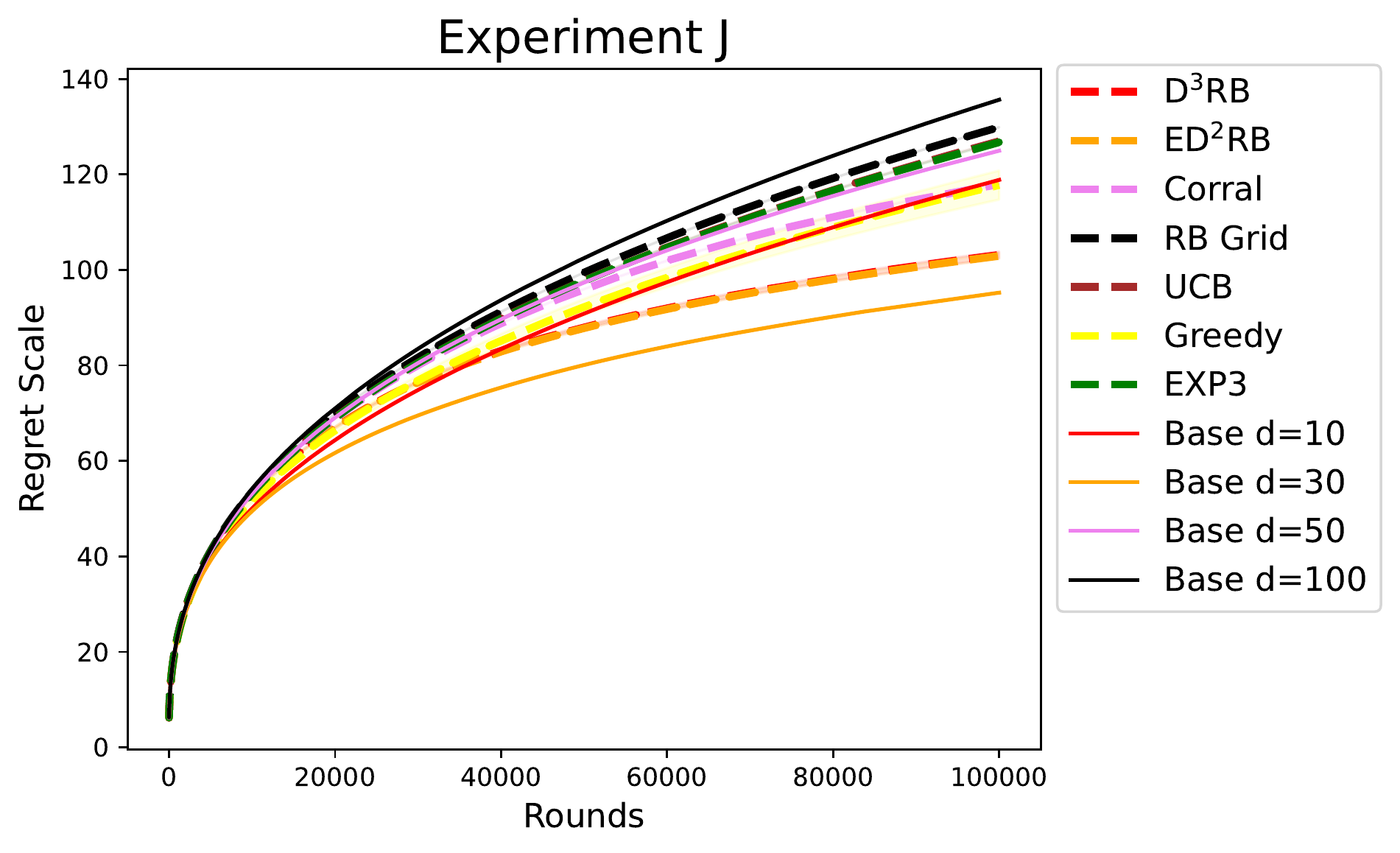}
        \end{minipage} \\
        \hline
        \begin{minipage}[t]{5cm}
        \vspace{-5.7cm}
\paragraph{Experiment K.} Nested linear bandit model selection where we select among different LinTS base learners with different ambient dimensions. The action set is the hypercube and the true ambient dimension is $5$. The $\theta_\star$ vector is $(0,1,2,3,4)$ and the base learners are LinTS instances with dimensions $d = 2, 5, 10, 15$ and confidence scaling $2$.  
        \end{minipage} &
        \begin{minipage}[t]{10cm}
            \includegraphics[width=\linewidth]{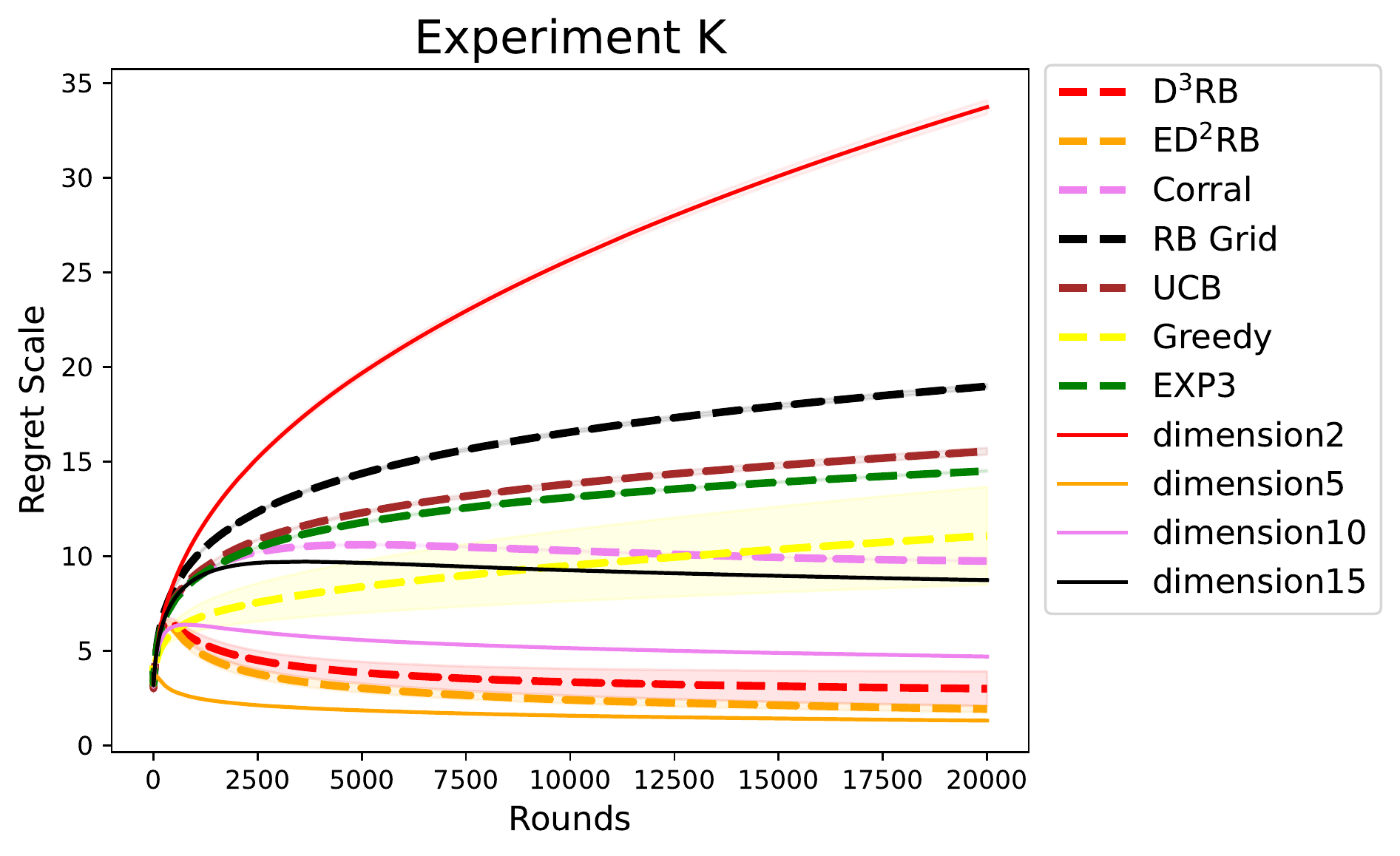}
        \end{minipage} \\
        \hline
        \begin{minipage}[t]{5cm}
        \vspace{-5.7cm}
\paragraph{Experiment L.} Nested linear bandit model selection where we select among different LinTS base learners with different ambient dimensions. The action set is the hypercube and the true ambient dimension is $5$. The $\theta_\star$ vector is $(0,1,\ldots,29)/\| (0,1,\ldots,29) \|*5$ and the base learners are LinTS instances with dimensions $d = 10,30, 50, 100$ and confidence scaling $2$.          
\end{minipage} &
        \begin{minipage}[t]{10cm}
            \includegraphics[width=\linewidth]{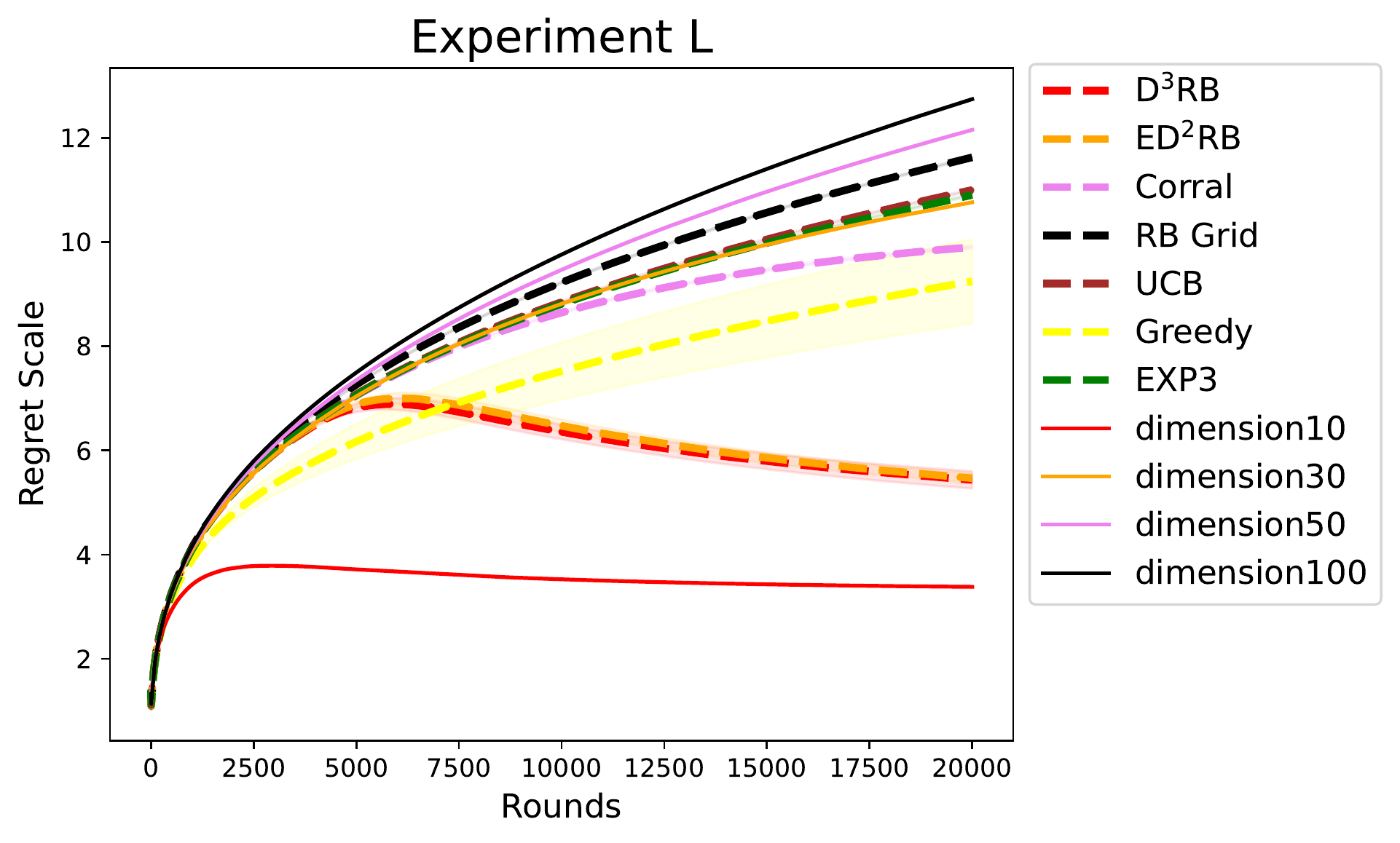}
        \end{minipage} \\
        \hline
    \end{tabular}
    \label{tab:figures6}
\end{table}

\begin{landscape}

\begin{table}[t]
\caption{Comparison of meta learners over the various experiments (Experiment 1 through 6, and Experiment A through L).
The lines corresponding to Experiments (``Name") 1--6 are also contained in the main body of the paper.
We report the cumulative regret (averaged over 100 repetitions $\pm 2 \times$standard error) at the end of the sequence of rounds. In bold is the best performer for each environment.
}
\label{tab:general_overview_appendix}
\bgroup
\def\arraystretch{1.1}
\begin{small}
\begin{tabular}{|lcccc | c| c | c | c | c | c | c |}
\hline
Name &Env. &Learners & Task & Arms & D$^3$RB &  ED$^2$RB & Corral & RB Grid &  UCB &Greedy & EXP3 \\
\hline
 \textcolor{red}{1.}& MAB &  UCB &  self &  Gaussian &  {$\bf 431 \pm 182 $}  &  {$560 \pm 240$} &  {$5498 \pm 340$}  &  {$6452 \pm 230$} & {$574 \pm 34$} &  {$6404 \pm 1102 $} & {$5892 \pm 356$} \\
 B.& MAB &  UCB &  self &  Bernoulli & {$ 6694 \pm 3738 $}  &  {$\bf 3765 \pm 2834$} &  {$8972 \pm 818$}  &  {$11093 \pm 2375$} & {$17174 \pm 1737$} &  {$11546 \pm 4698 $} & {$12606 \pm 1399$} \\
 \textcolor{red}{2.}& MAB &  UCB &  conf &  Gaussian & $1608 \pm 198$ &  $1413 \pm 208$ &  $2807 \pm 138$  &  $3452 \pm 110$ & $\bf 918 \pm 98$ &  $2505 \pm 362$ &  $3007 \pm 136$\\
 A.& MAB &  UCB &  conf &  Bernoulli & $ 811 \pm 27$ &  $813 \pm 28$ &  $2439 \pm 102$  &  $3152 \pm 91$ & $\bf 547 \pm 20$ &  $1576 \pm 541$ &  $2715 \pm 118$\\
 \textcolor{red}{3.}& LB &  LinTS &  conf  &  Sphere & $1150 \pm 134 $ &  $\bf 1135 \pm 148$ &  $2605 \pm 38$  &  $3169 \pm 66$ & $3052 \pm 36$ &  $2553 \pm 302$ &  $2491 \pm 36$\\
 F.& LB &  LinTS &  conf  &  Sphere & $7251 \pm 2820 $ &  $\bf 4458 \pm 1748$ &  $12594 \pm 237$  &  $14536 \pm 1278$ & $22286 \pm 1000$ &  $20034 \pm 5416$ &  $19316 \pm 488$\\
 G.& LB &  LinTS &  conf  &  Sphere & $31585 \pm 3269 $ &  $\bf 28579 \pm 2301$ &  $54230 \pm 318$  &  $87652 \pm 224$ & $82259 \pm 344$ &  $74433 \pm 5596$ &  $79706 \pm 277$\\
 C.& LB &  LinTS &  conf  &  Hypercube & $8971 \pm 2783 $ &  $\bf 8237 \pm 2246$ &  $11453 \pm 538$  &  $18264 \pm 1528$ & $17868 \pm 1277$ &  $20701 \pm 4835$ &  $16663 \pm 1091$\\
 D.& LB &  LinTS &  conf  &  Hypercube & $8831 \pm 1855 $ &  $\bf 8006 \pm 2174$ &  $17655 \pm 451$  &  $27119 \pm 1526$ & $28957 \pm 853$ &  $21167 \pm 4686$ &  $27576 \pm 582$\\
 E.& LB &  LinTS &  conf  & Hypercube &  $\bf 19088 \pm 2414 $ &  $ 22474 \pm 3343$ &  $43281 \pm 191$  &  $75233 \pm 255$ & $70389 \pm 318$ &  $61976 \pm 5567$ &  $68045 \pm 306$\\
 \textcolor{red}{4.}& CLB &  LinTS &  conf &  Context. & $411 \pm 100$ &  $\bf 406 \pm 94$ &  $1632 \pm 30$  &  $1073 \pm 184$ & $1644 \pm 160$&  $991 \pm 298$&  $1086 \pm 70$ \\
 H.& CLB &  LinTS &  conf & Context. &  $\bf 165 \pm 27$ &  $199 \pm 24$ &  $3986 \pm 31$  &  $5506 \pm 384$ & $5490 \pm 100$&  $1867 \pm 666$&  $5220 \pm 31$ \\
 I.& CLB &  LinTS &  conf & Context. & $2735 \pm 95$ &  $\bf 2705 \pm 101$ &  $8510 \pm 35$  &  $11184 \pm 26$ & $10210 \pm 25$&  $9012 \pm 1184$&  $9954 \pm 30$ \\
\textcolor{red}{5.}& LB &  LinTS &  dim & Sphere & $1733 \pm 230$ & $\bf 1556 \pm 198$ &  $3166 \pm 26$  &  $4223 \pm 40$ & $3932 \pm 16$ &  $3385 \pm 306$&  $3315 \pm 20$\\
J.& LB &  LinTS &  dim & Sphere & $\bf 93871 \pm 107$ & $ 93914 \pm 119$ &  $96194 \pm 39$  &  $97828 \pm 25$ & $96703 \pm 31$ &  $93681 \pm 926$&  $96659 \pm 27$\\
K.& LB &  LinTS &  dim & Hypercube & $4188 \pm 1288$ & $\bf 2681 \pm 250$ &  $13634 \pm 57$  &  $26570 \pm 174$ & $21765 \pm 255$ &  $15505 \pm 3622$&  $20308 \pm 69$\\
 L.& LB &  LinTS &  dim & Hypercube & $\bf 7614 \pm 233$ & $7660 \pm 126$ &  $13874 \pm 38$  &  $16287 \pm 19$ & $15410 \pm 20$ &  $12953 \pm 1132$&  $15275 \pm 22$\\
 \textcolor{red}{6.}& CLB &  LinTS &  dim & Context. & $\bf2347 \pm 102$ & $ 2365 \pm 96$ &  $5294 \pm 44$  &  $6258 \pm 38$ & $5718 \pm 50$ &  $4778 \pm 506$&  $5742 \pm 46$\\
 \hline
\end{tabular}
\end{small}
\egroup
\vspace{1mm}
\end{table}
\end{landscape}

\begin{table}[t]
\caption{Comparison of the \textbf{CorralLow}, \textbf{Corral} and \textbf{CorralHigh} meta learners over the various experiments (Experiment 1 through 6, and Experiment A through L). All of the experiments of this table were run for 20000 time-steps. This is in contrast with the results presented for Experiments 1-6 in \pref{tab:general_overview} and in \pref{tab:general_overview_appendix}. We report the cumulative regret (averaged over 100 repetitions $\pm 2 \times$standard error) at the end of the sequence of rounds. In bold is the best performer for each environment.
}
\label{tab:corral_overview_appendix}
\centering
\bgroup
\def\arraystretch{1.1}
\begin{tabular}{|lcccc | c| c | c | }
\hline
Name &Env. &Learners & Task & Arms & CorralLow &   Corral & CorralHigh \\
\hline
 \textcolor{red}{1.}& MAB &  UCB &  self &  Gaussian &  {$ 6609 \pm 433 $}  &  {$5498 \pm 340$} &  {$\bf 2598 \pm 256$}  \\
 B.& MAB &  UCB &  self &  Bernoulli & {$ 15324 \pm 1502 $}  &  {$\bf 8972 \pm 818$} &  {$10107 \pm 1516$}  \\
 \textcolor{red}{2.}& MAB &  UCB &  conf &  Gaussian & $5069 \pm 275$ &  $4670 \pm 251$ &  $\bf 3093 \pm 175$  \\
 A.& MAB &  UCB &  conf &  Bernoulli & $ 2742 \pm 113$ &  $2439 \pm 102$ &  $\bf 319 \pm 9$  \\
 \textcolor{red}{3.}& LB &  LinTS &  conf  &  Sphere & $32683 \pm 559 $ &  $\bf 19249 \pm 212$ &  $24287 \pm 850$ \\
 F.& LB &  LinTS &  conf  &  Sphere & $20073 \pm 752 $ &  $\bf 12594 \pm 237$ &  $15790 \pm 629$  \\
 G.& LB &  LinTS &  conf  &  Sphere & $80230 \pm 272 $ &  $54230 \pm 318$ &  $\bf 51499 \pm 1535$ \\
 C.& LB &  LinTS &  conf  &  Hypercube & $18783 \pm 911 $ &  $11453 \pm 538$ &  $\bf 4369 \pm 198$  \\
 D.& LB &  LinTS &  conf  &  Hypercube & $27575 \pm 730 $ &  $\bf 17655 \pm 451$ &  $20190 \pm 879$  \\
 E.& LB &  LinTS &  conf  & Hypercube &  $68764 \pm 349 $ &  $ 43281 \pm 191$ &  $\bf 41918 \pm 1236$  \\
 \textcolor{red}{4.}& CLB &  LinTS &  conf &  Context. & $11880 \pm 97$ &  $\bf 7606 \pm 93$ &  $10646 \pm 444$   \\
 H.& CLB &  LinTS &  conf & Context. &  $5269 \pm 31$ &  $\bf 3986 \pm 31$ &  $4624 \pm 160$   \\
 I.& CLB &  LinTS &  conf & Context. & $10045 \pm 26$ &  $\bf 8510 \pm 35$ &  $6823 \pm 171$  \\
\textcolor{red}{5.}& LB &  LinTS &  dim & Sphere & $44957 \pm 73$ & $\bf 22614 \pm 69$ &  $36384 \pm 926$ \\
J.& LB &  LinTS &  dim & Sphere & $ 96655 \pm 34$ & $ 96194 \pm 39$ &  $\bf 95394 \pm 153$  \\
K.& LB &  LinTS &  dim & Hypercube & $20533 \pm 86$ & $\bf 13634 \pm 57$ &  $14237 \pm 335$  \\
 L.& LB &  LinTS &  dim & Hypercube & $15312 \pm 23$ & $13874 \pm 38$ &  $\bf 12295 \pm 259$  \\
 \textcolor{red}{6.}& CLB &  LinTS &  dim & Context. & $\bf1295 \pm 50$ & $ 5294 \pm 44$ &  $4711 \pm 101$ \\
 \hline
\end{tabular}
\egroup
\vspace{1mm}
\end{table}

\begin{table}[t]
\caption{Comparison of the \textbf{EXP3Low}, \textbf{EXP3} and \textbf{EXP3High} meta learners over the various experiments (Experiment 1 through 6, and Experiment A through L). All of the experiments of this table were run for 20000 time-steps. This is in contrast with the results presented for Experiments 1-6 in \pref{tab:general_overview} and in \pref{tab:general_overview_appendix}. We report the cumulative regret (averaged over 100 repetitions $\pm 2 \times$standard error) at the end of the sequence of rounds. In bold is the best performer for each environment. All these algorithms have similar performance.
}
\label{tab:exp3_overview_appendix}
\centering
\bgroup
\def\arraystretch{1.1}
\begin{tabular}{|lcccc | c| c | c | }
\hline
Name &Env. &Learners & Task & Arms & EXP3Low &   EXP3 & EXP3High \\
\hline
 \textcolor{red}{1.}& MAB &  UCB &  self &  Gaussian &  {$ \bf 5733 \pm 376 $}  &  {$5892 \pm 356$} &  {$ 5743 \pm 354$}  \\
 B.& MAB &  UCB &  self &  Bernoulli & {$ 13764 \pm 1498 $}  &  {$\bf 12606 \pm 1399$} &  {$13308 \pm 1337$}  \\
 \textcolor{red}{2.}& MAB &  UCB &  conf &  Gaussian & $5224 \pm 299$ &  $5332 \pm 269$ &  $\bf 5136 \pm 279$  \\
 A.& MAB &  UCB &  conf &  Bernoulli & $ \bf 2607 \pm 109$ &  $2715 \pm 118$ &  $2613 \pm 113$  \\
 \textcolor{red}{3.}& LB &  LinTS &  conf  &  Sphere & $31546 \pm 356$ &  $\bf 31252 \pm 463$ &  $31256 \pm 441$ \\
 F.& LB &  LinTS &  conf  &  Sphere & $19453 \pm 535 $ &  $\bf 19316 \pm 488$ &  $19844 \pm 529$  \\
 G.& LB &  LinTS &  conf  &  Sphere & $79679 \pm 276 $ &  $79706 \pm 277$ &  $\bf 79311 \pm 290$ \\
 C.& LB &  LinTS &  conf  &  Hypercube & $17264 \pm 938 $ &  $\bf 16663 \pm 1091$ &  $ 17842 \pm 956$  \\
 D.& LB &  LinTS &  conf  &  Hypercube & $\bf 26789 \pm 570 $ &  $27576 \pm 582$ &  $27355 \pm 599$  \\
 E.& LB &  LinTS &  conf  & Hypercube &  $\bf 67479 \pm 325 $ &  $ 68045 \pm 306$ &  $ 67756 \pm 270$  \\
 \textcolor{red}{4.}& CLB &  LinTS &  conf &  Context. & $\bf 11877 \pm 207$ &  $ 11885 \pm 91$ &  $11921 \pm 71$   \\
 H.& CLB &  LinTS &  conf & Context. &  $5293 \pm 34$ &  $\bf 5220 \pm 31$ &  $5254 \pm 35$   \\
 I.& CLB &  LinTS &  conf & Context. & $\bf 9934 \pm 31$ &  $9954 \pm 30$ &  $9959 \pm 27$  \\
\textcolor{red}{5.}& LB &  LinTS &  dim & Sphere & $46136 \pm 55$ & $ 46129 \pm 52$ &  $\bf 46127 \pm 54$ \\
J.& LB &  LinTS &  dim & Sphere & $ 96663 \pm 26$ & $ 96659 \pm 27$ &  $\bf 96643 \pm 27$  \\
K.& LB &  LinTS &  dim & Hypercube & $\bf 20294 \pm 60$ & $ 20308 \pm 69$ &  $20315 \pm 86$  \\
 L.& LB &  LinTS &  dim & Hypercube & $15280 \pm 23$ & $15275 \pm 22$ &  $\bf 15256 \pm 22$  \\
 \textcolor{red}{6.}& CLB &  LinTS &  dim & Context. & $5761 \pm 59$ & $ 5742 \pm 46$ &  $\bf 5693 \pm 49$ \\
 \hline
\end{tabular}
\egroup
\vspace{1mm}
\end{table}

\end{document}